\newtheorem{thm}{Theorem}
\newtheorem{prop}{Proposition}
\newtheorem{cor}[thm]{Corollary}
\newtheorem{ass}{Assumption}
\def \A {\mathcal{A}}
\def \X {\mathcal{X}}
\def \Xb {\mathbf{X}}
\def \R {\mathbb{R}}
\def \w {\mathbf{w}}
\def \v {\mathbf{v}}
\def \x {\mathbf{x}}
\def \E {\mathrm{E}}
\def \x {\mathbf{x}}
\def \b {\mathbf{b}}
\def \z {\mathbf{z}}
\def \y {\mathbf{y}}
\def \u {\mathbf{u}}
\def \H {\mathcal{H}}
\def \F {\mathcal{F}}
\def \xh {\widehat{\x}}
\def \wh {\widehat{\w}}
\def \fh {\widehat f}
\def \D {\mathbb{D}}
\def \Z {\mathcal{Z}}
\begin{document}

\title[Non-Convex Truncated Losses]{Learning with Non-Convex Truncated Losses by SGD}
 \author{\Name{Yi Xu}$^{1}$\Email{yi-xu@uiowa.edu}\\ 
 \Name{Shenghuo Zhu}$^{2}$\Email{shenghuo.zhu@alibaba-inc.com}\\
\Name{Sen Yang}$^{2}$\Email{senyang.sy@alibaba-inc.com}\\
\Name{Chi Zhang}$^{2}$\Email{yutou.zc@alibaba-inc.com}\\
\Name{Rong Jin}$^{2}$\Email{jinrong.jr@alibaba-inc.com}\\
\Name{Tianbao Yang}$^{1}$\Email{tianbao-yang@uiowa.edu}\\ 
   \addr$^1$Department of Computer Science, The University of Iowa, Iowa City, IA 52242, USA \\
   \addr$^2$Machine Intelligence Technology, Alibaba Group, Bellevue, WA 98004, USA
}
\maketitle
\vspace*{-0.5in}
\begin{center}{First version: May 20, 2018}\end{center}
\begin{abstract}
Learning with a {\it convex  loss} function has been a dominating paradigm for many years. It remains an interesting question how non-convex loss functions help improve the generalization of learning  with broad applicability. 
In this paper,  we study a family of objective functions  formed by truncating traditional   loss functions, which  is applicable to both shallow learning and deep learning.
 Truncating loss functions has  potential to be less vulnerable and more robust to large noise in observations that could be adversarial. More importantly, it is a generic technique without assuming the knowledge of  noise distribution.   To justify non-convex learning with truncated losses,  we establish excess risk bounds of empirical risk minimization based on truncated losses for heavy-tailed output, and statistical error of an approximate stationary point found by  stochastic gradient descent (SGD) method.  
Our experiments for shallow and deep learning for regression with outliers, corrupted data and heavy-tailed noise further justify the proposed method.
\end{abstract}

\section{Introduction}
A fundamental problem in machine learning can be described as follows. Let $Z=(X, Y)\sim\mathbb D$ denote a random data following an unknown  distribution of $\D$, where $X\in\X\subseteq \R^d$ denotes a random input and $Y\in\mathcal Y\subseteq\R$ denotes its corresponding output. Let $\mathcal H=\{h: \mathcal X\rightarrow\mathcal Y\}$ denote a hypothesis class and $\ell(\cdot, Y)$ denote a loss function. Given a set of training data $\{(\x_i, y_i), i=1,\ldots, n\}$, the problem is to find a hypothesis $h_n\in\mathcal H$ close to a hypothesis that minimizes the expected risk $P(h):=\E_{Z}[\ell(h(X), Y)]$. A state-of-the-art approach is empirical risk minimization (ERM):
\begin{align}\label{prob2}
h_n=\arg\min_{h \in \mathcal H}P_n(h) := \frac{1}{n}\sum_{i=1}^{n}\ell(h(\x_i), y_i)).
\end{align}
For large-scale problems with large $n$, the above problem could be efficiently solved by stochastic algorithms, e.g., stochastic gradient descent (SGD) method~\citep{bottou-2010-large}. A central question in learning theory is to characterize how close is the learned hypothesis $h_n$ to the optimal hypothesis $h_*\in\H$ that minimizes $P(h)$. In machine learning community,  one is usually concerned with the {\it excess risk} $P(h_n)-P(h_*)$. In statistics community, one usually assumes a statistical model between $Y$ and $X$, e.g., $Y=h_*(X)+\varepsilon$, where $h_*\in\mathcal H$, and $\varepsilon$ is a zero-mean random noise, and studies the {\it statistical error} $\|h - h_*\|$ measured in some norm. There are extensive results of excess risk bounds and statistical error bounds of ERM based on a convex loss function (e.g., logistic  loss, square loss)  when data $(X, Y)$ and noise $\varepsilon$ have sub-Gaussian tails (e.g., Gaussian, bounded support~\citep{bartlett2006empirical, boucheron2005theory, massart2007concentration, van2000applications, koltchinskii2011, lecue2013learning, mehta2014stochastic, pmlr-v65-zhang17a}). However, when distribution of data or noise deviates from sub-Gaussian, minimizing the standard convex loss functions might yield poor performance~\citep{brownlees2015empirical}. 

Previous works for handling this issue  either suffer from requiring strong  knowledge of deviation or has high computational costs (see Related Work). In practice, it is rarely the case that the knowledge of data abnormality is given a-prior. Thus, the methods assuming these knowledge are not applicable. In this paper, we consider a generic method by  minimizing truncated losses. The intuition is that if a particular data $(X_i, Y_i, \varepsilon_i)$ deviates from normal behavior, the loss $\ell(h(X_i), Y_i)$ could be very large and therefore can be truncated to mitigate its effect on misleading the learning process. In particular, we consider a family of non-convex truncated function $\phi(\ell)$ with varied truncation level, and minimize the following ERM problem with truncated loss:
\begin{align}\label{prob2}
\hat h_n=\arg\min_{h \in \mathcal H}\hat P_n(h) := \frac{1}{n}\sum_{i=1}^{n}\phi(\ell(h(\x_i), y_i))).
\end{align}
There are several noticeable merits of this method: (i) the truncation can be used with any standard convex loss functions (e.g., square loss, absolute loss); (ii) the problem is still of a finite-sum form which enables one to employ simple SGD to solve it; (iii) it does not depend on knowledge of abnormality. Although minimizing truncated losses has been considered and adopted by practitioners~\citep{DBLP:conf/iccv/BelagiannisRCN15},  several challenging questions have not been well addressed: (i) what is the excess risk round  of $\hat h_n$ under abnormality of data; (ii) how to quantitively understand the benefit of truncation;  (iv) if SGD is stuck at stationary points, what can be said about the performance of stationary points; (v) if SGD is employed to solve~(\ref{prob2}) with non-smooth loss functions, what convergence guarantee can be established.  Our analysis will revolve around these questions. In particular, our contributions are summarized below: 

\begin{itemize}[leftmargin=*]
\item We establish  excess risk bounds of $\hat h_n$. We show that the empirical minimizer  for minimizing the average of truncated losses can enjoy an excess risk bound of $O(1/\sqrt{n})$ for heavy-tailed $Y$. This result is applicable to both Lipschitz loss functions $\ell(z, y)$ (e.g., absolute loss) and  square loss function, linear models and non-linear models.  
\item  For learning a linear model by truncating square losses, we establish a statistical error bound  of an approximate stationary point found by SGD  that depends on the noise distribution.  We quantitively analyze the benefit of truncation. In particular, our analysis shows that within a certain range of truncation levels, larger truncation could yield smaller statistical error. More importantly, truncation can tolerate much higher noise for enjoying consistency than without truncation. 
\item We  consider the convergence  of SGD for minimizing truncated Lipschitz losses without smoothness assumption. We show that SGD can still converge to points that are close to $\epsilon$-stationary points with an iteration complexity of $O(1/\epsilon^4)$, which is the same as SGD for minimizing smooth functions.  
\item We conduct experiments for both linear models  and non-linear  deep models for regression with corrupted data  and heavy-tailed noise to  justify the effectiveness of  the considered method. 
\end{itemize}

\section{Related Work} \label{sec:related}
Recent advances have sparked increasing interests in non-convex learning (NCL) (i.e.,  learning with non-convex objective functions and/or constraints). Below we will focus on review of non-convex learning for tackling data abnormality, in particular corruptions in  $Y$ and $X$, heavy-tailed noise $\varepsilon$.
 
Numerous studies have considered corruptions in the output $Y$~\citep{DBLP:journals/tit/NguyenT13,DBLP:conf/nips/Bhatia0KK17,DBLP:conf/nips/BhatiaJK15,DBLP:journals/tit/NguyenT13a,DBLP:conf/nips/DalalyanC12}. A well-studied corruption model is to {\it assume that $\y=\mathbf X\w_* + \varepsilon + \b\in\R^n$}, where $\Xb=(\x_1, \ldots, \x_n)^{\top}$,  $\varepsilon =(\varepsilon_1, \ldots, \varepsilon_n)^{\top}$ are sub-Gaussian noises, and $\b=(b_1, \ldots, b_n)^{\top}$ is a sparse vector with non-zero components corresponding to corrupted outputs. Recently, \cite{DBLP:conf/nips/Bhatia0KK17,DBLP:conf/nips/BhatiaJK15} have studied minimizing a non-convex problem for recovering $\w_*$ for sub-Gaussian inputs $\x_i$. For example, the method proposed in~\citep{DBLP:conf/nips/Bhatia0KK17} based on iterative hard-thresholding  is motivated by solving a non-convex problem $\min_{\w, \|\b\|_0\leq k_*}\|\Xb^{\top}\w - \y - \b\|_2^2 = \min_{\|\b\|_0\leq k_*}\|(I- P_X)(\y - \b)\|_2^2$, where $P_X=\Xb(\Xb\Xb^{\top})^{-1}\Xb^{\top}$, where $k_*$ is a assumed sparsity level of $\b$. Consistency of the learned model was proved in~\citep{DBLP:conf/nips/Bhatia0KK17}. 

Several corruption models of input $X$ have been considered~\citep{DBLP:conf/isit/LohW12,DBLP:conf/nips/McWilliamsKLB14}. For example, \cite{DBLP:conf/isit/LohW12} considered  three different corruption models, i.e., additive noise, multiplicative noise, and missing values. They proposed to minimize a non-convex quadratic objective based on estimates of $\Xb\Xb^{\top}\in\R^{d\times d}$ and $\Xb\Xb^{\top}\w_*$ {\it using the knowledge of noise distribution}. The statistical error of the global optimum to the non-convex problem was established and it was also shown that projected gradient descent will converge in polynomial time to a small neighborhood of global minimizers.

The methods mentioned above could achieve superior performance when the corruption of data indeed follows the assumed model. However, in practice it is usually not clear how data is corrupted. A weaker assumption is to consider that the distribution of $X$ or $Y$ or $\varepsilon$ is heavy-tailed with bounded moments.  Several approaches with excess risk guarantee  have been developed based on two popular mean estimators for heavy-tailed data, namely Catoni's mean estimator~\citep{catoni2012challenging, audibert2011robust} and median-of-means estimator~\citep{opac-b1091338, alon1999space}. \cite{brownlees2015empirical} learn a hypothesis based on minimizing Catoni's mean estimator $\hat\mu(h)$,  i.e.,
\vspace*{-0.1in}\begin{align}\label{eqn:catoni}
 \min_{h\in\H}\hat\mu(h), \quad s.t.\quad \frac{\alpha}{n}\sum_{i=1}^n\phi((\ell(h(\x_i), y_i)-\hat\mu(h))/\alpha)=0,
\end{align}
where $\alpha>0$ is a parameter and $\phi(\cdot) = sign(x)\log(1+|x| + x^2/2)$. They established $O(1/\sqrt{n})$ excess risk bound. However, {\it their method is computationally expensive}. In particular, it needs to find  a scalar $\hat\mu(h)$   that satisfies the equality in~(\ref{eqn:catoni}) given a $h\in\mathcal H$, then to search for $h$ that minimizes $\hat\mu(h)$. Although SGD can be used for the root finding problem~(\ref{eqn:catoni}), the minimization of $\hat\mu(h)$ does not have a nice structure to allow for an efficient solver. Some studies have provided efficient algorithms based on different estimators for learning with heavy-tailed data~\citep{hsu2014heavy, hsu2016loss}. But their results are only applicable in restrictive settings (e.g., for smooth and strongly convex losses), which preclude learning with non-convex objectives (e.g., deep learning). 

\cite{audibert2011robust} proposed a method for learning a linear model  based on solving a non-convex min-max problem and proved  an excess risk bound of $O(1/n)$ for heavy-tailed data with a bounded fourth-order moment for noise and a bounded fourth-order moment for input. They proposed {\it a polynomial time} algorithm based heuristics to solve the non-convex min-max problem. However, it is unclear whether the approximate solution found by the heuristics-based approach satisfy the claimed excess risk bound.  There also exist a bulk of studies focusing on understanding the excess risk bound of (regularized) ERM under certain conditions for unbounded loss (e.g., small ball condition, Bernstein condition,  $v$-central condition, etc.) or in restricted settings (e.g., linear least-squares regression)~\citep{audibert2011robust,cortes2013relative,mendelson2014learning,liang2015learning,lecue2012general,Mendelson2017,lecue2016regularization1,lecue2016regularization2,grunwald2016fast,dinh2016fast}. 

Different from these aforementioned studies, this paper focus on understanding the model learned by minimizing truncated losses without prescribing strong assumptions on data corruption. We note that truncating the loss functions is not first considered in this paper. In robust statistics, M-estimators based on non-convex truncated losses have been studied (e.g., Tukey's biweight~\citep{citeulike:903734}, Cauchy loss~\citep{Black1996}). However, conventional  analysis of these estimators is usually restricted to asymptotic consistency  of global minimizers of learning linear models~\citep{doi:10.1080/00401706.2017.1305299}.  In contrast, we provide excess risk bounds for learning general non-linear models as well, which is applicable to deep learning.  The truncation function was also exploited in recent studies through different ways from ERM~\citep{brownlees2015empirical,audibert2011robust}.  However, none of these studies have addressed the computational issues carefully. In contrast, we employ SGD to solve the non-convex truncated losses and analyze the statistical error of a model learned  by SGD. Finally, we note that statistical error was also analyzed for high-dimensional robust M-estimator in~\citep{DBLP:journals/corr/Loh15}. Their analysis focus on understanding the sufficient conditions for robust linear regression such that the statistical error can be established for local stationary points. However, it is still unclear how truncation helps improve the performance of without truncation given that~\cite{audibert2011robust} has established the statistical error of linear least-squares regression without truncation. In contrast, our results are complementary, which not only establish the excess risk bounds for learning non-linear models, but also exhibit that truncation can tolerate much larger noise than without truncation (e.g., it allows noise increase as the number of samples but still maintains consistency).

\section{Non-convex Learning with Truncated Losses}\label{sec:main:soft}
\subsection{Preliminaries and Notations} \label{sec:preli}
We present some notations and preliminaries in this section. 
For simplicity of presentation, we define $\mathcal F=\{Z\rightarrow \ell(h(X), Y): h\in\H\}$ and $\min_{h\in\H}P(h)$ is equivalent to the following problem: 
\begin{align}\label{prob1}
f_*=\arg\min_{f \in \mathcal F} P(f) := \E_{Z \thicksim \D}[f(Z)].
\end{align}

Let $T$ be a (pseudo) metric space and $D$ be a distance metric. An increasing sequence of $(\A_n)$ of partitions of $T$ is said to be admissible if for all $n=0, \ldots, \#\A_n\leq 2^{2^n}$. For any $t\in T$, let $A_n(t)$ be the unique element of $\A_n$ that contains $t$. Denote by $\Delta(A, D)$ the diameter of the set $A\subset T$ under the metric $D$. Define 
\begin{align*}
\gamma_\beta(T, d) = \inf_{\A_n}\sup_{t\in T}\sum_{n\geq 0}2^{n/\beta}\Delta(A_n(t), D), 
\end{align*}
where the infimum is taken over all admissible sequences. It is notable that $\gamma_\beta(\F, D)\leq \int_0^1{\log N(\F,\epsilon, D)}^{1/\beta}d\epsilon$~\citep{talagrand2005generic}.  

We will consider several distance metrics for the class $\F$. 
For $f, g \in \F$, let $d_m(f, g)$, $d_e(f, g)$, and $d_s(f, g)$ be defined as follows: 
\begin{align*}
d_m(f, g) &= \max_{Z\in\Z} |f(Z) - g(Z)|,\quad d_e(f, g)  = (\E[f(Z) - g(Z)]^2)^{1/2},\\
d_s(f, g)& = \left[\frac{1}{n}\sum_{i=1}^n(f(Z_i) - g(Z_i))^2\right]^{1/2}
\end{align*}
Let $N(\F, \epsilon, d)$ be  $\epsilon$-covering number of the class $\F$ under the distance metric $d$, i.e., the minimal cardinality $N$ of any set $\{f_1, \ldots, f_N\}\subset\F$ such that for all $f\in\F$ there exists $f_i\in\{f_1,\ldots, f_N\}$ with $d(f, f_i)\leq \epsilon$. Let $\Delta(\F, d_e)$ be  diameter of the class $F$ under the distance metric $d_e$. 

Throughout the paper, we will focus on regression tasks and use the following statistical model between $Y$ and $X$ to demonstrate our results: 
\begin{align}\label{eqn:reg}
Y = h_*(X) + \varepsilon
\end{align}
where $\varepsilon\in\R$ is random noise independent of $X$, whose distribution is not necessarily sub-Gaussian. It is notable that the above model also capture some corruption models in $X$. For example, if $h_*(\x) = \w_*^{\top}\x$, then with  an additive corruption model $\xh = \x + \x_n$ we have $Y= \w_*^{\top}X + \w_*^{\top}X_n + \varepsilon = \w_*^{\top}X + \hat\varepsilon$. 

We consider the following definition of a truncation function. 
\begin{definition}\label{def}
A function $\phi_\alpha: \R_+ \to \R_+$ is a {\bf {truncation function}}  parameterized by $\alpha>0$ if (i) $\phi_\alpha(\cdot)$ is smooth, (ii) $\phi'_\alpha(x)=1$ if $x = 0$ and $\phi'_\alpha(x)=0$ if $x\rightarrow\infty$, (iii) $\phi'_\alpha(x)$   is a monotonically decreasing function, i.e., $\phi'_\alpha(x_1)\geq \phi'_\alpha(x_2)$ if $x_1\leq x_2$; (iv) there exists a universal constant $M>0$ such that $|\phi_\alpha(x) - x|\leq \frac{Mx^2}{\alpha}$, and for any $\alpha_1\leq \alpha_2$, we have $\phi_{\alpha_1}'(x)\leq \phi'_{\alpha_2}(x)$.
\end{definition}
According to the definition, we can see that $\phi''_\alpha(x)\leq 0$, which implies the non-convexity of $\phi_\alpha$.   The parameter $\alpha$ determines the truncation level, i.e., the larger the $\alpha$ the smaller the truncation. From (v) of the definition, we can see that when $\alpha=\infty$, we have $\phi_\alpha(x) = x$ meaning no truncation. Below, we will give some examples of truncation function.
\begin{figure*}[t]
\centering
{\includegraphics[scale=.25]{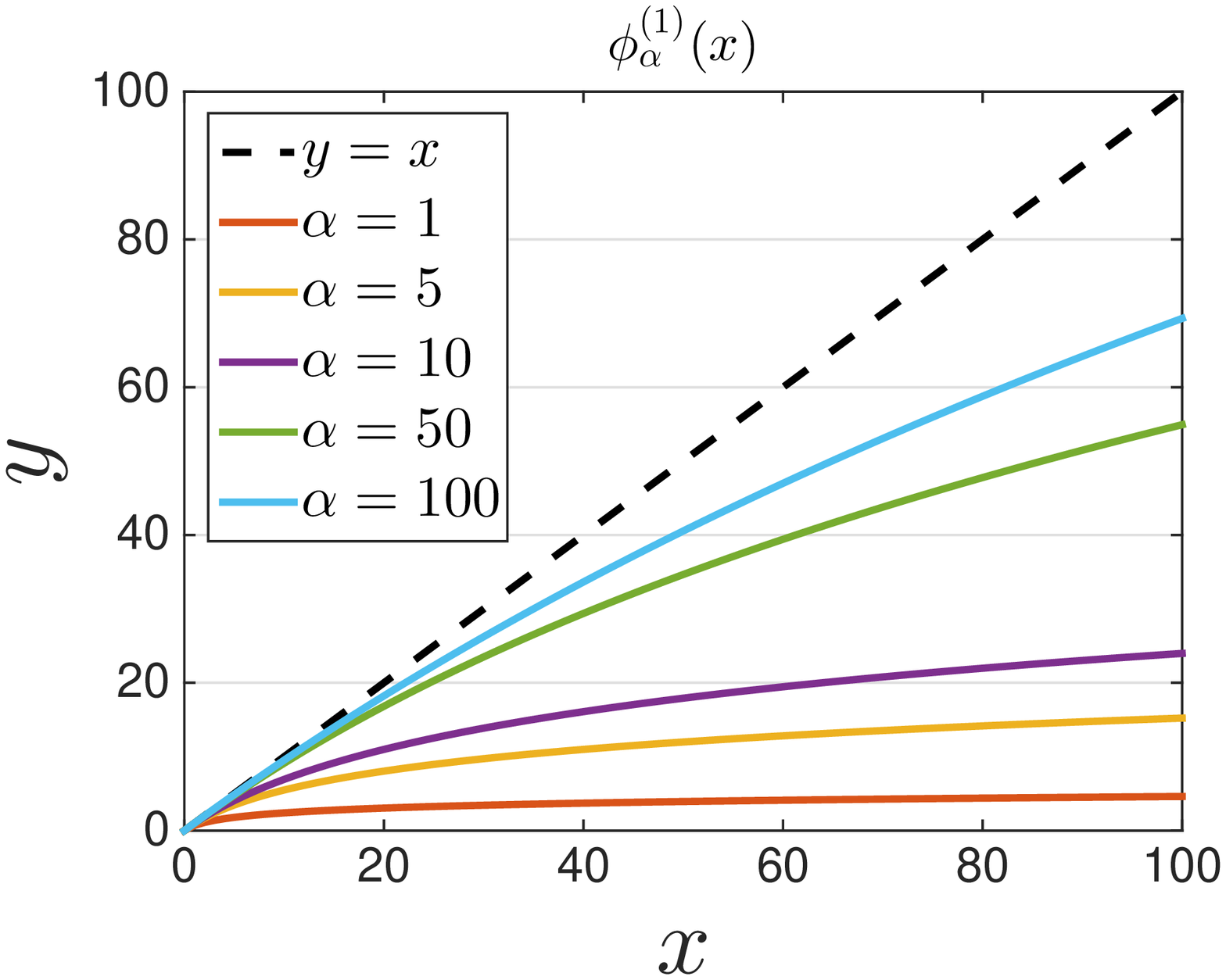}}
{\includegraphics[scale=.25]{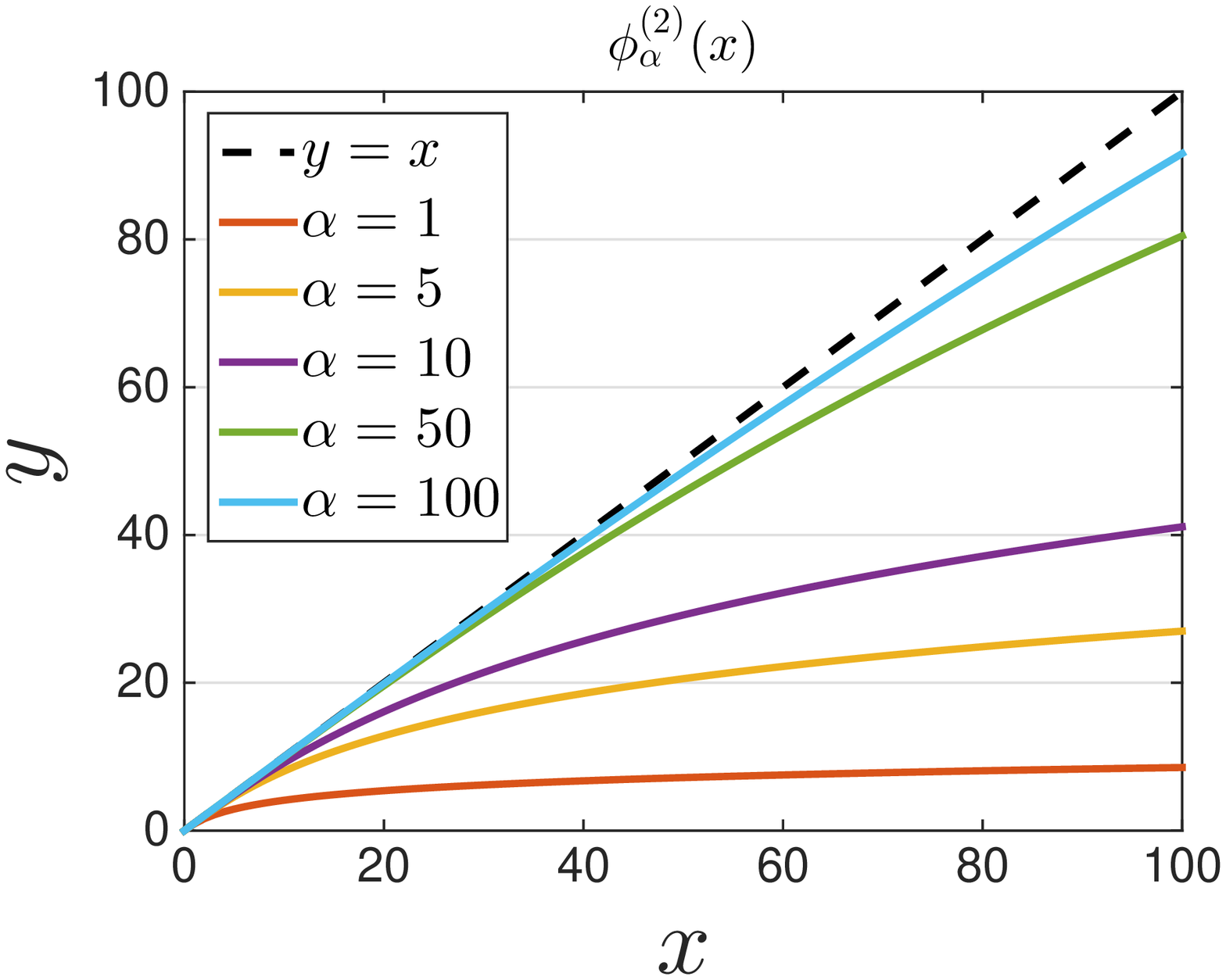}}
{\includegraphics[scale=.25]{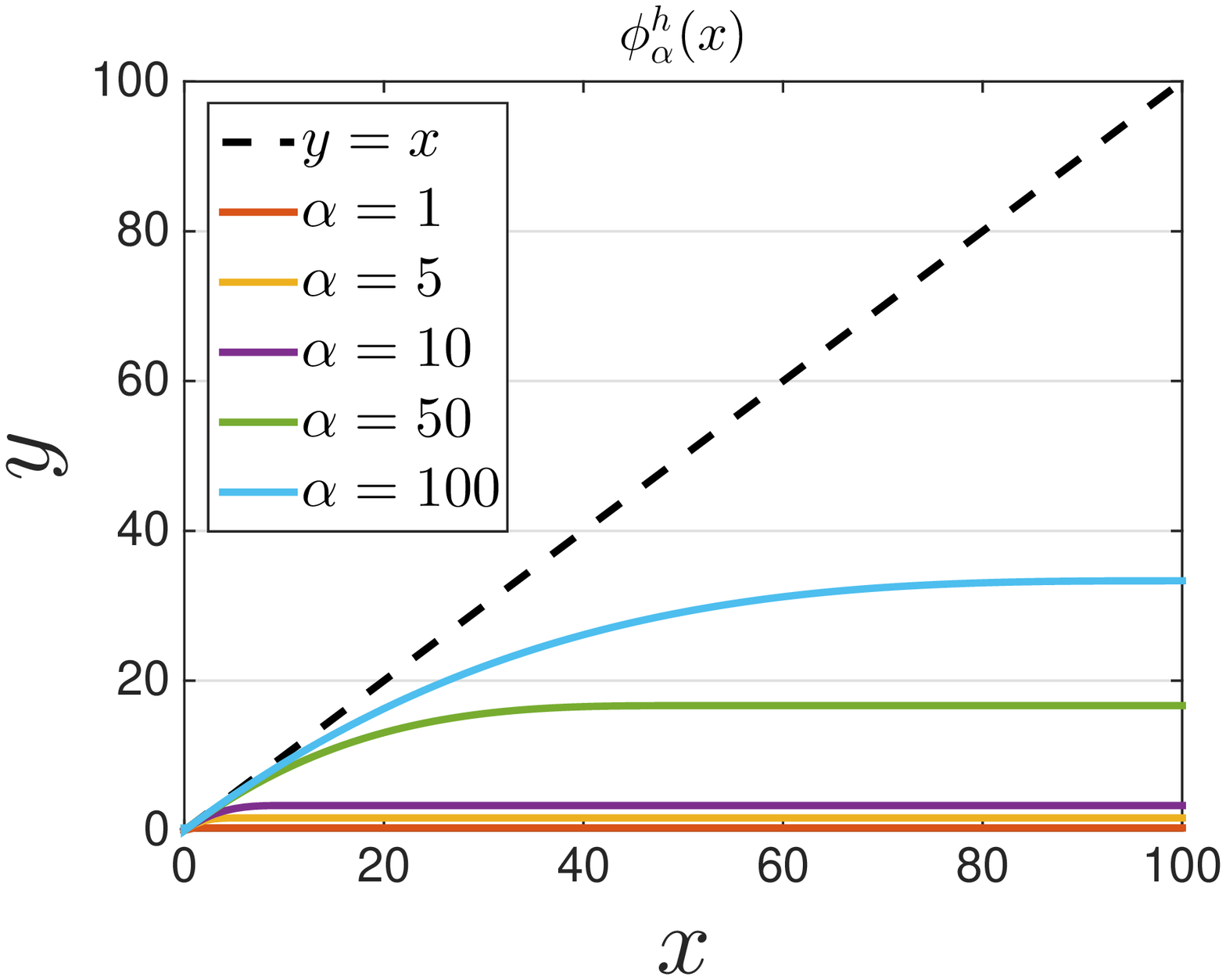}}
\caption{Visualization of different truncation losses with different  $\alpha$.}
\label{fig01}
\end{figure*}

{\bf Example 1}. $\phi^{(1)}_\alpha (x) = \alpha\log(1 + \frac{x}{\alpha})$. Applying this truncation to a square loss yields Cauchy loss for regression~\citep{Black1996}. We can verify that it is a truncation function and $|\phi_\alpha(x) - x|\leq \frac{x^2}{2\alpha}$ (see supplement).  \\
{\bf Example 2}. $\phi^{(2)}_\alpha (x) = \alpha\log(1 + \frac{x}{\alpha} + \frac{x^2}{2\alpha^2})$. This truncation has been considered by~\cite{brownlees2015empirical} for computing a mean estimator under heavy-tailed distribution of data. One could consider a more general function $\phi^{(m)}_\alpha (x) = \alpha\log(1 +\sum_{k=1}^{m} \frac{x^k}{\alpha^k k!})$. See supplement for verification of this function. \\
{\bf Example 3.} The following function can be shown to be a truncation function (see supplement): 
\begin{align*}
\phi^h_\alpha(x)=\left\{\begin{array}{ll}\frac{\alpha}{3}\left[1 - (1 - \frac{x}{\alpha})^3\right] & \text{ if } 0\leq x<\alpha\\ \frac{\alpha}{3}& \text{else}\end{array}\right.
\end{align*}
We plot the curves of the three  truncation functions with varying $\alpha$ in Figure~\ref{fig01}.

\subsection{Excess Risk Bounds of NCL with Truncated Losses}
This section concerns the excess risk bounds of NCL with truncated losses. Define: 
\begin{align}\label{eqn:em}
\fh = \arg\min_{f\in\F}P_n(\phi_\alpha(f)):=\frac{1}{n}\sum_{i=1}^n\phi_\alpha(f(Z_i)).
\end{align}

Our analysis and results in this section are based on the following assumption. 
\begin{ass}\label{ass:1}
There exists a constant $\sigma>0$ such that $\E[f(Z)^2] \leq \sigma^2$ for any $f\in\mathcal F$.
\end{ass}
{\bf Remark.} Please notice that the random function $f(Z)$ is not necessarily bounded, but it is reasonable to have a bounded mean and variance so that its second order moment is bounded. This assumption  also made in many previous works~\citep{brownlees2015empirical, hsu2016loss, bubeck2013bandits, cortes2013relative}. Next section will use a relaxed assumption for learning a linear model.  Below, we will use  the statistical model~(\ref{eqn:reg}) to demonstrate the above assumption could hold under heavy-tailed distribution of $Y$. 

\begin{thm}\label{thm1:heavy:soft:1}
Under Assumption~\ref{ass:1} and $\phi_\alpha(\cdot)$ is a truncation function,  
for any $\alpha>0$ with a probability at least $1-\delta$, we have
\begin{align*}
&P(\widehat f) - P(f^*)  \leq C\beta(\F, \alpha)\log(1/\delta)\left(\frac{\gamma_2(\F, d_e)}{\sqrt{n}} + \frac{\gamma_1(\F, d_m)}{n}\right) + \frac{2M\sigma^2}{\alpha}
\end{align*}
where $C$ is a universal constant, $M$ is a  constant appearing in Definition~\ref{def}, $\beta(\F, \alpha)\in(0,1]$ is a non-decreasing function of $\alpha$. 
\end{thm}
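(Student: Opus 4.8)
Here is the proof plan.

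\medskip

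\textbf{Step 1 (Error decomposition).} The plan is the usual ``approximation $+$ optimization $+$ uniform deviation'' split, where the approximation error is controlled by property (iv) of Definition~\ref{def} together with Assumption~\ref{ass:1}, and the deviation term by generic chaining for the empirical process indexed by the truncated class $\phi_\alpha(\F):=\{\phi_\alpha\circ f:f\in\F\}$. With $P_n(\phi_\alpha(f))=\frac1n\sum_{i=1}^n\phi_\alpha(f(Z_i))$, I would write
\begin{align*}
P(\fh)-P(f^*) &= \underbrace{[P(\fh)-P(\phi_\alpha(\fh))]}_{(\mathrm{I})} + \underbrace{[P(\phi_\alpha(\fh))-P_n(\phi_\alpha(\fh))]}_{(\mathrm{II})} + \underbrace{[P_n(\phi_\alpha(\fh))-P_n(\phi_\alpha(f^*))]}_{(\mathrm{III})}\\
&\quad+ \underbrace{[P_n(\phi_\alpha(f^*))-P(\phi_\alpha(f^*))]}_{(\mathrm{IV})} + \underbrace{[P(\phi_\alpha(f^*))-P(f^*)]}_{(\mathrm{V})}.
\end{align*}
Here $(\mathrm{III})\le 0$ since $\fh$ minimizes $P_n(\phi_\alpha(\cdot))$. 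For $(\mathrm{I})$ and $(\mathrm{V})$, the pointwise estimate $|\phi_\alpha(x)-x|\le Mx^2/\alpha$ from property (iv) gives, because $\fh,f^*\in\F$,
\[
(\mathrm{I})=\E[\fh(Z)-\phi_\alpha(\fh(Z))]\le\frac{M}{\alpha}\E[\fh(Z)^2]\le\frac{M\sigma^2}{\alpha},\qquad (\mathrm{V})\le\frac{M\sigma^2}{\alpha},
\]
by Assumption~\ref{ass:1}; together they contribute the additive term $2M\sigma^2/\alpha$.

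\medskip

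\textbf{Step 2 (Reduction to a uniform deviation over $\phi_\alpha(\F)$).} The terms $(\mathrm{II})$ and $(\mathrm{IV})$ are each at most $\sup_{f\in\F}|P_n(\phi_\alpha(f))-P(\phi_\alpha(f))|$, so it suffices to bound this supremum by $C\beta(\F,\alpha)\log(1/\delta)\big(\gamma_2(\F,d_e)/\sqrt n+\gamma_1(\F,d_m)/n\big)$ with probability $1-\delta$. Since $\phi_\alpha(0)=0$ and $0\le\phi_\alpha'\le\phi_\alpha'(0)=1$, the map $\phi_\alpha$ is $1$-Lipschitz, so $d_e(\phi_\alpha f,\phi_\alpha g)\le d_e(f,g)$, $d_m(\phi_\alpha f,\phi_\alpha g)\le d_m(f,g)$, and $\E[\phi_\alpha(f)(Z)^2]\le\E[f(Z)^2]\le\sigma^2$. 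Moreover, because $\phi_\alpha'$ is decreasing and ``flattens out'' for large arguments — the more so the smaller $\alpha$ is — the sup-norm increments, the effective variances, and the $d_m$-entropy of $\phi_\alpha(\F)$ are all smaller than their $\F$-counterparts by a factor I would encode as $\beta(\F,\alpha)\in(0,1]$, non-decreasing in $\alpha$, with $\beta\to1$ as $\alpha\to\infty$ (no truncation recovers the standard bound).

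\medskip

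\textbf{Step 3 (Generic chaining).} Regard $g\mapsto\sqrt n\,(P_n g-Pg)$ as an empirical process indexed by $g\in\phi_\alpha(\F)$. Bernstein's inequality applied to $\sum_{i=1}^n\big[(\phi_\alpha f-\phi_\alpha g)(Z_i)-\E(\phi_\alpha f-\phi_\alpha g)\big]$, after rescaling by $\sqrt n$, yields a mixed sub-Gaussian/sub-exponential increment bound whose $L_2$-metric is $\propto d_e$ and whose ``$\ell_1$''-metric is $\propto d_m/\sqrt n$. The Bernstein version of Talagrand's generic chaining bound for such processes~\citep{talagrand2005generic} then gives, in expectation, a bound of order $\gamma_2(\phi_\alpha\F,d_e)/\sqrt n+\gamma_1(\phi_\alpha\F,d_m)/n$ (the single-function fluctuation at a reference point, of order $\sigma/\sqrt n$, being absorbed); upgrading to a high-probability statement via a Talagrand-type concentration inequality for suprema of empirical processes introduces the $\log(1/\delta)$ factor, and the fact that the functions $\phi_\alpha(f)$ are themselves unbounded is harmless because only their pairwise $d_m$-distances (finite) and their second moments (bounded by $\sigma^2$) enter. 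Substituting the Step~2 estimates, which bound the $\phi_\alpha(\F)$-functionals by $\beta(\F,\alpha)$ times those of $\F$, and combining with Step~1 gives the stated bound.

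\medskip

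\textbf{Main obstacle.} The delicate part is Steps~2--3: rigorously isolating the truncation gain $\beta(\F,\alpha)$ — i.e.\ proving that truncation genuinely shrinks the relevant sup-norm increments, variances and chaining functionals, not merely that $\phi_\alpha$ is $1$-Lipschitz, which alone only yields $\beta\equiv1$ — and carrying out the high-probability generic-chaining/concentration argument for a class that is pointwise unbounded though of finite $d_m$-diameter. The remaining pieces are routine bookkeeping.
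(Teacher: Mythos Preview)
Your plan matches the paper's proof almost exactly: the same five-term decomposition, the same use of property~(iv) plus Assumption~\ref{ass:1} for terms $(\mathrm{I})$ and $(\mathrm{V})$, and the same Bernstein-increment $+$ generic chaining argument for the empirical process over $\phi_\alpha(\F)$.

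Two small points where the paper is slightly cleaner or more explicit than your sketch:

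\begin{itemize}
\item You bound $(\mathrm{II})$ and $(\mathrm{IV})$ separately by $\sup_{f\in\F}|P_n(\phi_\alpha f)-P(\phi_\alpha f)|$, which forces you to absorb a ``single-function fluctuation at a reference point''. With only a second-moment bound on $\phi_\alpha(f^*)$ (no boundedness), that single-point term is awkward to control with $\log(1/\delta)$ dependence. The paper sidesteps this: writing $\Lambda(f)=P(\phi_\alpha f)-P_n(\phi_\alpha f)$, one has $(\mathrm{II})+(\mathrm{IV})=\Lambda(\fh)-\Lambda(f^*)\le\sup_{f\in\F}|\Lambda(f)-\Lambda(f^*)|$, which is \emph{already} a centered increment process and is handled directly by Bernstein chaining, with no separate single-point term.
\item The paper makes $\beta(\F,\alpha)$ concrete: it is simply the effective Lipschitz constant $\beta(\F,\alpha)=\sup_{f\in\F,\,Z}\phi_\alpha'(f(Z))$, which enters because $|\phi_\alpha(f(Z))-\phi_\alpha(g(Z))|\le\beta(\F,\alpha)\,|f(Z)-g(Z)|$ on the relevant range. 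Your worry in the ``Main obstacle'' paragraph is well-founded: since $\phi_\alpha'(0)=1$ and losses can be near zero, this $\beta$ may in fact equal $1$; the paper does not claim a strict improvement here and explicitly acknowledges the difficulty of quantifying $\beta(\F,\alpha)$ below the theorem.
\end{itemize}
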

To understand the above result, we first present a corollary and an example below. 
\begin{cor}\label{cor:heavy:soft:1}
Under the same condition in Theorem~\ref{thm1:heavy:soft:1}, and $\ell(z, y)$ is a Lipschitz continuous function w.r.t the first argument and $\max_{X\in\X, h, h'\in\H}|h(X) - h'(X)|$ is bounded. 
By setting $\alpha\geq \Omega(\sqrt{n})$, with a probability at least $1-\delta$, we have
\begin{align*}
&P(\widehat f) - P(f^*)  \leq  O\left(\frac{\log(1/\delta)}{\sqrt{n}}\right).
\end{align*}
\end{cor}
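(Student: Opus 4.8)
The plan is to start from the bound in Theorem~\ref{thm1:heavy:soft:1} and show that, under the extra hypotheses, each of its three terms is $O(\log(1/\delta)/\sqrt{n})$. Write $G$ for the Lipschitz constant of $\ell(\cdot,y)$ in its first argument and $B := \max_{X\in\X, h, h'\in\H}|h(X)-h'(X)| < \infty$. The observation that drives everything is that for $f,g\in\F$ with $f(Z)=\ell(h(X),Y)$ and $g(Z)=\ell(h'(X),Y)$, Lipschitzness gives $|f(Z)-g(Z)|\leq G|h(X)-h'(X)|$ for every $Z$, which removes the dependence on (the possibly heavy-tailed) $Y$.

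First I would dispatch the residual term $2M\sigma^2/\alpha$: since $\beta(\F,\alpha)\in(0,1]$ we may replace it by $1$ in the leading factor, and choosing $\alpha\geq\Omega(\sqrt{n})$ makes $2M\sigma^2/\alpha = O(1/\sqrt{n})$. The remaining work is to show $\gamma_2(\F,d_e)$ and $\gamma_1(\F,d_m)$ are finite constants independent of $n$. From the pointwise bound above, $d_m(f,g)\leq G\|h-h'\|_\infty\leq GB$ and $d_e(f,g)\leq G(\E[(h(X)-h'(X))^2])^{1/2}$; in particular $d_m$ is a genuine finite metric on $\F$ --- this is exactly the role of the Lipschitz-plus-bounded-spread assumptions, since otherwise $d_m$ (and hence $\gamma_1(\F,d_m)$) could be infinite for heavy-tailed $Y$. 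Because the generic-chaining functional is monotone under a pointwise comparison of metrics, $\gamma_1(\F,d_m)\leq G\,\gamma_1(\H,\|\cdot\|_\infty)$ and $\gamma_2(\F,d_e)\leq G\,\gamma_2(\H,\|\cdot\|_{L_2(P_X)})$, quantities that depend only on the fixed class $\H$; via the entropy estimate $\gamma_\beta(\H,D)\leq\int_0^1(\log N(\H,\epsilon,D))^{1/\beta}d\epsilon$ they are finite whenever $\H$ satisfies a mild metric-entropy bound (e.g.\ a bounded subset of a finite-dimensional model, as for linear predictors with bounded inputs). Substituting these constants back, the chaining terms contribute $O(\log(1/\delta))\big(O(1)/\sqrt{n}+O(1)/n\big)=O(\log(1/\delta)/\sqrt{n})$, and adding the residual term $O(1/\sqrt{n})$ yields the claim.

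The main obstacle is not the arithmetic but justifying that $\gamma_2(\F,d_e)$ and $\gamma_1(\F,d_m)$ are finite and $n$-independent: one has to transfer the complexity of the loss-composed class $\F$ to that of $\H$ correctly through the Lipschitz bound, and verify that boundedness of $|h(X)-h'(X)|$ is precisely what prevents the $d_m$-diameter --- and therefore the $\gamma_1$ term --- from blowing up when the output is heavy-tailed; once those constants are in hand the corollary follows by the balancing $\alpha\geq\Omega(\sqrt{n})$.
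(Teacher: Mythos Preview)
Your proposal is correct and follows essentially the same approach as the paper: transfer the complexity of $\F$ to that of $\H$ via the Lipschitz inequality $|f(Z)-g(Z)|\le G|h(X)-h'(X)|$, conclude that $\gamma_1(\F,d_m)$ and $\gamma_2(\F,d_e)$ are finite constants depending only on $\H$, and absorb the bias term $2M\sigma^2/\alpha$ by taking $\alpha\ge\Omega(\sqrt{n})$. The only cosmetic difference is that the paper routes both chaining terms through covering numbers in the single sup metric (using $d_e\le d_m$ and $\log N(\F,\epsilon,d_m)\le\log N(\H,\epsilon/L,d_m)$), whereas you invoke monotonicity of $\gamma_\beta$ under metric domination and keep $\gamma_2$ in the $L_2(P_X)$ metric on $\H$; both lead to the same conclusion once one assumes the entropy integrals for $\H$ converge.
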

Let us consider the statistical model~(\ref{eqn:reg}). To learn a predictive function, we can use an absolute loss function $\ell(z, y)=|z - y|$. By assuming that $\sup_{h\in\H, X\in\X}h(X)<\infty$ and $\E[Y^2]\leq \sigma^2$ (please note that the distribution of $Y$ or $\varepsilon$ could be heavy-tailed), then we have $\E[f(Z)^2]\leq 2\E[h(X)^2]+2\sigma^2$ and the conditions in Corollary~\ref{cor:heavy:soft:1} hold. As a result, the empirical minimizer $\fh$ of~(\ref{eqn:em}) with $\alpha\geq \Omega(\sqrt{n})$ has an excess risk bound of $\widetilde O(1/\sqrt{n})$. Other loss functions that are Lipchitz continuous for a regression problem include $\epsilon$-insensitive loss~\citep{rosasco2004loss}, piecewise-linear loss~\citep{koenker2005quantile}, and huber loss~\citep{huber1964robust}. In comparison, \cite{brownlees2015empirical} have derived a similar order of excess risk bound for Lipschitz continuous losses. However, their solution is based on solving a difficult problem~(\ref{eqn:catoni}), while our solution is empirical minimizer of the truncated losses. 

It is notable that the result in Theorem~\ref{thm1:heavy:soft:1} is restricted to Lipschitz continuous loss functions, which precludes some non-Lipschitz continuous loss functions for heavy-tailed data. One example is the square loss for regression $\ell(z, y) = (z-y)^2$. The reason for this restriction is that the analysis for Theorem~\ref{thm1:heavy:soft:1} hinges on the covering number of $\F$ under the metric $d_m$. Next, we present a result that relies on metrics $d_e$ and $d_s$, which could imply an $\widetilde O(1/\sqrt{n})$ excess risk bound of $\fh$ for square loss.

\begin{thm}\label{thm1:heavy:soft:2}
Under the same condition in Theorem~\ref{thm1:heavy:soft:1}, for any $\delta\in(0,1)$, let $\Gamma_\delta$ satisfy $\Pr(\gamma_2(\F, d_s)>\Gamma_\delta)\leq\delta/8$.  
 With a probability at least $1-3\delta$, we have
\begin{align*}
&P(\widehat f) - P(f^*)  \leq  C\beta(\F, \alpha)\max(\Gamma_\delta, \Delta(\F, d_e))\sqrt{\frac{\log(8/\delta)}{n}} + \frac{2M\sigma^2}{\alpha} 
\end{align*}
\end{thm}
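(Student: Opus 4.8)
The plan is to reuse the architecture of the proof of Theorem~\ref{thm1:heavy:soft:1}, but to run the uniform deviation argument against the \emph{empirical} metric $d_s$ instead of the sup metric $d_m$; this is exactly what lets us drop the requirement that increments of $\F$ be uniformly bounded, and hence to accommodate non‑Lipschitz losses such as the square loss when $Y$ is heavy‑tailed. First I would set up the usual truncated‑ERM decomposition. Definition~\ref{def} gives $\phi'_\alpha\in[0,1]$, so $\phi_\alpha(x)\le x$, together with $0\le x-\phi_\alpha(x)\le Mx^2/\alpha$. Using the optimality of $\fh$ for $P_n(\phi_\alpha(\cdot))$, these two facts, and Assumption~\ref{ass:1} (so $\E[f(Z)^2]\le\sigma^2$ for all $f\in\F$), one obtains, after tracking the truncation bias,
\[
P(\fh)-P(f^*)\ \le\ \sup_{f\in\F}(P-P_n)\bigl(\phi_\alpha(f)-\phi_\alpha(f^*)\bigr)\ +\ \frac{2M\sigma^2}{\alpha}.
\]
The crucial modeling choice is to keep the empirical process \emph{centered at $f^*$}: the fluctuations of a centered process are governed by the class diameter rather than by $\sigma$, which is why $\Delta(\F,d_e)$ (and not $\sigma$) is what enters the complexity part of the bound.

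For the centered supremum I would first contract through $\phi_\alpha$. Since $\phi_\alpha$ is $1$‑Lipschitz, $|\phi_\alpha(f(Z))-\phi_\alpha(g(Z))|\le|f(Z)-g(Z)|$ pointwise, so the covering numbers and the $\gamma_2$‑functional of $\{\phi_\alpha(f)-\phi_\alpha(f^*):f\in\F\}$ under $d_e$ and $d_s$ are dominated by those of $\F$; retaining the multiplier $\phi'_\alpha(\xi_Z)$ in the mean‑value expansion $\phi_\alpha(f(Z))-\phi_\alpha(g(Z))=\phi'_\alpha(\xi_Z)(f(Z)-g(Z))$ refines this to the factor $\beta(\F,\alpha)\in(0,1]$, which equals $1$ at $\alpha=\infty$ and is non‑decreasing in $\alpha$ because $\phi'_\alpha$ is decreasing in $x$ and monotone in $\alpha$ (Definition~\ref{def}). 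Then I would symmetrize and, conditionally on $Z_1,\dots,Z_n$, bound the Rademacher process $f\mapsto\frac1n\sum_i\epsilon_i\bigl(\phi_\alpha(f(Z_i))-\phi_\alpha(f^*(Z_i))\bigr)$ by generic chaining: it is sub‑Gaussian with respect to $\beta(\F,\alpha)\,d_s/\sqrt n$, so Talagrand's chaining theorem yields a bound of order $\beta(\F,\alpha)\gamma_2(\F,d_s)/\sqrt n$, together with a high‑probability correction of order $\beta(\F,\alpha)\Delta(\F,d_s)\sqrt{\log(1/\delta)/n}$. Working with $d_s$ here is precisely what eliminates the $\gamma_1(\F,d_m)/n$ term of Theorem~\ref{thm1:heavy:soft:1}, which is infinite for the square loss with heavy‑tailed $Y$.

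It remains to pass to a data‑independent statement. The random quantity $\gamma_2(\F,d_s)$ is replaced by its high‑probability bound $\Gamma_\delta$ on the event $\{\gamma_2(\F,d_s)\le\Gamma_\delta\}$, of probability $\ge 1-\delta/8$ by the definition of $\Gamma_\delta$; and $\Delta(\F,d_s)$ is controlled by $\Delta(\F,d_e)$ with high probability because $\E\,d_s(f,g)^2=d_e(f,g)^2$. Combining the de‑symmetrization inequality, the concentration of the supremum of the centered empirical process around its expectation (whose weak variance is at most $\Delta(\F,d_e)^2$), and these substitutions, and then merging the two scales $\Gamma_\delta$ and $\Delta(\F,d_e)$ into $\max(\Gamma_\delta,\Delta(\F,d_e))$, produces the stated inequality with the factor $\sqrt{\log(8/\delta)/n}$; the three high‑probability events that must hold simultaneously (the bound on $\gamma_2(\F,d_s)$, the concentration of the supremum, and the transfer from $d_s$ to $d_e$) account for the $1-3\delta$ confidence level. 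The main obstacle is that $\phi_\alpha\circ\F$ need not be uniformly bounded — e.g.\ $\phi^{(1)}_\alpha(x)=\alpha\log(1+x/\alpha)$ is unbounded — so the textbook Talagrand / bounded‑differences concentration for suprema of empirical processes does not apply off the shelf; chaining against the finite random metric $d_s$ and then transferring to $d_e$ via $\Gamma_\delta$ is exactly the device that skirts this, and the technical crux is carrying out the concentration / de‑symmetrization step using only the bounded second moments and the sub‑linear ($\sim\log$) growth of $\phi_\alpha$, for instance via an auxiliary truncation of $\phi_\alpha\circ\F$ at a slowly growing level or a tail inequality for unbounded empirical processes.
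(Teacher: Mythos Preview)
Your proposal follows essentially the same architecture as the paper's proof: the truncated--ERM decomposition isolating the centered supremum $\sup_{f\in\F}|\Lambda(f)-\Lambda(f^*)|$ plus the bias $2M\sigma^2/\alpha$, then symmetrization and conditional sub-Gaussian chaining against the empirical metric $d_s$, and finally passing from the random $\gamma_2(\F,d_s)$ to the deterministic $\Gamma_\delta$ on an event of probability at least $1-\delta/8$.

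The one substantive difference is in the de-symmetrization/concentration step, and relatedly in where $\Delta(\F,d_e)$ actually enters. You write that ``$\Delta(\F,d_s)$ is controlled by $\Delta(\F,d_e)$ with high probability because $\E\,d_s(f,g)^2=d_e(f,g)^2$''; but this identity only gives pointwise control of $d_s(f,g)$ for a fixed pair, not of the supremum over all pairs, so as stated this step is a gap (a uniform bound on $\Delta(\F,d_s)$ would itself need a chaining argument). The paper avoids this entirely by using ghost-sample symmetrization: it introduces an independent copy $(Z_1',\dots,Z_n')$, sets $W_i(f)=\tfrac1n(\phi_\alpha(f(Z_i))-\phi_\alpha(f(Z_i')))$ and $W(f)=\sum_i\varepsilon_iW_i(f)$, chains the conditional moment generating function of $\sup_f|W(f)-W(f^*)|$ against $d_{s,s'}\le\beta(\F,\alpha)(d_s+d_{s'})/\sqrt n$, conditions on $\{\gamma_2(\F,d_s)\le\Gamma_\delta\text{ and }\gamma_2(\F,d_{s'})\le\Gamma_\delta\}$, and then applies a de-symmetrization inequality (Lemma~3.3 of van de Geer) to transfer the tail bound on $W$ to one on $\Lambda$. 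That lemma requires, for each fixed $f$, $\Pr(|\Lambda(f)-\Lambda(f^*)|\ge\theta)\le\tfrac12$, which is checked by Chebyshev using $\mathrm{Var}(\Lambda(f)-\Lambda(f^*))\le\beta(\F,\alpha)^2\,d_e(f,f^*)^2/n\le\beta(\F,\alpha)^2\Delta(\F,d_e)^2/n$. Thus in the paper $\Delta(\F,d_e)$ enters not as a surrogate for the random $\Delta(\F,d_s)$ but as the per-function variance proxy that makes the de-symmetrization go through; this is precisely the ``technical crux'' you flag at the end, and the ghost-sample plus Chebyshev device is how it is resolved without any Talagrand-type concentration for unbounded suprema.
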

{\bf Remark:} It is not difficult to see that the above result only uses distance metrics $d_e$ and $d_s$ of $\F$, which makes it possible to derive an $\widetilde O(1/\sqrt{n})$ excess risk bound of $\fh$ for least-squares regression without the Lipschitz continuous assumption.   

In particular, let us consider the regression model~(\ref{eqn:reg}) and assume that $\E[Y^4]\leq \sigma^2$ (heavy-tailed) and $\sup_{h\in\H, X\in\X}h(X)\leq \infty$. Let $\ell(h(X), Y)=(h(X)-Y)^2$. Then $\E[f(Z)^2]\leq 8\sigma^2 + 8\sup_{h\in\H, X\in\X}h(X)^4\triangleq \sigma^2_f$. By setting 
$\Gamma_\delta = 2\sqrt{2}\sqrt{\Delta^2(\H, d_m) + \E[Y^2] + \sqrt{\frac{8\sigma^2}{n\delta}}}\gamma_2(\H, d_m)$,
it was shown~\citep{brownlees2015empirical} that $\Pr(\gamma_2(\F, d_s)>\Gamma_\delta)\leq \delta/8$ and $\Gamma_\delta\geq \Delta(\F, d_e)$. By assuming  $\sup_{h\in\H, X\in\X}h(X)\leq \infty$, then $\Delta^2(\H, d_m)$ $\gamma_2(\H, d_m)$ are bounded.   As a result, Theorem~\ref{thm1:heavy:soft:2} implies an excess risk bound of $\widetilde O(1/\sqrt{n})$ for truncating  the square loss to learn $\fh$ with $\alpha>\sqrt{n}$. 

For comparison, we compare this result with that by \cite{audibert2011robust}, which focuses on learning a linear model with a square loss function. They  obtained an $\widetilde O(d/n)$ bound of regular ERM based on square losses for sufficiently large $n$, and also obtained $\widetilde O(1/n)$ bound for a non-convex min-max estimator. In contrast, our bound is worse but it is applicable to non-linear models and our formulation could enjoy faster solver, e.g., SGD. For linear models using a square loss function, in next section we will establish a stronger result than that by \cite{audibert2011robust}. 

Finally, we mention how the truncation level parameter $\alpha$ enters into the excess risk bounds in Theorems~\ref{thm1:heavy:soft:1}, \ref{thm1:heavy:soft:2}. In particular, let us compare learning with truncation and without truncation.  Indeed, $\beta(\F, \alpha)$ is related to Lipchitz constant of $\phi_\alpha(f)$ in terms of $f$. Without truncation $\alpha=\infty$, the first term in both bounds dominates and $\beta(\F, \alpha)=1$. With truncation (e.g., $\alpha\leq \infty$), the first term could be scaled down by $\beta(\F, \alpha)$, making it possible to lower  the overall bound.  However, it is difficult to quantify $\beta(\F, \alpha)$ due to that the analysis is based on a uniform bound for any $f\in\F$. To address this issue,  we will present a different analysis below to demonstrate the benefits of truncation.

\subsection{Statistical Error of  SGD for Learning a Linear Model with Truncated Square Loss }
One shortcoming of excess risk bound analysis in last section is that it is restricted to the empirical minimizers $\fh$ , which might not be obtained in practice due to the problem~(\ref{eqn:em}) is non-convex. It is well-known that non-convex problems could have bad local minimum or stationary points, and commonly used solvers (e.g., SGD) may stuck at local minimum and even stationary points. In this section, we provide a direct analysis of SGD for solving~(\ref{eqn:em}) to show that truncation has a clear advantage for reducing statistical error. It should be noted that it will be difficult to analyze SGD for a general problem~(\ref{eqn:em}). Instead, we consider a statistical model $y_i = \w_*^\top\x_i + \varepsilon_i,~~(i = 1,\dots, n)$,   and minimizing truncated square losses:
\begin{align}\label{opt:trun:ls}
\min_{\w \in \R^d} F_\alpha(\w)  = \frac{1}{2n}\sum_{i=1}^{n} \phi_\alpha\left((\w^\top\x_i - y_i)^2\right),
\end{align}
The update of SGD for minimizing~(\ref{opt:trun:ls}) is 
$\w_{t+1} = \w_t - \frac{\eta_t}{2} \nabla\phi_\alpha((\w_t^{\top}\x_{i_t} - y_{i_t})^2)$, 
where $i_t$ is a random sampled index.  Considering $\varepsilon_i$ is independent of $\x_i$, then $\w_*$ is the global minimizer of $\min_{\w\in\R^d}\E[(\w^{\top}\x_i - y_i)^2)]$. 
We first show that SGD can find an approximate stationary point of $F_\alpha(\w)$ with $O(1/\epsilon^4)$ iteration complexity. 
\begin{prop}\label{prop:1}
Assume $\phi_{\alpha}$ is a truncation function satisfying that there exists a constant $\kappa>0$ such that $|x^2\phi_\alpha''(x^2)|\leq \kappa$  for any $x$,  $\|\x_i\|_2\leq R$ and $\E[\|\nabla\phi_\alpha((\w_t^{\top}\x_{i_t} - y_{i_t})^2) - \nabla F_\alpha(\w_t)\|^2]\leq \sigma_\alpha$ for all $t=1,\ldots$. Then SGD finds an approximate stationary point $\E[\|\nabla F_\alpha(\w_\alpha)\|^2]\leq \epsilon^2$ with a complexity of $O(\sigma_\alpha^2/\epsilon^4)$. 
\end{prop}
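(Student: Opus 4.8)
The plan is to derive the statement from the classical analysis of stochastic gradient descent on smooth non-convex objectives, so the only genuinely new ingredient is a uniform bound on the smoothness constant of $F_\alpha$; this is exactly where the hypotheses $|x^2\phi_\alpha''(x^2)|\le\kappa$ and $\|\x_i\|_2\le R$ enter.

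\textbf{Step 1 (smoothness of $F_\alpha$).} First I would fix $i$, write $r_i(\w)=\w^\top\x_i-y_i$ and $g_i(\w)=\phi_\alpha(r_i(\w)^2)$. Since $\phi_\alpha$ is smooth, $\nabla g_i(\w)=2\phi_\alpha'(r_i^2)\,r_i\,\x_i$ and $\nabla^2 g_i(\w)=\bigl(4\,r_i^2\phi_\alpha''(r_i^2)+2\phi_\alpha'(r_i^2)\bigr)\x_i\x_i^\top$. By Definition~\ref{def}, $\phi_\alpha'$ decreases from $\phi_\alpha'(0)=1$ to $0$, so $\phi_\alpha'(r_i^2)\in[0,1]$, and the hypothesis gives $|r_i^2\phi_\alpha''(r_i^2)|\le\kappa$; hence $\|\nabla^2 g_i(\w)\|\le(4\kappa+2)\|\x_i\|_2^2\le(4\kappa+2)R^2$, and averaging, $F_\alpha=\frac1{2n}\sum_i g_i$ has an $L$-Lipschitz gradient with $L=(2\kappa+1)R^2$. (The Hessian identity is valid everywhere since $\phi_\alpha$ is $C^2$; equivalently one may bound the Lipschitz constant of the scalar map $s\mapsto\phi_\alpha'(s^2)s$.)

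\textbf{Step 2 (the stochastic direction).} Next I would set $\g_t=\frac12\nabla\phi_\alpha\bigl((\w_t^\top\x_{i_t}-y_{i_t})^2\bigr)$, so that the update reads $\w_{t+1}=\w_t-\eta_t\g_t$. Because $F_\alpha(\w)=\frac1{2n}\sum_i\phi_\alpha(r_i(\w)^2)$ and $i_t$ is uniform, $\E[\g_t\mid\w_t]=\nabla F_\alpha(\w_t)$, i.e.\ the direction is unbiased, and by hypothesis the conditional second moment of $\g_t-\nabla F_\alpha(\w_t)$ is at most a constant $\sigma_\alpha^2$ (depending on $\alpha$), whence $\E[\|\g_t\|^2\mid\w_t]\le\|\nabla F_\alpha(\w_t)\|^2+\sigma_\alpha^2$.

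\textbf{Step 3 (descent recursion and step-size tuning).} Then I would invoke the descent lemma for the $L$-smooth $F_\alpha$: $F_\alpha(\w_{t+1})\le F_\alpha(\w_t)-\eta_t\langle\nabla F_\alpha(\w_t),\g_t\rangle+\frac{L\eta_t^2}{2}\|\g_t\|^2$; taking conditional expectation, using Step 2 and a constant step $\eta_t\equiv\eta\le 1/L$, this becomes $\E[F_\alpha(\w_{t+1})]\le\E[F_\alpha(\w_t)]-\frac\eta2\E[\|\nabla F_\alpha(\w_t)\|^2]+\frac{L\eta^2\sigma_\alpha^2}{2}$. Telescoping over $t=1,\dots,T$, using $F_\alpha\ge0$ so $F_\alpha(\w_1)-\inf F_\alpha\le F_\alpha(\w_1)=:\Delta_1<\infty$ (finite since $\phi_\alpha$ is concave with $\phi_\alpha(0)=0$ and $\phi_\alpha'(0)=1$, hence $\phi_\alpha(x)\le x$), and sampling $\w_\alpha$ uniformly from $\{\w_1,\dots,\w_T\}$, gives $\E[\|\nabla F_\alpha(\w_\alpha)\|^2]\le\frac{2\Delta_1}{\eta T}+L\eta\sigma_\alpha^2$. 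Balancing with $\eta=\min\{1/L,\sqrt{2\Delta_1/(LT\sigma_\alpha^2)}\}$ yields $\E[\|\nabla F_\alpha(\w_\alpha)\|^2]=O\bigl(L\Delta_1/T+\sqrt{L\Delta_1\sigma_\alpha^2/T}\bigr)$, so $T=O(\sigma_\alpha^2/\epsilon^4)$ iterations suffice to make the right-hand side at most $\epsilon^2$, which is the claimed complexity.

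\textbf{Anticipated obstacle.} The one delicate point is Step 1: for a generic concave truncation the factor $r^2\phi_\alpha''(r^2)$ in the Hessian of $\phi_\alpha(r^2)$ need not stay bounded as $r\to\infty$, so without a further condition $F_\alpha$ would fail to be globally smooth and the whole scheme would collapse; the assumption $|x^2\phi_\alpha''(x^2)|\le\kappa$ is exactly what fixes this, and one should sanity-check it against the example truncations of Section~\ref{sec:preli} (e.g.\ $\phi^{(1)}_\alpha$, for which this quantity is bounded by a universal constant). Everything after Step 1 is the standard constant-step SGD argument and amounts to routine bookkeeping of constants.
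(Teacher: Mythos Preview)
Your proposal is correct and follows essentially the same route as the paper: both arguments first bound $\|\nabla^2 F_\alpha(\w)\|$ by $(2\kappa+1)R^2$ using $|x^2\phi_\alpha''(x^2)|\le\kappa$ and $\|\x_i\|\le R$, and then run the standard smooth non-convex SGD descent-lemma telescoping to obtain the $O(\sigma_\alpha^2/\epsilon^4)$ complexity. The only cosmetic differences are that the paper picks the fixed step $\eta_t=1/((2\kappa+1)R^2\sqrt{T})$ rather than your balanced step, and it lower-bounds $F_\alpha$ by $F_\alpha(\w_*)$ rather than by $0$; neither changes the argument or the final rate.
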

{\bf Remark:} The condition $|x^2\phi_\alpha''(x^2)|\leq \kappa$ can be satisfied by the three examples presented before. The variance condition can be verified. Indeed, we can prove $\w_t$ reside in a bounded ball meaning this condition holds. In order to focus on our theme, we omit detailed discussion here. 

Next, we present a result showing the statistical error of an approximate stationary solution found by SGD that depends on the distribution of $\varepsilon$ for $\alpha<\infty$. For ease of understanding, we present a result for a particular truncation function. The result can be generalized to other truncation functions such that $|x^2\phi_\alpha''(x^2)|\leq \kappa$ as done in~\citep{DBLP:journals/corr/Loh15}. 
\begin{thm}\label{thm5}
  Suppose SGD returns an approximate  stationary point $\w_\alpha$ such that $\|\w_\alpha - \w_*\|_2\leq r$ and $\|\nabla F_\alpha(\w_\alpha)\|\leq \epsilon$. Assume $\x_i$ follows a sub-Gaussian distribution with parameter $\sigma_x^2$ and covariance matrix $\Sigma_x$, whose minimal eigen-value $\lambda_{min}(\Sigma_x)>0$,  $\phi_{\alpha}(x) = \alpha \log(1+x/\alpha)$,  $n\geq \Omega(d\log d)$ and the noise $\varepsilon_i$ follows a distribution such that 
\begin{align}\label{eqn:stationary}
c\sigma_x^2 (\Pr(\varepsilon_i^2\geq T^2/4)^{1/2} + \exp(-c' T^2/(2\sigma_x^2r^2)))\leq \frac{\lambda_{min}(\Sigma_x)}{20} 
\end{align}
for $T\leq \sqrt{\alpha}/2$,   then with high probability $1-c\exp(-c'\log d)$ we have
\begin{align}\label{eqn:st}
\|\w_\alpha - \w_*\|_2 \leq O\left(\sqrt{\frac{\alpha d\log d}{n}} + \frac{d\log d}{n}\frac{T^2}{r} + \epsilon\right)
\end{align}
\end{thm}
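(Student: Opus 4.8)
I would run the standard \emph{restricted strong convexity} (RSC) argument for robust $M$-estimators, specialized to $F_\alpha$, combining three pieces: the first-order stationarity of $\w_\alpha$, a concentration bound on $\nabla F_\alpha(\w_*)$, and an RSC lower bound on the symmetrized gradient valid throughout the ball $\{\|\w-\w_*\|_2\le r\}$. Writing $\Delta=\w_\alpha-\w_*$, the bound $\|\nabla F_\alpha(\w_\alpha)\|\le\epsilon$ with Cauchy--Schwarz gives $\langle\nabla F_\alpha(\w_\alpha),\Delta\rangle\le\epsilon\|\Delta\|$, hence
\begin{align*}
\langle\nabla F_\alpha(\w_\alpha)-\nabla F_\alpha(\w_*),\Delta\rangle\;\le\;\big(\|\nabla F_\alpha(\w_*)\|+\epsilon\big)\|\Delta\|,
\end{align*}
so everything reduces to upper bounding $\|\nabla F_\alpha(\w_*)\|$ and lower bounding the left-hand side.

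For the gradient at the truth, differentiation gives $\nabla F_\alpha(\w_*)=-\tfrac1n\sum_{i}\phi_\alpha'(\varepsilon_i^2)\varepsilon_i\x_i$. The decisive observation, special to $\phi_\alpha(x)=\alpha\log(1+x/\alpha)$, is that the clipped residual satisfies $|\phi_\alpha'(\varepsilon_i^2)\varepsilon_i|=|\varepsilon_i|/(1+\varepsilon_i^2/\alpha)\le\sqrt{\alpha}/2$ for \emph{every} realization of $\varepsilon_i$, no matter how heavy its tail; this is exactly where the $\sqrt{\alpha}$ in~(\ref{eqn:st}) originates. Conditioning on $\x_i$, each such summand dotted with a fixed direction is sub-Gaussian with parameter $O(\alpha\sigma_x^2)$ (plus a bias of order $\E[\varepsilon^2]/\sqrt{\alpha}$, of lower order), so a sub-Gaussian/Bernstein tail bound together with a net of the unit sphere and the hypothesis $n\ge\Omega(d\log d)$ gives $\|\nabla F_\alpha(\w_*)\|\le O(\sigma_x\sqrt{\alpha d\log d/n})$ on the stated event.

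The RSC step is the crux. Using $\langle\nabla F_\alpha(\w_\alpha)-\nabla F_\alpha(\w_*),\Delta\rangle=\int_0^1\Delta^\top\nabla^2F_\alpha(\w_*+t\Delta)\Delta\,dt$ and, with $r_i(\w)=\w^\top\x_i-y_i$,
\begin{align*}
\Delta^\top\nabla^2F_\alpha(\w)\Delta=\frac1n\sum_{i}\psi_\alpha(r_i(\w))(\Delta^\top\x_i)^2,\qquad\psi_\alpha(r):=\phi_\alpha'(r^2)+2r^2\phi_\alpha''(r^2)=\frac{1-r^2/\alpha}{(1+r^2/\alpha)^2},
\end{align*}
two elementary facts drive the bound: $\psi_\alpha(r)\ge c_0>0$ whenever $r^2\le\alpha/4$ (in particular whenever $|r|\le T$, using $T\le\sqrt{\alpha}/2$), and $\psi_\alpha(r)\ge-\tfrac18$ for all $r$. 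Hence, for every $\w$ on the segment,
\begin{align*}
\Delta^\top\nabla^2F_\alpha(\w)\Delta\;\ge\;c_0\frac1n\sum_i(\Delta^\top\x_i)^2-C_0\frac1n\sum_i\mathbf{1}\big[|r_i(\w)|>T\big](\Delta^\top\x_i)^2.
\end{align*}
The first term is $\ge\tfrac12\lambda_{min}(\Sigma_x)\|\Delta\|^2$ uniformly in $\Delta$ by sub-Gaussian covariance concentration ($n\ge\Omega(d)$). For the second, $r_i(\w)=t\Delta^\top\x_i-\varepsilon_i$, so $|r_i(\w)|>T$ forces $|\varepsilon_i|>T/2$ or $|\Delta^\top\x_i|>T/2$; using $\|\Delta\|\le r$, the design event has probability at most $\exp(-c'T^2/(2\sigma_x^2r^2))$ and the noise event probability $\Pr(\varepsilon_i^2\ge T^2/4)$, so taking expectations and Cauchy--Schwarz on $\E[\mathbf{1}[\cdot](\Delta^\top\x_i)^2]$ bounds its mean by $c\sigma_x^2(\Pr(\varepsilon_i^2\ge T^2/4)^{1/2}+\exp(-c'T^2/(2\sigma_x^2r^2)))\|\Delta\|^2\le\tfrac{\lambda_{min}(\Sigma_x)}{20}\|\Delta\|^2$ by~(\ref{eqn:stationary}). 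The remaining fluctuation of $\tfrac1n\sum_i\mathbf{1}[|r_i(\w)|>T](\Delta^\top\x_i)^2$ must be controlled uniformly over $t\in[0,1]$ and over the sphere; I would truncate $|\Delta^\top\x_i|$ at a level of order $T$, apply Bernstein over a net, and discard the rare large-$|\Delta^\top\x_i|$ events via a fourth-moment bound — this is precisely where $n\ge\Omega(d\log d)$ enters, producing an absolute residual slack of order $\tfrac{d\log d}{n}T^2$. Altogether, $\langle\nabla F_\alpha(\w_\alpha)-\nabla F_\alpha(\w_*),\Delta\rangle\ge\mu\|\Delta\|^2-C\tfrac{d\log d}{n}T^2$ with $\mu\asymp\lambda_{min}(\Sigma_x)$.

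Combining the two bounds gives $\mu\|\Delta\|^2\le(\|\nabla F_\alpha(\w_*)\|+\epsilon)\|\Delta\|+C\tfrac{d\log d}{n}T^2$; rewriting the last term as $C\tfrac{d\log d}{n}\tfrac{T^2}{r}\cdot r\ge C\tfrac{d\log d}{n}\tfrac{T^2}{r}\|\Delta\|$ (using the a priori bound $\|\Delta\|\le r$), dividing by $\mu\|\Delta\|$, and substituting the gradient bound yields exactly~(\ref{eqn:st}). The step I expect to be genuinely delicate is the uniform control of the empirical process $\mathbf{1}[|r_i(\w)|>T](\Delta^\top\x_i)^2$: its summands are neither bounded nor Lipschitz in $\w$, so a crude union bound is far too lossy, and the truncation level for $|\Delta^\top\x_i|$ must be tuned to simultaneously keep the truncated part small and make the discarded tail negligible at rate $1/n$ — this balance is what pins down both the $n\ge\Omega(d\log d)$ requirement and the precise $\tfrac{d\log d}{n}\tfrac{T^2}{r}$ form of the residual term.
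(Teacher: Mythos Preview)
Your plan is essentially the paper's proof: combine approximate stationarity, a bound on $\nabla F_\alpha(\w_*)$ via the key inequality $|\varepsilon\,\phi'_\alpha(\varepsilon^2)|\le\sqrt{\alpha}/2$, and an RSC inequality for $F_\alpha$ on the ball of radius $r$. The paper obtains the latter two ingredients by citing Loh (2017) directly---the gradient bound in $\ell_\infty/\ell_1$ form and the RSC with residual $\tau\tfrac{\log d}{n}\|\Delta\|_1^2$, $\tau\asymp T^2/r^2$---and then passes to $\ell_2$ via $\|\cdot\|_1\le\sqrt d\,\|\cdot\|_2$; you instead sketch both arguments from scratch with an $\ell_2$ net, but the mechanism (including the role of~(\ref{eqn:stationary}) in killing the mean of the ``bad'' Hessian piece) is identical.

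One slip in your last step: the inequality goes the wrong way. From $C\tfrac{d\log d}{n}T^2=C\tfrac{d\log d}{n}\tfrac{T^2}{r}\cdot r$ and $r\ge\|\Delta\|$ you get a \emph{lower} bound on the residual, not the upper bound needed to absorb it as $C'\|\Delta\|$. The paper avoids this because Loh's residual is \emph{relative}, $\tfrac{CT^2}{r^2}\tfrac{d\log d}{n}\|\Delta\|_2^2$, so a single legitimate use of $\|\Delta\|_2^2\le r\|\Delta\|_2$ yields the $\tfrac{T^2}{r}\tfrac{d\log d}{n}$ term. If you keep your absolute residual $B=C\tfrac{d\log d}{n}T^2$, just solve the quadratic $\mu\|\Delta\|^2\le A\|\Delta\|+B$ directly and note $\sqrt{B}=O\big(T\sqrt{d\log d/n}\big)\le O\big(\sqrt{\alpha d\log d/n}\big)$ since $T\le\sqrt{\alpha}/2$; either route recovers~(\ref{eqn:st}).
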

{\bf Remark:} The proof of above theorem builds on some results established in~\citep{DBLP:journals/corr/Loh15}. Here, we focus on new insights brought by the above results to justify truncation. 

First, it is notable that the noise $\varepsilon_i$ could be heavy-tailed. The condition~(\ref{eqn:stationary}) imposes a lower bound for $\alpha$ due to the constraint $T\leq \sqrt{\alpha}/2$ (i.e., truncation could not be arbitrarily large). An appropriate value should depend on the distribution of noise.  Within a certain truncation level, the statistical error bound in~(\ref{eqn:st}) implies that smaller $\alpha$ may yield a smaller error. 

Second, we show that the above result of an approximate stationary point to minimizing truncated square losses can achieve a similar order of statistical error as linear least-squares regression without truncation established by \cite{audibert2011robust} under similar assumptions. In particular, under the assumptions that $\E[\varepsilon_i^4]\leq \sigma$ and a boundedness assumption of  inputs, they achieved $F(\wh) - F(\w_*)\leq O(d/n)$, where $F$ is expected square loss, $\wh$ is the optimal empirical solution to minimizing square losses. Under an eigen-value condition $\lambda_{min}(\Sigma_x)>0$ as in above theorem it implies that $\|\wh - \w_*\|_2\leq O(\sqrt{d/n})$. In contrast,  assuming $\E[\varepsilon_i^4]\leq \sigma$, we have $\Pr(\varepsilon_i^2\geq T^2/4)\leq \frac{4\E[\varepsilon_i^2]}{T^2}\leq 4\sigma^2/T^2$ by Markov inequality. Therefore by choosing a large enough  $\alpha$ (e.g., $\alpha=\Theta(\max(1/\lambda_{\min}(\Sigma_x), \log(1/\lambda_{\min}(\Sigma_x))))$, we can make~(\ref{eqn:stationary}) holds by setting $T= \sqrt{\alpha}/2$. Then the statistical error bound of $\w_\alpha$ becomes $O(\sqrt{d\log d/n})$, which is comparable to $\wh$. We note that mismatch of the $\log d$ factor is caused by different assumptions on the inputs. Nevertheless,  $O(\sqrt{d\log d/n})$ is the minimax optimal rate when $\varepsilon_i$ follows a Gaussian distribution~\citep{DBLP:conf/nips/Bhatia0KK17}. 

Lastly, we show that the result in Theorem~\ref{thm5}  is  stronger than previous results on heavy-tailed noise (including Audibert and Cantoni's results), especially with large noise. In particular, we could let $\E[\varepsilon_i^k]$ (where $k\in 2\mathbb N^+$) grows as $n$. For example, assume that $\E[\varepsilon_i^k]=n^{c}$. Let us set $\alpha=n^{\beta}$ with $\beta<1$ and $T=n^{\beta/2}/2$. By Markov inequality,  we have $\Pr(\varepsilon_i^2\geq T^2/4)\leq O(\frac{\E[\varepsilon_i^k]}{T^k})\leq O(n^{c-\beta k/2})$ Assuming that $c < \beta k/2$  and $n$ is large enough, the inequality in~(\ref{eqn:stationary}) could hold. As a result, the statistical error becomes $O(\sqrt{d\log d/n^{1-\beta}})$, which still implies consistency of a stationary solution to minimizing the truncated losses. In contrast, most previous results on heavy-tailed noise assume  $\E[\varepsilon_i^k]$  is bounded by a constant~\citep{brownlees2015empirical, hsu2016loss, bubeck2013bandits, cortes2013relative,audibert2011robust}. 

\subsection{Finding Stationary Points of Truncated Losses by  SGD}
Finally, we briefly discuss the complexity of SGD for finding stationary points of  averaged truncated losses beyond the setting of square loss and linear model as in last section.  We assume the hypothesis is characterized by $\w$ and denote the loss function by $\ell(\w; \x, y)$ and the objective function in~(\ref{prob2}) becomes $F_\alpha(\w) = 1/n\sum_{i=1}^n\phi_\alpha(\ell(\w; \x_i, y_i))$. Note that $F_\alpha(\w)$ is non-convex due to that $\phi_\alpha$ is non-convex.  We consider two cases depending on whether $\ell(\w; \x, y)$ is a convex function or not.

The complexity of SGD has been extensively studied in literature, especially when $F_\alpha(\w)$ is smooth.  When $\ell(\w; \x, y)$ is a non-convex function of $\w$ (e.g., for learning deep neural networks), if it is a smooth and  Lipschitz continuous function of $\w$, then by the smoothness of $\phi_\alpha(\cdot)$ we can show that $F_\alpha(\w)$ is a smooth function with Lipschitz continuous gradient. Hence it can find a $\epsilon$-stationary point satisfying $\E[\|\nabla F_\alpha(\w)\|^2]\leq \epsilon^2$ with a complexity of $O(1/\epsilon^4)$~\citep{DBLP:journals/siamjo/GhadimiL13a}. 

If $\ell(\w; \x, y)$ is non-smooth and non-convex, characterizing the complexity of SGD becomes difficult, though is was shown that SGD can still converge to stationary points for a broad family of non-smooth non-convex functions~\citep{damektame}. 
Nevertheless, if $\ell(\w; \x, y)$ is a non-smooth convex function, e.g., for learning a linear model with absolute loss, $ \epsilon$-insensitive loss, piecewise linear loss, we can still characterize the complexity of SGD even without smoothness of the loss function.    It is notable that gradient of non-smooth non-convex function may not be defined at some points. However, we can define sub-differentiable of a non-smooth non-convex function $g(\w)$. Let $\partial g(\w)$ denote the sub-differentiable of $g(\w)$, which consists of a set of points $\v$ satisfying: 
\begin{align*}
g(\u)\geq g(\w) + \v^{\top}(\u - \w) + o(\|\u - \w\|), \text{as } \u\rightarrow \w
\end{align*}

For a convex  function $\ell(\w)$ and a smooth truncation function $\phi_\alpha(\ell)$,  we have $\partial \phi_\alpha(\ell(\w)) = \phi_\alpha'(\ell(\w))\partial \ell(\w)$. With this, we can define $\partial F_\alpha(\w)=1/n\sum_i\partial \phi_\alpha(\ell(\w^{\top}\x_i - y_i))$. A point $\w$ is said to be stationary point of $F_\alpha(\w)$ if $\text{dist}(0, \partial F_\alpha(\w))=0$, where $\text{dist}$ denotes the distance from a point to a set. For our problem, we can establish the following convergence result of SGD for minimizing $F_\alpha(\w)$ with Lipschitz continuous convex losses. 
\begin{prop}\label{prop2}
Assume $\ell(\w; \x, y)$ is convex and satisfies $\|\partial\ell(\w^{\top}\x_i -y_i)\|\leq G$, then SGD for minimizing $F_\alpha(\w)$ can find a point $\w_\alpha$  that is close to a point $\widetilde\w_\alpha$ such that $\E[\|\w_\alpha - \widetilde \w_\alpha\|^2_2]\leq \epsilon^2$, and $\E[\text{dist}(0, \partial F_\alpha(\widetilde\w_\alpha))^2]\leq \epsilon^2$ with a complexity $O(1/\epsilon^4)$. 
\end{prop}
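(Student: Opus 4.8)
The plan is to observe that, although $F_\alpha$ is neither convex nor smooth, it belongs to the class of \emph{weakly convex} Lipschitz functions, and then to invoke the now-standard convergence theory of the stochastic subgradient method on such functions, which produces a point near a near-stationary point with iteration complexity $O(1/\epsilon^4)$. The only structural work is verifying the weak convexity; the rest is a direct application of known results.

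\textbf{Step 1: $F_\alpha$ is $\rho$-weakly convex and $G$-Lipschitz.} Fix a summand $g(\w)=\phi_\alpha(\ell(\w;\x_i,y_i))$. By Definition~\ref{def}, $\phi_\alpha$ is smooth, concave ($\phi_\alpha''\le 0$), and has nonnegative, monotonically decreasing derivative with $0\le\phi_\alpha'\le 1$; I will use the additional mild regularity (satisfied by all three examples, with $L_\phi=O(1/\alpha)$) that $\phi_\alpha'$ is $L_\phi$-Lipschitz on $\R_+$, which is the analogue of the $\kappa$-type assumption of Proposition~\ref{prop:1}. For $v=\phi_\alpha'(\ell(\w))s\in\partial g(\w)$ with $s\in\partial\ell(\w)$, concavity plus $L_\phi$-smoothness of $\phi_\alpha$ give $\phi_\alpha(b)\ge \phi_\alpha(a)+\phi_\alpha'(a)(b-a)-\tfrac{L_\phi}{2}(b-a)^2$; applying this with $a=\ell(\w)$, $b=\ell(\u)$, using $\ell(\u)-\ell(\w)\ge\langle s,\u-\w\rangle$ together with $\phi_\alpha'(\ell(\w))\ge 0$, and bounding $|\ell(\u)-\ell(\w)|\le G\|\u-\w\|$ from the Lipschitzness of $\ell$, yields $g(\u)\ge g(\w)+\langle v,\u-\w\rangle-\tfrac{L_\phi G^2}{2}\|\u-\w\|^2$. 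Averaging over $i$ shows $F_\alpha$ is $\rho$-weakly convex with $\rho=L_\phi G^2$, i.e.\ $\w\mapsto F_\alpha(\w)+\tfrac{\rho}{2}\|\w\|^2$ is convex. Moreover $\partial F_\alpha(\w)=\tfrac1n\sum_i\phi_\alpha'(\ell(\w;\x_i,y_i))\partial\ell(\w;\x_i,y_i)$, so $\|\partial F_\alpha(\w)\|\le G$ and the stochastic subgradients used by SGD are uniformly bounded by $G$; finally $F_\alpha\ge 0$ since $\phi_\alpha:\R_+\to\R_+$, so $F_\alpha$ is bounded below.

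\textbf{Step 2: convergence via the Moreau envelope.} For $\lambda>\rho$ let $F_{\alpha,\lambda}(\w)=\min_{\u}\{F_\alpha(\u)+\tfrac{\lambda}{2}\|\u-\w\|^2\}$ be the Moreau envelope and $\widetilde\w:=\mathrm{prox}_{F_\alpha/\lambda}(\w)$ the (unique, by $(\lambda-\rho)$-strong convexity) proximal point. Standard facts for weakly convex functions give that $F_{\alpha,\lambda}$ is smooth with $\nabla F_{\alpha,\lambda}(\w)=\lambda(\w-\widetilde\w)\in\partial F_\alpha(\widetilde\w)$, hence $\|\w-\widetilde\w\|=\lambda^{-1}\|\nabla F_{\alpha,\lambda}(\w)\|$ and $\mathrm{dist}(0,\partial F_\alpha(\widetilde\w))\le\|\nabla F_{\alpha,\lambda}(\w)\|$; thus a point with small envelope gradient is close to $\widetilde\w$ and $\widetilde\w$ is near-stationary. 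Running SGD on the $\rho$-weakly convex, $G$-Lipschitz, lower-bounded $F_\alpha$ with stepsize $\eta_t=\Theta(1/\sqrt{T})$ for $T$ iterations and returning a uniformly random iterate $\w_\alpha$, the convergence analysis of the stochastic subgradient method for weakly convex functions (see, e.g., \citep{damektame} and the references therein) gives, with $\lambda=2\rho$, $\E[\|\nabla F_{\alpha,\lambda}(\w_\alpha)\|^2]\le O\big((\Delta_0+\rho G^2)/\sqrt{T}\big)$, where $\Delta_0=F_\alpha(\w_1)-\inf F_\alpha$. Choosing $T=O(1/\epsilon^4)$ makes the right-hand side at most $\epsilon^2$ after absorbing the constants $\lambda,\rho$ into $\epsilon$. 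Setting $\widetilde\w_\alpha=\mathrm{prox}_{F_\alpha/\lambda}(\w_\alpha)$ and using the two identities above gives $\E[\|\w_\alpha-\widetilde\w_\alpha\|^2]=\lambda^{-2}\E[\|\nabla F_{\alpha,\lambda}(\w_\alpha)\|^2]\le\epsilon^2$ and $\E[\mathrm{dist}(0,\partial F_\alpha(\widetilde\w_\alpha))^2]\le\E[\|\nabla F_{\alpha,\lambda}(\w_\alpha)\|^2]\le\epsilon^2$, which is exactly the claim.

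\textbf{Main obstacle.} The one non-routine step is Step~1: recognizing and proving weak convexity for this \emph{concave-outer}/\emph{convex-inner} composition (the opposite orientation from the usual convex-Lipschitz-$\circ$-smooth compositions in the weak-convexity literature), and in particular extracting from the smoothness clause of Definition~\ref{def} a finite bound $L_\phi$ on $\phi_\alpha''$ so that $\rho<\infty$ — this is why the extra regularity assumption above is needed. Once $F_\alpha$ is placed in the weakly convex, Lipschitz, lower-bounded class, the remainder is a citation of existing SGD guarantees. A secondary technical point is that the output guarantee must be phrased through the proximal point $\widetilde\w_\alpha$ rather than through $\w_\alpha$ itself, since $\mathrm{dist}(0,\partial F_\alpha(\w_\alpha))$ need not be small at the SGD iterate; this is precisely why the statement asserts proximity to an auxiliary point $\widetilde\w_\alpha$.
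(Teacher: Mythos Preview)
Your proposal is correct and follows essentially the same approach as the paper: you establish $G^2 L_\phi$-weak convexity of $F_\alpha$ from the smoothness of $\phi_\alpha$, the convexity and $G$-Lipschitzness of $\ell$, and the nonnegativity of $\phi_\alpha'$, and then invoke the Davis--Drusvyatskiy SGD guarantee for weakly convex functions (the paper cites \citep{davis2018stochastic}, Theorem~2.1). Your Step~2 in fact unpacks the Moreau-envelope mechanism behind that cited theorem more explicitly than the paper does, correctly identifying $\widetilde\w_\alpha$ as the proximal point of $\w_\alpha$.
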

{\bf Remark:} The result implies even for non-smooth loss functions, SGD for learning with truncated losses can converge to a point that is close to an approximate stationary point.  The idea for proving this result is that we prove $F_\alpha(\w)$ is a weakly convex function and  then the result of SGD for minimizing weakly convex function is applicable~\citep{davis2018stochastic}.

\section{Experiments}\label{sec:app}
We provide some empirical results to demonstrate the effectiveness of of the proposed approach for learning both linear and deep models. 
We use SGD to solve ERM with truncation and without truncation.  
A standard regularization term $\lambda\|\w\|^2$ is also added to ERM.  The values of $\lambda$ and $\alpha$ are selected by cross-validation. Two loss functions will be considered, namely absolute loss and square loss. The truncation function is $\phi^{(1)}_\alpha$.  Other truncation functions offer similar trend as reported results. 
\begin{figure*}[t] 
\begin{center}
{\includegraphics[scale=0.18]{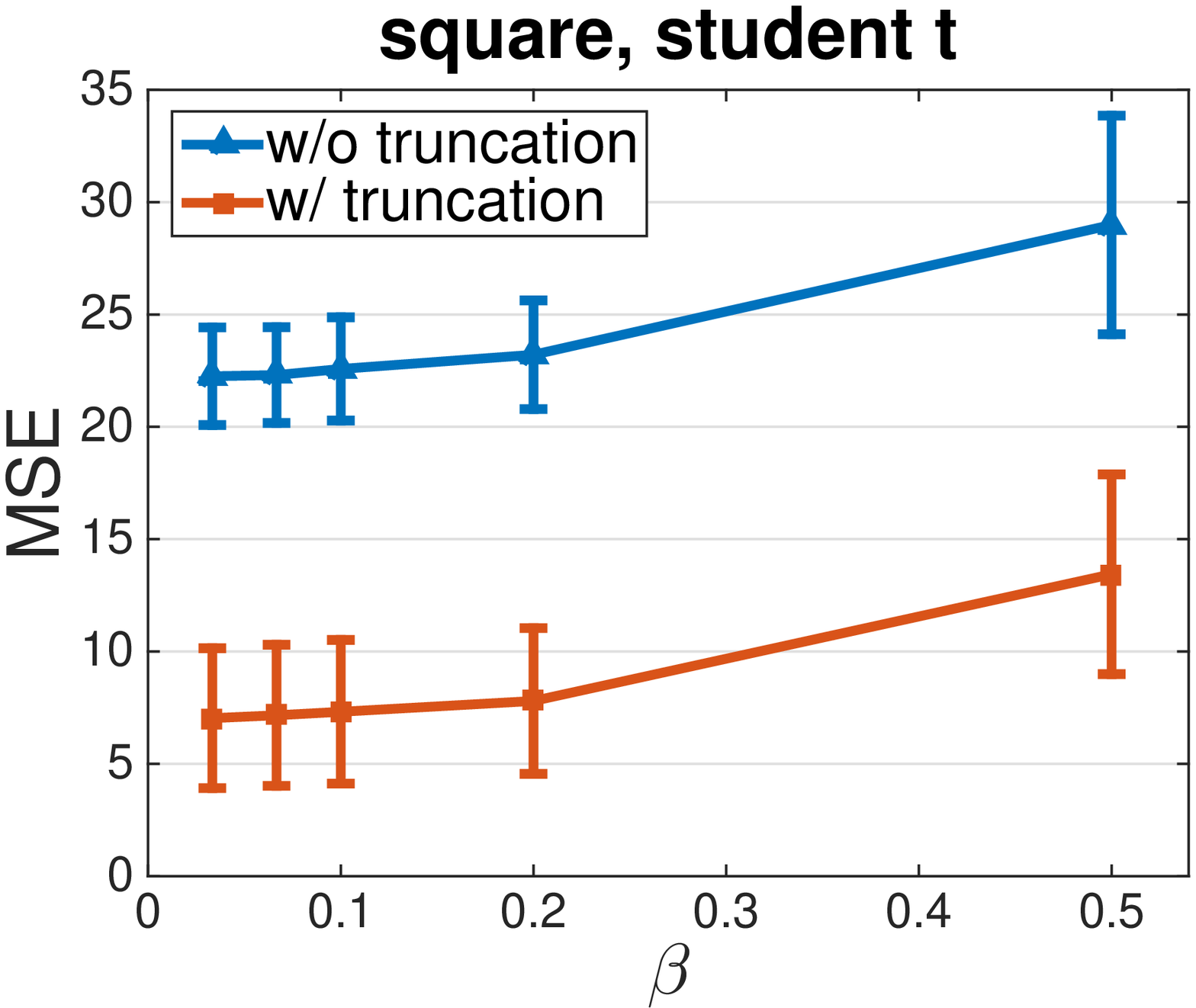}}
{\includegraphics[scale=0.18]{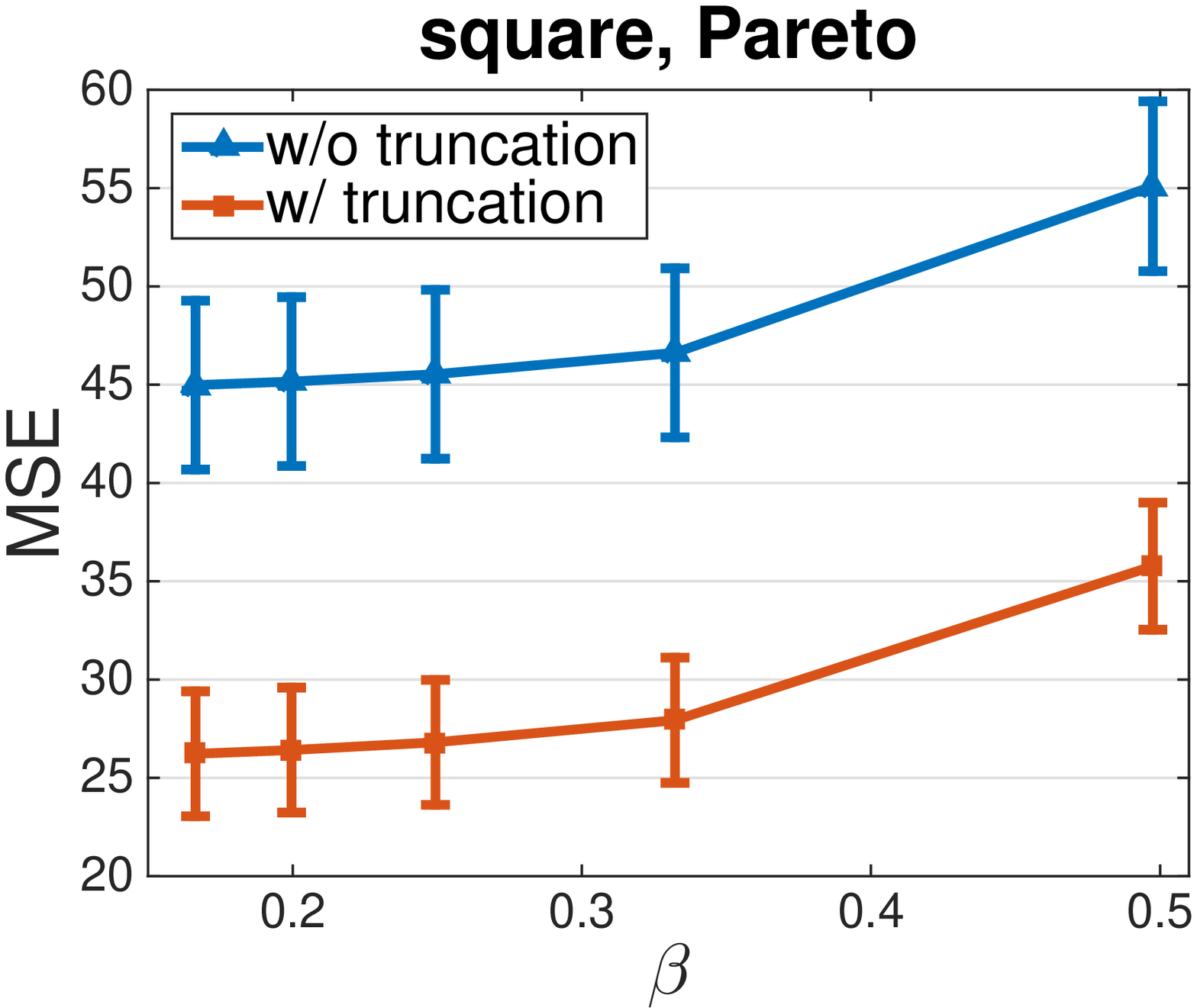}}
{\includegraphics[scale=0.18]{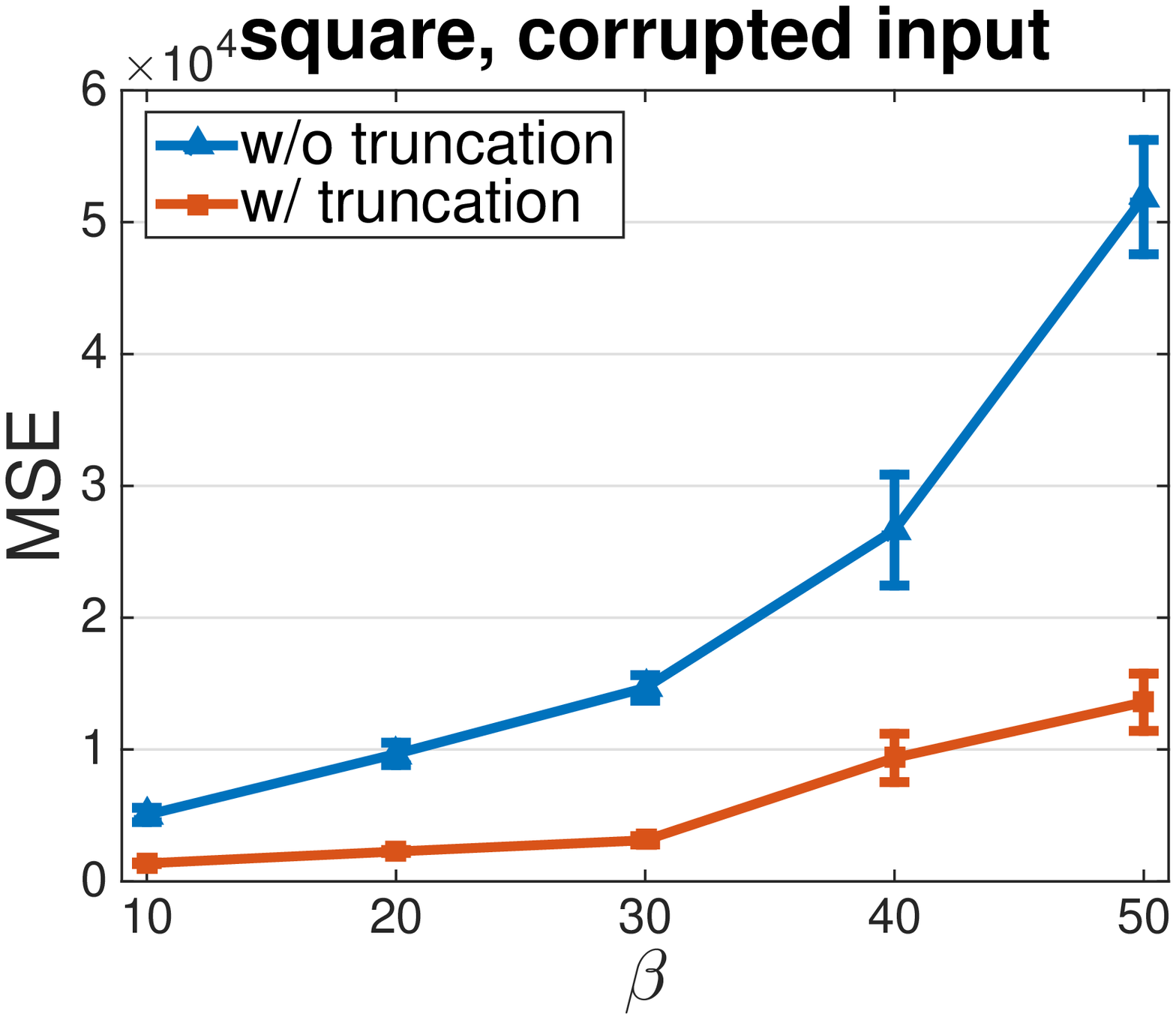}}
{\includegraphics[scale=0.18]{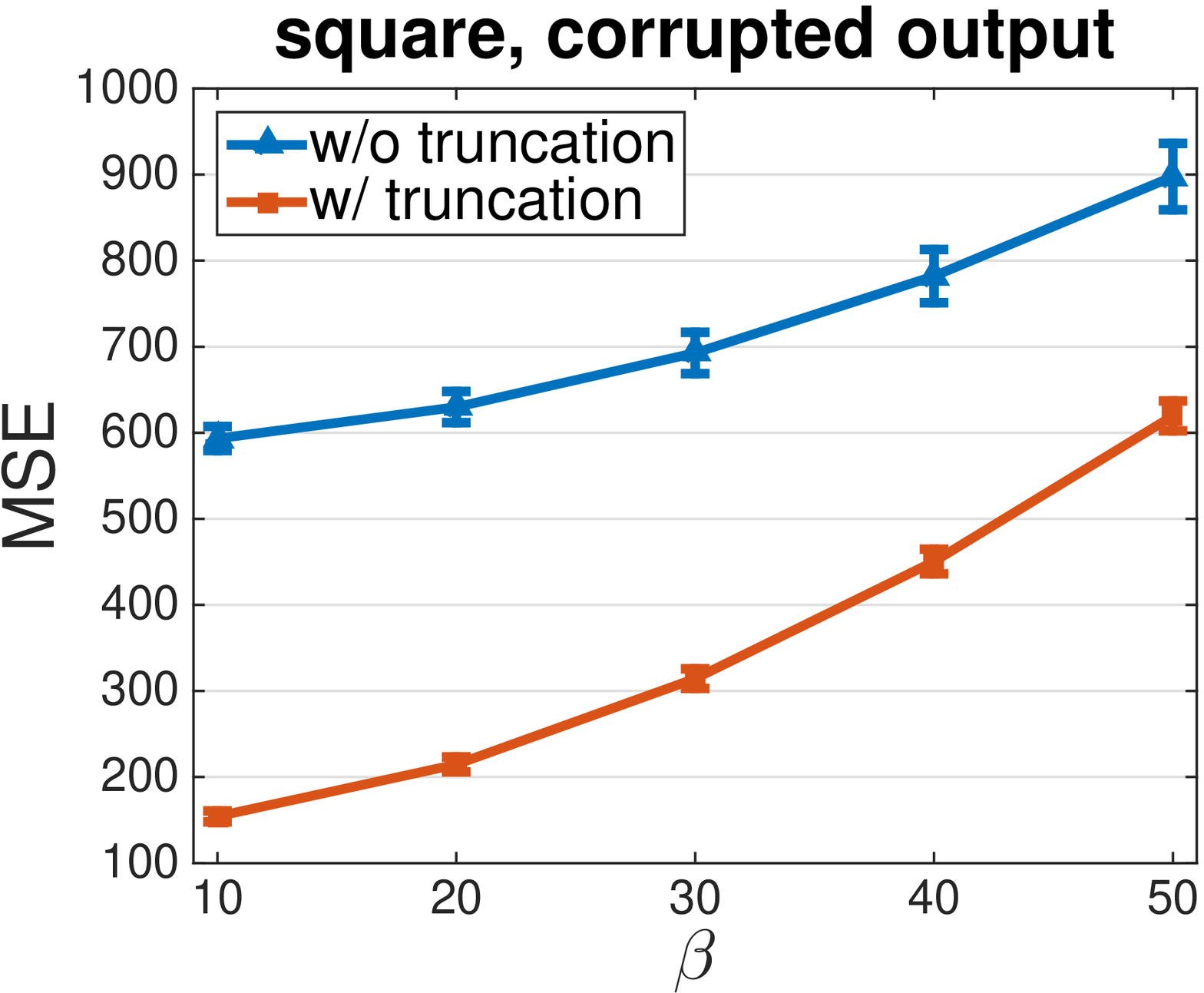}}
{ \includegraphics[scale=0.18]{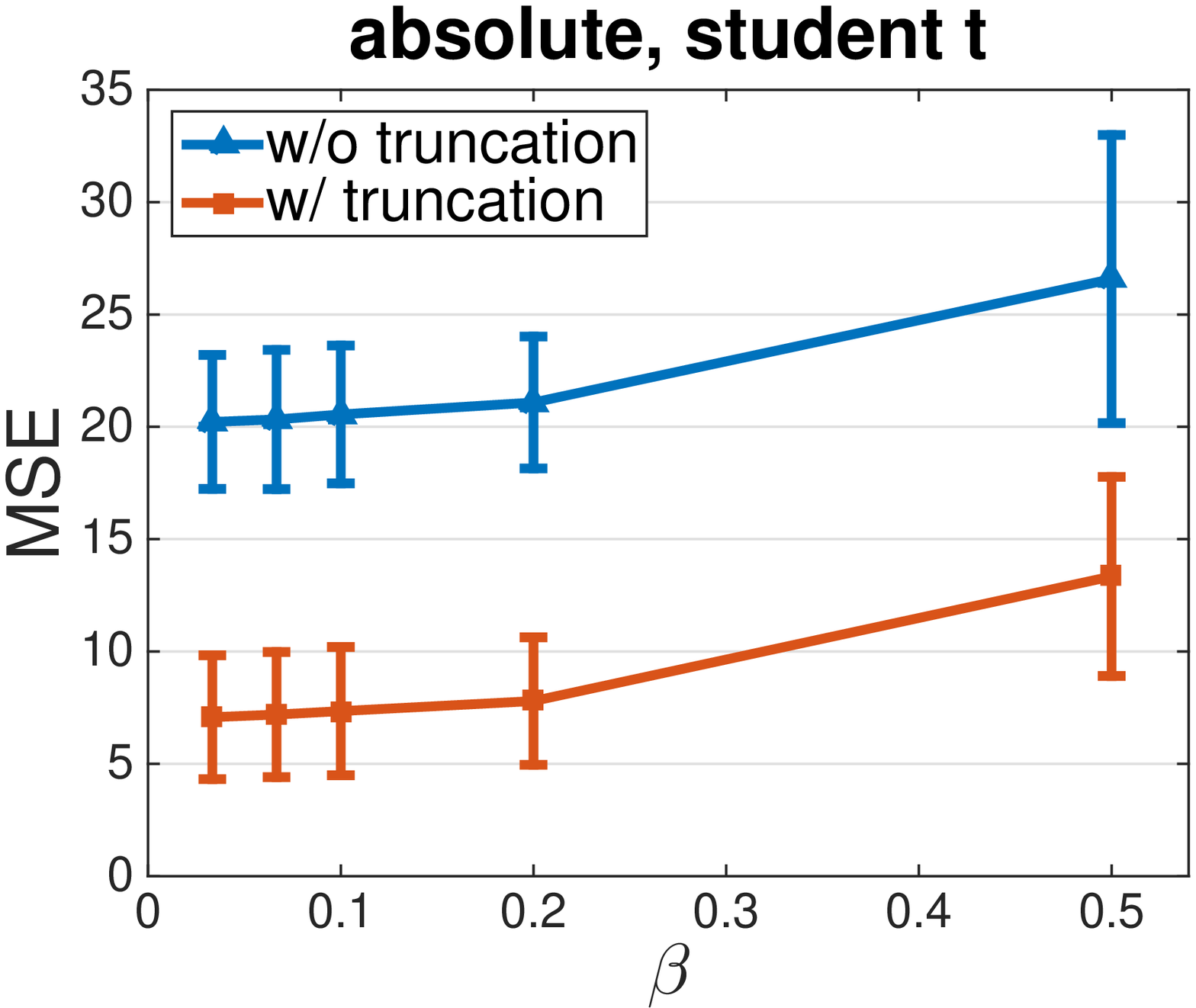}}
{ \includegraphics[scale=0.18]{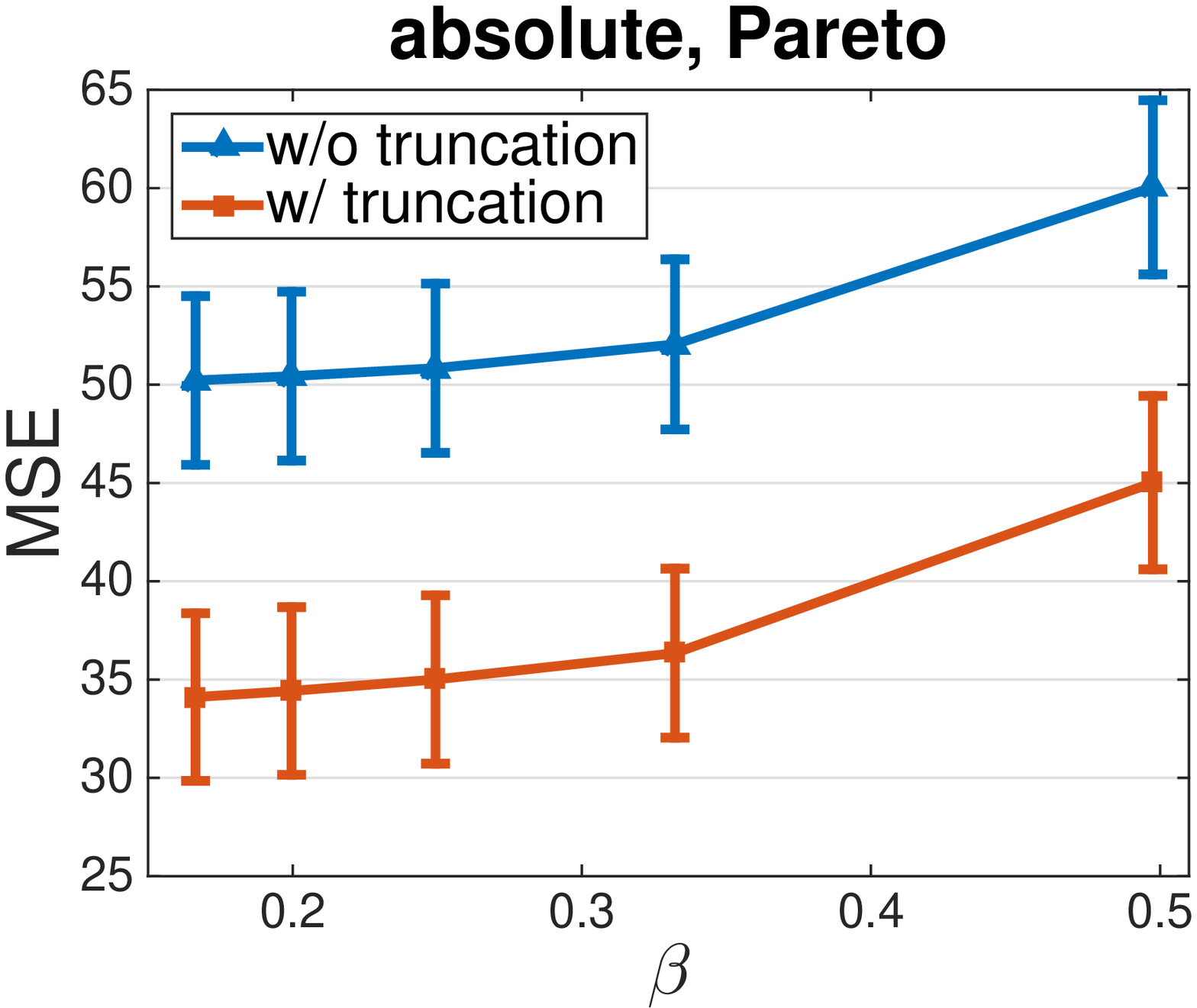}}
{ \includegraphics[scale=0.18]{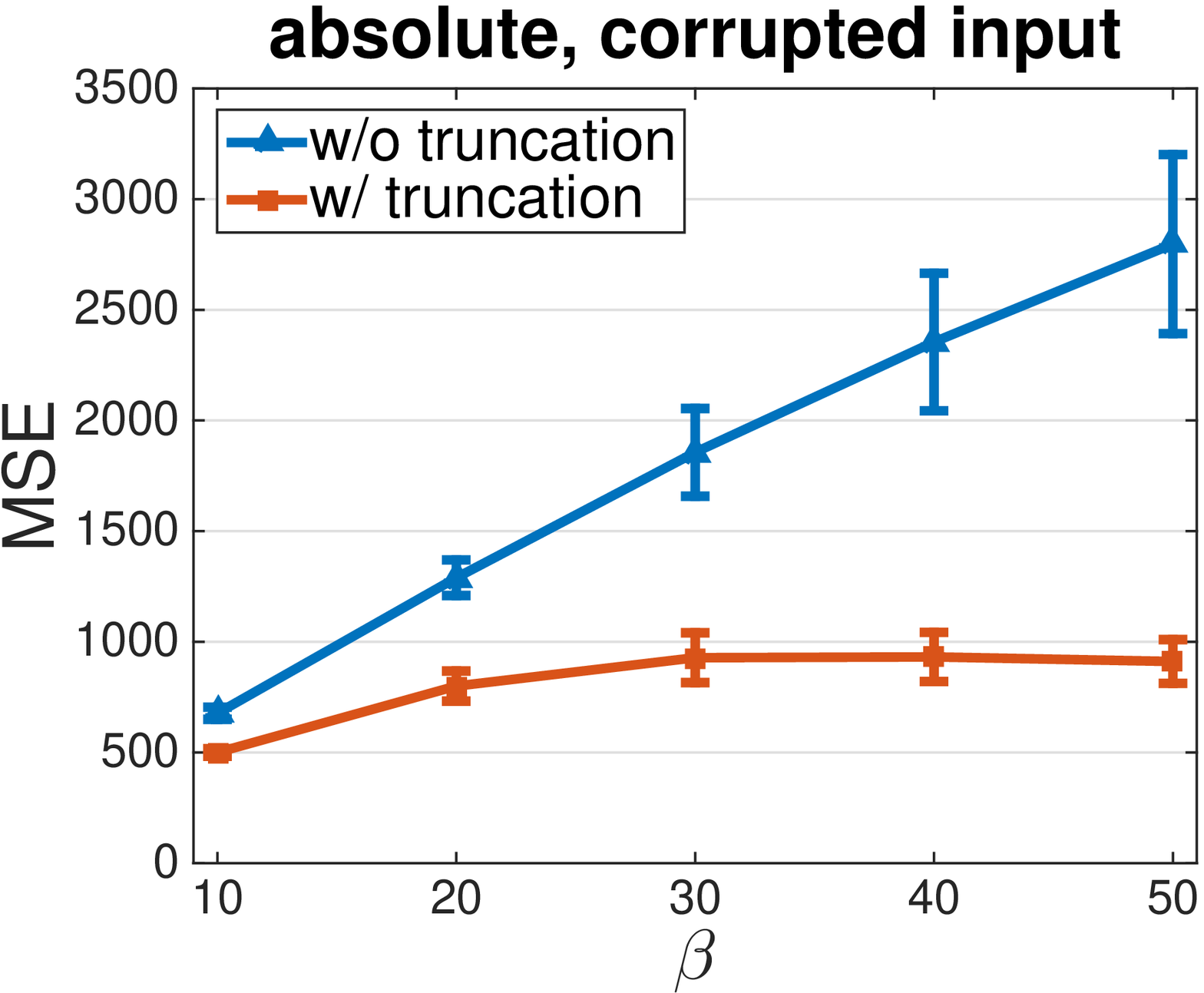}}
{ \includegraphics[scale=0.18]{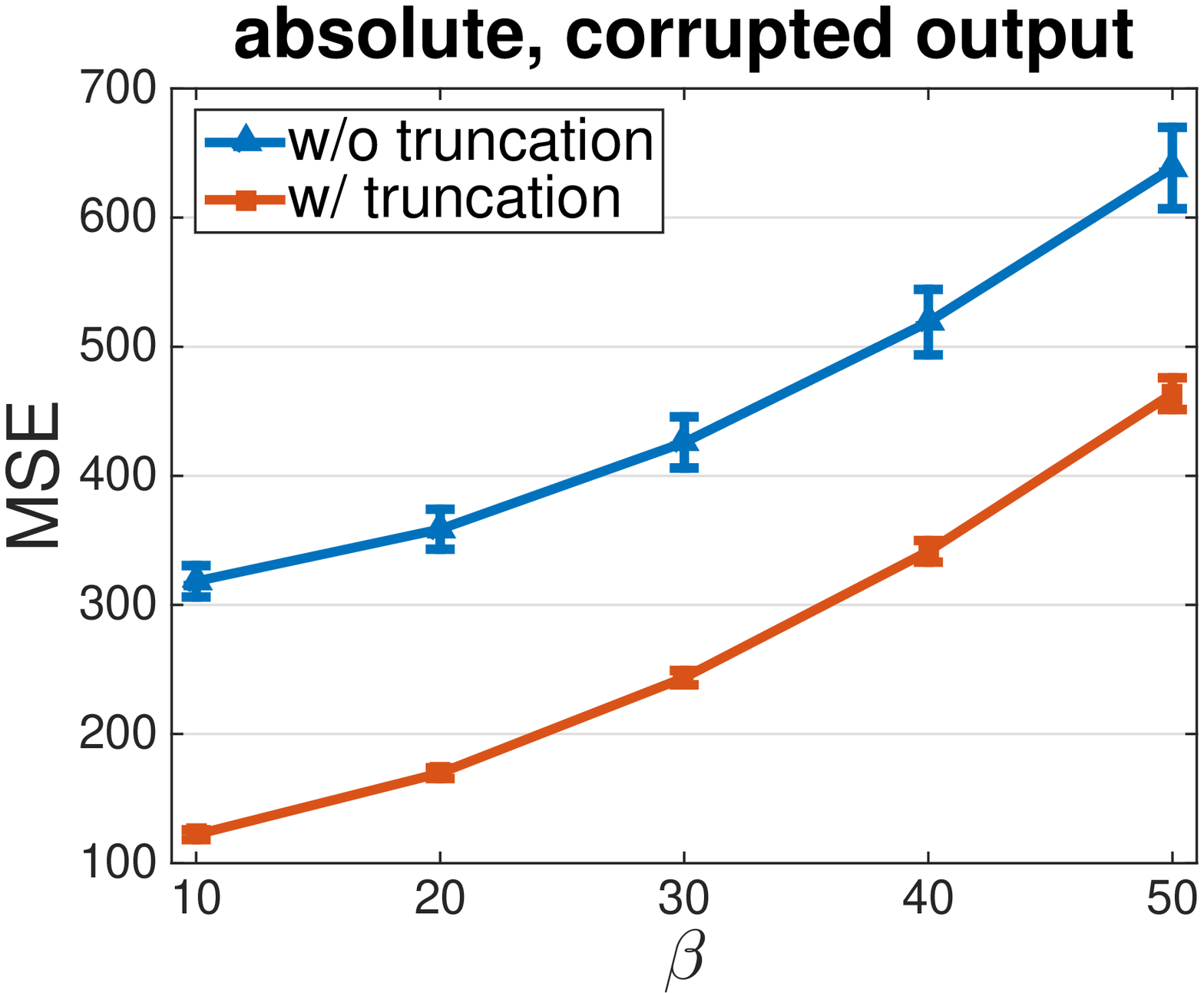}}
\caption{Comparisons of Testing Error for  w/ and w/o truncation with varying noise level.}
\label{fig:ls:abs}
\end{center}
\end{figure*}

{\bf Synthetic data.} 
 We conduct experiments on synthetic data first because it allows us to add different corruptions with varying noise level. We consider a linear regression model $y_i = \w_*^\top \x_i+ \varepsilon_i$, and two loss functions, i.e., square loss and absolute loss. 
We generate a random data matrix $X\in\R^{n_{\text{train}} \times d}$ with $n_{\text{train}} = 1000$ and $d=1000$. The entries of $X$ and $\w_*$ are generated independently with a standard Gaussian and a uniform distribution $U[0,1]$, respectively.
Then we add several types of noise into the statistical model for generating outputs. {\bf (a) student-$t$ noise} 
where the noise $\varepsilon$ follows a Student's $t$-distribution with degrees of freedom $1/\beta \in \{2, 5, 10, 15, 30\}$.
{\bf (b) Pareto noise} whre   
the noise $\varepsilon$ follows a Pareto distribution with tail parameter of $1/\beta \in \{2.01, 3.01, 4.01, 5.01, 6.01\}$, and then following by~\cite{brownlees2015empirical}, it is appropriately recentered in order to have zero mean.  {\bf (c) Corrupted output:} following by~\cite{bhatia2015robust}, a randomly generated sparse vector $\b$ is added to Gaussian noise $\varepsilon$ for generating $\y$.  The non-zero entries of $\b$ follow a uniform distribution $U[-\beta, \beta]$ with $\beta\in\{10, 20,30,40,50\}$. The sparsity is set to be $80\%$.
{\bf (d) Corrupted input:} following by~\cite{loh2012high},  $\x$ is corrupted by  $\z = \x + \x_\xi$ where $\x_\xi$ is independent of $\x$ and  follows a uniform distribution $U[-\beta, \beta]$ with $\beta\in\{10, 20,30,40,50\}$. Note that these corruptions have been considered in previous works and $\beta$ controls noise level in the corruption.  A testing dataset with the sample size of $n_{\text{test}}=1000$ is generated following the true model $y=\w_*^{\top}\x$ for evaluation. We report the testing mean-square-error (MSE) for different noise levels averaged over 5 random trials in Figure~\ref{fig:ls:abs}.
The results clearly show that the performance of learning with truncation by SGD are better than learning without truncation.  
\begin{table}[t]
\caption{Statistics of Datasets}\label{table:price:data}
\begin{center}
  \begin{tabular}{c | c| c |c| c| c }
    \hline
      \multicolumn{1}{l|}{data} &
      \multicolumn{1}{l|}{Pred.Date} &   
      \multicolumn{1}{l|}{Pred. Period} &  
      \multicolumn{1}{l|}{$n_{\text{train}}$} & 
      \multicolumn{1}{l|}{$n_{\text{test}}^\text{s}$}& 
      \multicolumn{1}{l}{$n_{\text{test}}^\text{i}$} \\
     \hline
P1 & 24, Apr & 02-08, May &588956 & 410 &	2689	\\
   \hline
P2 & 01, May& 09-15, May& 586761& 405 &  2523	\\
   \hline
P3 & 08, May & 16-22, May& 576386& 397 & 2561	\\
   \hline
P4 &  15, May & 23-29, May&564145 & 398 & 2775\\
   \hline
  \end{tabular}
\end{center}
\end{table}
\begin{table*}[t]
\caption{Comparison of Testing Error On Real Datasets}\label{table2:nonconvex:dnn:real:abs}
\begin{center}
  \begin{tabular}{c|c | c c| c c }
    \hline
     Model&\multicolumn{1}{c|}{Data} &\multicolumn{2}{c|}{Absolute  loss  (MAE)} &\multicolumn{2}{c}{Square loss (MSE) }   \\
    \hline
 &     \multicolumn{1}{l|}{\#} & 
           \multicolumn{1}{c}{w/o trunc.} & \multicolumn{1}{c|}{w/ trunc.} &  \multicolumn{1}{c}{w/o trunc.} & \multicolumn{1}{c}{w/ trunc.} \\
   \hline
linear model& house  & {6.8931} & {  5.1561}   & {66.4300} & {23.8871}  		\\

     \hline
&P1 & {  15.196} & {  14.113} & {  1482} & { 1167}    		\\
   \cline{2-6}
deep model& P2 &  {  17.766} &  {  16.797} &   {  2210} & {  1806} 		\\
   \cline{2-6}
(item-SKU level)&P3 &  {  22.104} &  {  18.642} & {  2375} & {  2049} 			\\
   \cline{2-6}
&P4 &  {  20.648} & {  14.176} &   {  2323} & {  1032}  				\\
\hline

&P1 & {   76.459} & {   74.190} &{  11726} &{  9515	}    		\\
   \cline{2-6}
deep model& P2 &  {  87.276} &  {   81.292} & {  38618} & {  15247} 		\\
   \cline{2-6}
(suppllier level)&P3 &  {  121.95} &  {  99.161} & {  28396} & {  17571}			\\
   \cline{2-6}
&P4 & {  137.06} & {  82.542} & {  33106} & {  11913}  				\\
\hline
  \end{tabular}  
\end{center}
\end{table*}

{\bf Real data.} We use a real dataset housing from libsvm website\footnote{\url{https://www.csie.ntu.edu.tw/~cjlin/libsvmtools/datasets/}} with sample size $n = 506$ to train a linear model.
We randomly select $n_\text{train} = 253$ as for training and cross-validation and the remaining as testing.
We also investigate a real-world application of learning deep neural networks  in e-commerce, and demonstrate that our theoretical results can be effectively applied to learning a deep non-linear model. The task is to forecast the weekly sales (e.g., future two weeks) of certain products and the statistics of datasets are shown in Tabel~\ref{table:price:data}). In online retailing, accurate forecasting is crucial since it helps the platform to design the promotion activities as well as online sellers to optimize the inventory strategies. A dataset of four continuous weeks in May 2017 is used for the experimental demonstration (denoted by P1$\sim$ P4). A total of 324 features including previous sales, consumer preference, and other useful information are collected. The statistics of each weekly data are included in supplement for reference. 
The DNN model has 5 layers, and ReLu is used as the activation function. In each hidden layer, the number of units is 80, and both input and output layers contain 50 units. For models learned with absolute  losses, mean-absolute-error (MAE) is used to measure the performance, while for model learned with square losses, MSE is used. The results are shown in Table~\ref{table2:nonconvex:dnn:real:abs}, which again demonstrate that the performance of learning with truncation has a significant improvement over that without truncation transformation for both linear and non-linear models. We also provide the Q-Q plots of the prediction error.
Q-Q plot is a graphical method for comparing two probability distributions by plotting their quantiles against each other~\citep{wilk1968probability}. If the two distributions are similar, the points in the Q-Q plot will approximately lie on the red line.
These results are presented in Figure~\ref{fig:QQ:house}, showing the heavy-tailed nature of the house data.
\begin{figure*}[t] 
\centering
	{\includegraphics[scale=0.2]{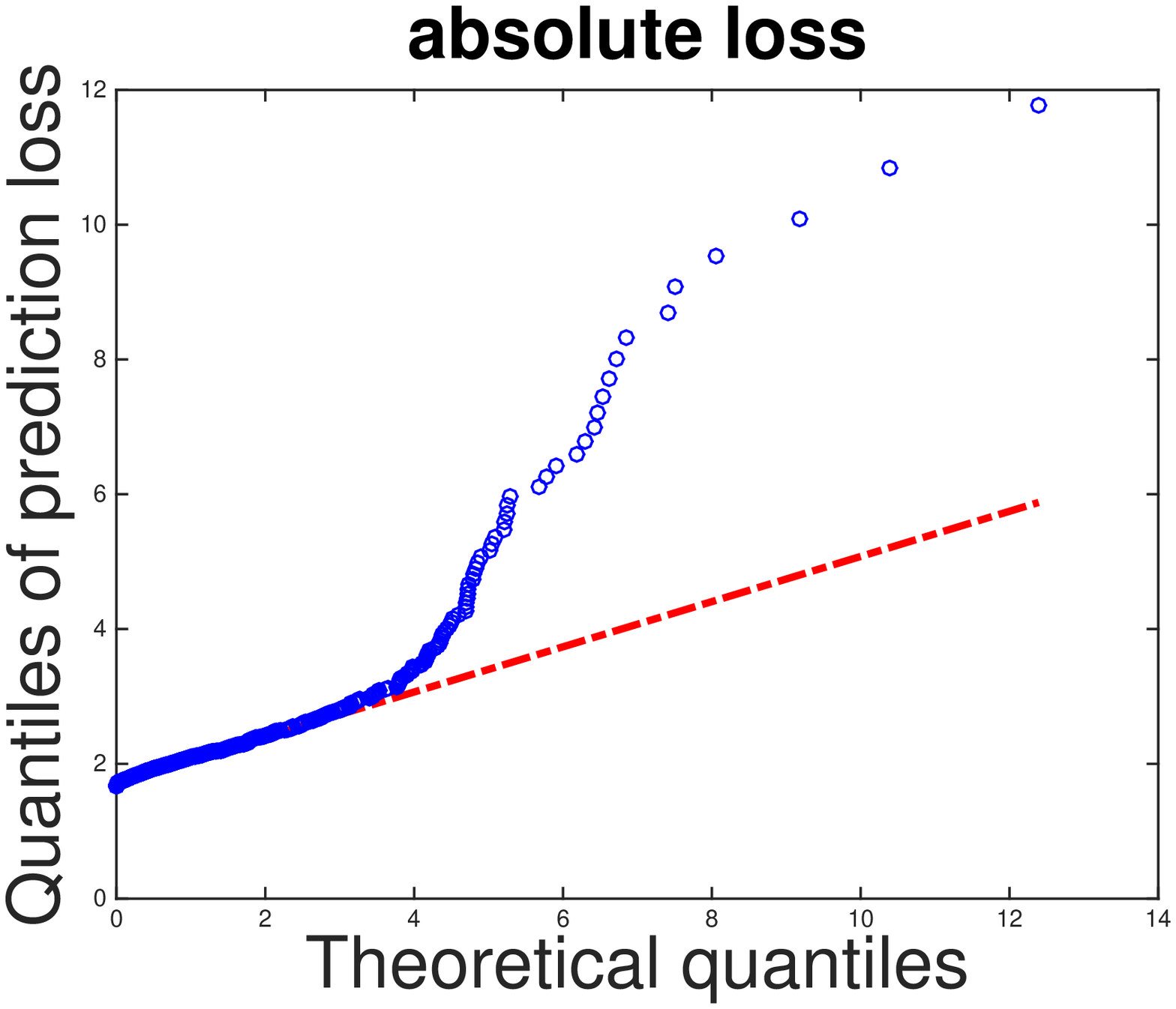} }
    {\includegraphics[scale=0.2]{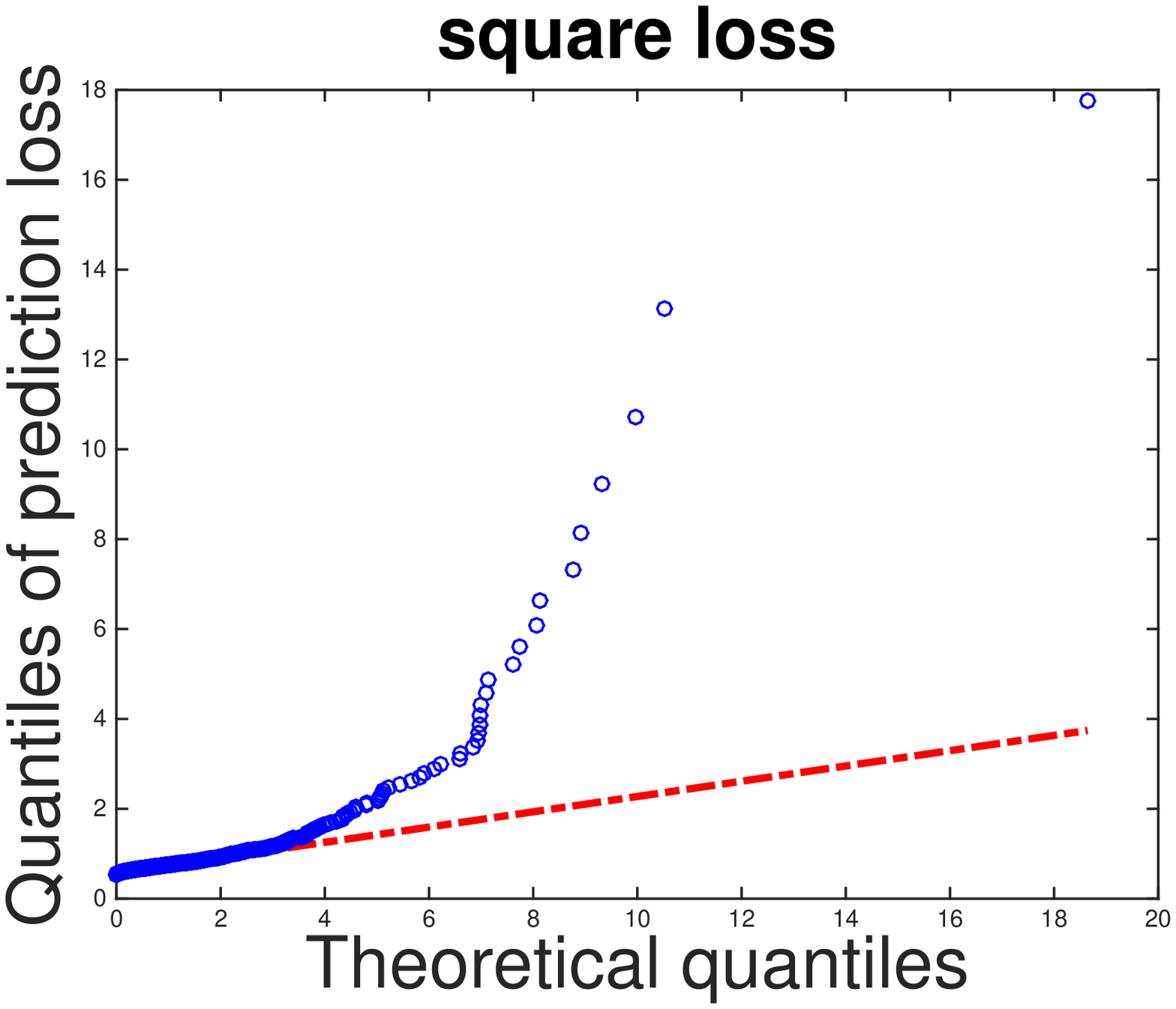}}
\caption{Q-Q plots on housing data}
\label{fig:QQ:house}
\end{figure*}

\section{Conclusions}\label{sec:con}
In this paper, we have considered non-convex learning with truncated losses from various perspectives and justified the benefit of truncation in the presence of large noise in data. For future work, we will consider analyze the statistical error of stationary points for other losses and develop stochastic algorithms for solving the involved problem with better time complexity.

\appendix
\section{Properties of truncation functions}
In this section, we first verify that three examples of trunction functions satisfy Definition 1. 


{\bf Example 1}. $\phi^{(1)}_\alpha (x) = \alpha\log(1 + \frac{x}{\alpha})$. We have
    $\phi'^{(1)}_{\alpha}(x) = \frac{1}{1+x/\alpha}$.
Then it is easy to check it satisfies condition (ii), (iii), and for any $\alpha_1\leq \alpha_2$, we have $\phi_{\alpha_1}'(x)\leq \phi'_{\alpha_2}(x)$. Since $\phi''^{(1)}_{\alpha}(x) = -\frac{1/\alpha}{(1+x/\alpha)^2}$, then $|\phi''^{(1)}_{\alpha}(x)| \leq  1/\alpha$, indicating that it satisfies condition (i).

{\bf Example 2}. $\phi^{(2)}_\alpha (x) = \alpha\log(1 + \frac{x}{\alpha} + \frac{x^2}{2\alpha^2})$. 
We have
    $\phi'^{(2)}_{\alpha}(x) = \frac{1+ \frac{x}{\alpha}}{1 + \frac{x}{\alpha} + \frac{x^2}{2\alpha^2}} =  1 -  \frac{1}{1+2\alpha/x + 2\alpha^2/x^2} $.
Then it is easy to check it satisfies condition (ii), (iii), and for any $\alpha_1\leq \alpha_2$, we have $\phi_{\alpha_1}'(x)\leq \phi'_{\alpha_2}(x)$. Since $\phi''^{(2)}_{\alpha}(x) = -\frac{1}{\alpha}\frac{\frac{x}{\alpha}+ \frac{x^2}{2\alpha^2}}{(1 + \frac{x}{\alpha} + \frac{x^2}{2\alpha^2})^2}$, then $|\phi''^{(2)}_{\alpha}(x)| \leq  1/\alpha$, indicating that it satisfies condition (i).

{\bf Example 3.} 
\begin{align*}
\phi^h_\alpha(x)=\left\{\begin{array}{ll}\frac{\alpha}{3}\left[1 - (1 - \frac{x}{\alpha})^3\right] & \text{ if } 0\leq x<\alpha\\ \frac{\alpha}{3}& \text{else}\end{array}\right.
\end{align*}
Then we have
\begin{align*}
\phi^{'h}_\alpha(x)=\left\{\begin{array}{ll}(1 - \frac{x}{\alpha})^2& \text{ if } 0\leq x<\alpha\\ 0& \text{else}\end{array}\right.
\end{align*}
Then it is easy to check it satisfies condition (ii), (iii), and for any $\alpha_1\leq \alpha_2$, we have $\phi_{\alpha_1}'(x)\leq \phi'_{\alpha_2}(x)$. Since 
\begin{align*}
\phi^{''h}_\alpha(x)=\left\{\begin{array}{ll}- \frac{2}{\alpha} (1 - \frac{x}{\alpha})& \text{ if } 0\leq x<\alpha\\ 0& \text{else}\end{array}\right.
\end{align*}
then $|\phi''^{h}_{\alpha}(x)| \leq  2/\alpha$, indicating that it satisfies condition (i).

Next, we will verify the condition $|x - \phi^{(2)}_\alpha(x)| \leq \frac{Mx^2}{\alpha}.$

\begin{prop}
For any $\alpha > 0$ and $x\ge 0$, we have
\begin{align}\label{prop1}
|x - \phi^{(1)}_\alpha(x)| \leq \frac{x^2}{2\alpha} ~\text{and}~|x - \phi^{(2)}_\alpha(x)| \leq \frac{x^2}{2\alpha}.
\end{align}
\end{prop}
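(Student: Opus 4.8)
The plan is to normalize by setting $u = x/\alpha \ge 0$, which turns both claims into $\alpha$-free inequalities about the logarithm. Since $\phi^{(1)}_\alpha(x) = \alpha\log(1+u)$, $\phi^{(2)}_\alpha(x) = \alpha\log(1+u+u^2/2)$, $x = \alpha u$ and $x^2/(2\alpha) = \alpha u^2/2$, it suffices to prove that for every $u\ge 0$,
\begin{align*}
0 \;\le\; u - \log(1+u) \;\le\; \frac{u^2}{2}, \qquad 0 \;\le\; u - \log\Bigl(1+u+\tfrac{u^2}{2}\Bigr) \;\le\; \frac{u^2}{2},
\end{align*}
and then multiply both sides by $\alpha$.

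The lower (nonnegativity) bounds follow from two elementary facts: $\log(1+u)\le u$, and $1+u+u^2/2 \le e^u$ (compare the power series of $e^u$), which gives $\log(1+u+u^2/2)\le u$. For the first upper bound I would use the monotonicity trick ``$F(0)=0$ and $F'\ge 0$ on $[0,\infty)$ imply $F\ge 0$'': with $g(u)=\log(1+u)-u+u^2/2$ we have $g(0)=0$ and $g'(u)=\frac{1}{1+u}-1+u=\frac{u^2}{1+u}\ge 0$, hence $u-\log(1+u)\le u^2/2$.

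For the second upper bound I would reduce to the first by the factorization $1+u+\tfrac{u^2}{2}=(1+u)\bigl(1+\tfrac{u^2/2}{1+u}\bigr)$, so that
\begin{align*}
u - \log\Bigl(1+u+\tfrac{u^2}{2}\Bigr) = \bigl(u-\log(1+u)\bigr) - \log\Bigl(1+\tfrac{u^2/2}{1+u}\Bigr) \;\le\; u - \log(1+u) \;\le\; \frac{u^2}{2},
\end{align*}
where the first inequality drops the nonnegative term $\log\bigl(1+\tfrac{u^2/2}{1+u}\bigr)$ and the second is the bound just proved. Rescaling everything by $\alpha$ then yields $|x-\phi^{(1)}_\alpha(x)|\le x^2/(2\alpha)$ and $|x-\phi^{(2)}_\alpha(x)|\le x^2/(2\alpha)$, i.e. the claim with $M=\tfrac12$.

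This is a routine calculus exercise and there is no substantive obstacle; the only points deserving a second look are the sign computation for $g'$ and checking that the factorization $(1+u)\bigl(1+\tfrac{u^2/2}{1+u}\bigr)=1+u+\tfrac{u^2}{2}$ indeed holds, both of which are immediate. A self-contained alternative for the second upper bound, avoiding the factorization, is to run the same monotonicity argument on $h(u)=\log(1+u+u^2/2)-u+u^2/2$, whose derivative simplifies to $h'(u)=\tfrac{u+u^2/2+u^3/2}{1+u+u^2/2}\ge 0$.
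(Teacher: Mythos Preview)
Your proposal is correct and follows essentially the same approach as the paper: normalize by $u=x/\alpha$, use $e^u\ge 1+u+u^2/2$ to determine the sign, and verify the upper bounds by showing the auxiliary function is monotone via a derivative computation. Your factorization shortcut for $\phi^{(2)}_\alpha$ is a slight variation, but the alternative you mention at the end---computing $h'(u)=\tfrac{u+u^2/2+u^3/2}{1+u+u^2/2}\ge 0$ directly---is precisely what the paper does.
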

\begin{proof}
We first need the following result to prove the proposition:
\begin{align}\label{prop1:trun1}
\exp(y) \geq 1 + y + \frac{y^2}{2}~\text{for all}~y\ge 0. 
\end{align}
Let's first condsider $ \phi^{(1)}_\alpha(x)$, to prove $|x - \alpha \log(1 + x/\alpha)| \leq \frac{1}{2\alpha} x^2$, we have to show $|x/\alpha - \log(1 + x/\alpha)| \leq \frac{1}{2\alpha^2} x^2$. Let $y = x/\alpha \ge 0$, we only need to show $|y -  \log(1 + y)| \leq \frac{y^2}{2}$.
By the inequality (\ref{prop1:trun1}) we know that $\log(1 + y) - y \leq 0$, so we only need to show $f(y) := y -  \log(1 + y) - \frac{y^2}{2}\leq 0$ for all $y\ge 0$. Since $f'(y) = -\frac{y^2}{1+y}\leq 0$, then we know $f(y)$ is a decreasing function on $y\ge 0$ thus $f(y) \leq f(0) = 0$, which give the first inequality in (\ref{prop1}).

Next let's consider $ \phi^{(2)}_\alpha(x)$. Similarly, we only need to show $f(y) := y -  \log(1 + y + y^2/2) - \frac{y^2}{2}\leq 0$ for all $y\ge 0$. Since $f'(y) = -\frac{y + y^2/2+y^3/2}{1+y+y^2/2}\leq 0$, then we know $f(y)$ is a decreasing function on $y\ge 0$ thus $f(y) \leq f(0) = 0$, which give the second inequality in (\ref{prop1}).
\end{proof}

\begin{prop}
For any $\alpha > 0$ and $x\ge 0$, we have
\begin{align}\label{prop1}
|x - \phi^{h}_\alpha(x)| \leq \frac{x^2}{\alpha},
\end{align}
\end{prop}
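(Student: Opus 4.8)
The plan is to handle the two pieces in the definition of $\phi^h_\alpha$ separately and, in each regime, reduce the claim to an elementary polynomial inequality in the rescaled variable $y = x/\alpha \ge 0$. One should first record that the two branches agree at $x=\alpha$ (both equal $\alpha/3$), so the function is well defined, and that $\phi^h_\alpha(x)\le x$ throughout, so the absolute value is just $x-\phi^h_\alpha(x)\ge 0$.

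First, for $0 \le x < \alpha$ (equivalently $0 \le y < 1$) I would expand the cube, $(1-y)^3 = 1 - 3y + 3y^2 - y^3$, giving $\phi^h_\alpha(x) = \frac{\alpha}{3}(3y - 3y^2 + y^3) = x - \alpha y^2 + \frac{\alpha}{3}y^3$, hence $x - \phi^h_\alpha(x) = \alpha y^2\bigl(1 - \tfrac{y}{3}\bigr)$. On $[0,1)$ the factor $1-y/3$ lies in $(2/3,1]$, so this quantity is nonnegative and bounded above by $\alpha y^2 = x^2/\alpha$, which is exactly the desired bound on this interval.

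Second, for $x \ge \alpha$ we have $\phi^h_\alpha(x) = \alpha/3$, so $x - \phi^h_\alpha(x) = x - \alpha/3 \ge 2\alpha/3 > 0$, and the target inequality $x - \alpha/3 \le x^2/\alpha$ rearranges (multiplying by $\alpha>0$) to $x^2 - \alpha x + \alpha^2/3 \ge 0$. This quadratic in $x$ has discriminant $\alpha^2 - \tfrac{4}{3}\alpha^2 = -\tfrac{1}{3}\alpha^2 < 0$, so it is strictly positive for every $x$, closing this case. Combining the two regimes yields $|x - \phi^h_\alpha(x)| \le x^2/\alpha$ for all $x \ge 0$, i.e.\ the constant $M=1$ works for this truncation function in condition (iv) of Definition~\ref{def}.

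There is essentially no hard step here: the only mild care is the sign/monotonicity bookkeeping for the factor $1-y/3$ in the first regime and the (trivial) discriminant computation in the second, plus checking the match at $x=\alpha$ so that the piecewise formula is legitimately used.
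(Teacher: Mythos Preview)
Your proof is correct. It follows the same overall structure as the paper --- the same substitution $y=x/\alpha$ and the same two-regime split at $x=\alpha$ --- but the execution in each regime is more direct. In the range $0\le y<1$ the paper does not compute the closed form $x-\phi^h_\alpha(x)=\alpha y^2(1-y/3)$; instead it splits further into two sub-cases according to the sign of $y-\tfrac{1}{3}[1-(1-y)^3]$ and, in each, shows that an auxiliary function $f(y)$ has nonpositive derivative and hence stays below $f(0)=0$. Your factorisation makes the nonnegativity and the upper bound $\alpha y^2$ immediate, so the sign sub-cases are unnecessary. For $x\ge\alpha$, the paper simply observes that with $M\ge 1$ the bound $y-1/3\le y^2$ is trivial for $y\ge 1$, whereas you verify the sharper inequality with $M=1$ for all $x$ via the discriminant; either works. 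In short, both proofs are elementary, but your explicit expansion is tidier and avoids the calculus bookkeeping the paper uses.
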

\begin{proof}
Let first consider $0 \leq x < \alpha$,  then we want to show $\left |x - \frac{\alpha}{3}[1-(1-\frac{x}{\alpha})^3]\right| \leq \frac{Mx^2}{\alpha}$, or equivalently $\left |\frac{x}{\alpha} - \frac{1}{3}[1-(1-\frac{x}{\alpha})^3]\right| \leq \frac{Mx^2}{\alpha^2}$. Let $y = \frac{x}{\alpha} \in[0,1)$, we only need to show  $\left |y - \frac{1}{3}[1-(1-y)^3]\right| \leq My^2$. 

(i) When $y - \frac{1}{3}[1-(1-y)^3]>0$, then we need to show $f(y):= y - \frac{1}{3}[1-(1-y)^3] - My^2 \leq 0$. In fact, $f'(y) = 1 - (1-y)^2 - 2My = 2(1-M)y - y^2$, By setting $M\ge1$, we know $f'(y)<0$.
Therefore, $f(y)\leq f(0) = 0$ for all $0\leq y < 1$. 

(ii) When $y - \frac{1}{3}[1-(1-y)^3]\leq 0$, then we need to show $f(y):=  \frac{1}{3}[1-(1-y)^3] - y - My^2 \leq 0$. In fact, $f'(y) =  (1-y)^2 - 1 - 2My =  -(1+2M)y - (1-y)y <0 $, then $f(y)\leq f(0) = 0$ for all $0\leq y < 1$. 

Next we consider $x\geq \alpha$, then we want to show $\left |x - \frac{\alpha}{3}\right| \leq \frac{Mx^2}{\alpha}$, or equivalently $\left |\frac{x}{\alpha} - \frac{1}{3}\right| \leq \frac{Mx^2}{\alpha^2}$.  Let $y = \frac{x}{\alpha} \geq 1 $, we only need to show  $\left |y - \frac{1}{3}\right| \leq My^2$. Since $y>1$, we must show  $y - \frac{1}{3} \leq My^2$. By setting $M \ge 1$, this  trivially holds. In summary, we can choose $M=1$, which complete the proof.
\end{proof}

\section{Proof of Theorem~\ref{thm1:heavy:soft:1}}
We will use the following lemma to prove this theorem. The proof of this lemma can be found later.
\begin{lemma}\label{lemma:concent:1}
Under the same setting as Theorem~\ref{thm1:heavy:soft:1}, with a probability at least $1-3\delta$, we have
\begin{align*}
&\sup_{f\in\F}|\Lambda(f) - \Lambda(f^*)| \leq C\beta(\F, \alpha)\log(2/\delta)\left(\frac{\gamma_2(\F, d_e)}{\sqrt{n}} + \frac{\gamma_1(\F, d_m)}{n}\right),
\end{align*}
where $\Lambda(f)  = P (\phi_\alpha(f))-P_n (\phi_\alpha(f))$, $C$ is a universal constant.
\end{lemma}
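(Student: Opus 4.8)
The first move is to rewrite $\Lambda(f)-\Lambda(f^*) = (P-P_n)\big(\phi_\alpha(f)-\phi_\alpha(f^*)\big)$; writing $g_f:=\phi_\alpha(f)-\phi_\alpha(f^*)$ and $X_f:=(P-P_n)(g_f)=\frac1n\sum_{i=1}^n\big(\E[g_f(Z)]-g_f(Z_i)\big)$, one has $g_{f^*}\equiv0$, so $X_{f^*}=0$ and the target reduces to controlling $\sup_{f\in\F}|X_f-X_{f^*}|$ for the stochastic process $(X_f)_{f\in\F}$ anchored at $f^*$. I would then run a generic-chaining argument: first establish that the one-dimensional increments of $(X_f)$ obey a Bernstein / sub-gamma tail, and then invoke Talagrand's chaining bound for processes with mixed sub-Gaussian and sub-exponential increments, whose governing functional is exactly $\gamma_2(\cdot,d_e)/\sqrt n+\gamma_1(\cdot,d_m)/n$.

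\textbf{Increment bound.} Write $\beta:=\beta(\F,\alpha)$. For fixed $f,f'\in\F$, set $W_i:=g_f(Z_i)-g_{f'}(Z_i)=\phi_\alpha(f(Z_i))-\phi_\alpha(f'(Z_i))$, which are i.i.d.\ with $X_f-X_{f'}=\frac1n\sum_i(\E W_i-W_i)$. Since $\phi'_\alpha\ge0$ is monotonically decreasing with $\phi'_\alpha(0)=1$, $\phi_\alpha$ is $1$-Lipschitz, and — because $\phi'_\alpha$ is smaller where its argument is larger — its effective Lipschitz behaviour over the range of $\F$ is summarized by the scalar $\beta\in(0,1]$ (non-decreasing in $\alpha$, $=1$ at $\alpha=\infty$); this gives $|W_i|\le\beta\, d_m(f,f')$ and $\E[W_i^2]\le\beta^2 d_e(f,f')^2$, with $d_m(f,f')<\infty$ by the implicit finiteness of $\gamma_1(\F,d_m)$ in the statement. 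Bernstein's inequality then yields
\[
\Pr\big(|X_f-X_{f'}|>u\big)\ \le\ 2\exp\!\big(-c\min\{\, n u^2/(\beta^2 d_e(f,f')^2),\ n u/(\beta\, d_m(f,f'))\,\}\big),
\]
i.e.\ $(X_f)$ has sub-Gaussian increments for the metric $(\beta/\sqrt n)\,d_e$ and sub-exponential increments for the metric $(\beta/n)\,d_m$.

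\textbf{Chaining.} Feeding this increment estimate into the high-probability version of Talagrand's generic-chaining theorem for mixed sub-Gaussian/sub-exponential increments (see \cite{talagrand2005generic}) and using $1$-homogeneity of the $\gamma_p$ functionals in the metric ($\gamma_2(\F,(\beta/\sqrt n)d_e)=(\beta/\sqrt n)\gamma_2(\F,d_e)$, $\gamma_1(\F,(\beta/n)d_m)=(\beta/n)\gamma_1(\F,d_m)$), one arrives, for any $\delta\in(0,1)$, with probability at least $1-3\delta$, at
\[
\sup_{f\in\F}|X_f|\ \le\ C\,\beta(\F,\alpha)\,\log(2/\delta)\Big(\frac{\gamma_2(\F,d_e)}{\sqrt n}+\frac{\gamma_1(\F,d_m)}{n}\Big),
\]
which, via $X_f=\Lambda(f)-\Lambda(f^*)$, is the claim. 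The deviation term produced by the chaining tail bound is absorbed into the $\log(2/\delta)$ factor using $\sqrt{\log(1/\delta)}\le\log(2/\delta)$ together with the fact that the $d_e$- and $d_m$-diameters of $\F$ are dominated by $\gamma_2(\F,d_e)$ and $\gamma_1(\F,d_m)$ respectively, and the constant $3$ in $1-3\delta$ is a union bound over the (constantly many) events arising when the chaining sum is split into its sub-Gaussian and sub-exponential parts.

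\textbf{Main obstacle.} The routine ingredients are Bernstein's inequality and assembling the chaining sum; the delicate parts will be (i) making the ``effective Lipschitz constant'' reduction precise — justifying, uniformly over $f,f'\in\F$, that composition with $\phi_\alpha$ contracts both the $d_e$- and the $d_m$-geometry by the \emph{same} scalar $\beta(\F,\alpha)$, keeping it $\le1$ and non-decreasing in $\alpha$ (this is where monotonicity of $\phi'_\alpha$ and $\phi'_\alpha\to0$ enter, and it is the step genuinely particular to truncation functions), and (ii) pinning down the precise mixed-tail generic-chaining inequality that delivers a deviation bound \emph{linear} in $\log(1/\delta)$ with $\gamma_2(\F,d_e)$ attached to $n^{-1/2}$ and $\gamma_1(\F,d_m)$ to $n^{-1}$, and reconciling its universal constant and the $1-3\delta$ bookkeeping with the stated form. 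Measurability of $\sup_{f\in\F}|X_f|$ for non-separable $\F$ is handled in the standard way by reducing to a countable dense net.
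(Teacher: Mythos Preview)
Your proposal is correct and follows essentially the same route as the paper: bound the centered increments $C_i(f,f')=\phi_\alpha(f(Z_i))-\phi_\alpha(f'(Z_i))-\E[\cdot]$ via the Lipschitz property of $\phi_\alpha$, apply Bernstein's inequality to obtain a mixed sub-Gaussian/sub-exponential tail, and then invoke a generic-chaining bound for such processes. The paper resolves your two ``delicate parts'' simply by (i) taking $\beta(\F,\alpha):=\sup_{f\in\F,Z}\phi_\alpha'(f(Z))$ as the definition (so the contraction of both $d_e$ and $d_m$ by the same scalar is immediate from the mean-value inequality, and monotonicity in $\alpha$ follows from Definition~\ref{def}(iv)), and (ii) citing Theorem~12 and inequality~(14) of \cite{brownlees2015empirical} for the mixed-tail chaining inequality with the $\log(2/\delta)$ deviation factor, rather than Talagrand directly.
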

\begin{proof}[Proof of Theorem~\ref{thm1:heavy:soft:2}]
By (\ref{eqn:em}), we know $\widehat f = \arg\min_{f\in\mathcal F} P_n (\phi_\alpha(f))$, and thus $P_n (\phi_\alpha(\widehat f)) -P_n (\phi_\alpha(f^*)) \leq 0$, where $f^* = \arg\min_{f\in\mathcal F} P(f)$. Then we have
\begin{align*}
P(\widehat f )- P (f^*) =& [P(\widehat f) - P (\phi_\alpha(\widehat f))] + [P (\phi_\alpha(\widehat f)) - P_n (\phi_\alpha(\widehat f))]+ [P_n (\phi_\alpha(\widehat f)) -P_n (\phi_\alpha(f^*))]\\
&+ [P_n (\phi_\alpha(f^*))- P (\phi_\alpha(f^*))]+ [P (\phi_\alpha(f^*))- P(f^*)]\\
\leq&  [P(\widehat f) - P (\phi_\alpha(\widehat f))] + [P (\phi_\alpha(\widehat f)) - P_n (\phi_\alpha(\widehat f)) + [P_n (\phi_\alpha(f^*))- P (\phi_\alpha(f^*))] \\ &+ [P (\phi_\alpha(f^*))- P(f^*)]\\
\leq &  [P (\phi_\alpha(\widehat f)) - P_n (\phi_\alpha(\widehat f))] + [P_n (\phi_\alpha(f^*))- P (\phi_\alpha(f^*))] + \frac{2M\sigma^2}{\alpha}.
\end{align*}
where the last inequality is derived using the fact that $\E[|X-\phi_\alpha(X)|] \leq \E\left[\frac{M}{2\alpha}X^2\right]$ for a random variable $X$.
Then by Lemma~\ref{lemma:concent:1}, with a probability at least $1-3\delta$,
\begin{align*}
&P(\widehat f )- P (f^*) \leq C\beta(\F, \alpha)\log(2/\delta)\left(\frac{\gamma_2(\F, d_e)}{\sqrt{n}} + \frac{\gamma_1(\F, d_m)}{n}\right) + \frac{2M\sigma^2}{\alpha}.
\end{align*}
\end{proof}
\subsection{Proof of Lemma~\ref{lemma:concent:1}}
\begin{proof}
This proof is similar to the analysis in Proposition 5 and Lemma 6 from \citep{brownlees2015empirical}. For completeness, we include it here.
For any $f, f' \in \mathcal F$, we first know that $n(\Lambda(f) - \Lambda(f'))$ is the summation of the following independent random variables with zero mean:
\begin{align*}
C_i(f, f') = \phi_\alpha(f(Z_i)) -  \phi_\alpha(f'(Z_i)) - [\E[ \phi_\alpha(f(Z))] - \E[ \phi_\alpha(f'(Z))]] \leq 2 \beta(\F, \alpha) d_m(f,f'),
 \end{align*}
 where the last inequality is due to $\phi_\alpha$ is Lipschitz continuous and $\beta(\F, \alpha)=\sup_{f,Z}\phi_\alpha'(f(Z))$. On the other hand, 
 \begin{align*}
\sum_{i=1}^{n}\E[C_i(f, f')^2] \leq \sum_{i=1}^{n} \E[\phi_\alpha(f(Z_i)) -  \phi_\alpha(f'(Z_i))]^2 \leq  n \beta(\F, \alpha)^2 d_e(f,f')^2.
 \end{align*}
 Then by using Bernstein's inequality we have for any $f, f' \in \mathcal F$ and $\theta > 0 $, 
 \begin{align*}
 \text{Pr}(|\Lambda(f) - \Lambda(f')|> \theta) \leq 2\exp\left( - \frac{n \theta^2}{2(\beta(\F, \alpha)^2d_e(f,f')^2 + \theta \beta(\F, \alpha) d_m(f,f')/3 )} \right).
 \end{align*}
 Then by using Theorem 12 and inequality (14) from \citep{brownlees2015empirical}, let $f' = f^*$ we get
 \begin{align*}
&\sup_{f\in\F}|\Lambda(f) - \Lambda(f^*)| \leq C\beta(\F, \alpha)\log(2/\delta)\left(\frac{\gamma_2(\F, d_e)}{\sqrt{n}} + \frac{\gamma_1(\F, d_m)}{n}\right),
\end{align*}
where $C$ is a constant.
\end{proof}

\section{Proof of Corollary~\ref{cor:heavy:soft:1}}
\begin{proof}
By assumption we know that there exists a constant $D>0$ such that 
$\max_{X\in\X, h, h'\in\H}|h(X) - h'(X)| \leq D$. Then for any $X \in \X$, by the Lipschitz continuity  of $\ell$ function, we know that 
\begin{align*}
|\ell(h(X),Y) - \ell(h'(X),Y)| \leq  L |h(X) - h'(X)| \leq LD.
\end{align*}
where $L$ is the Lipschitz constant of $\ell()$ with respect to its first argument. 
By the definition of $\mathcal H$,
Since for any $f, f'\in\F$, we have $d_m(f, f') \leq L d_m(h, h')$, where $f=\ell(h(\cdot), \cdot)$ and $f' =\ell(h'(\cdot), \cdot)$. Hence an $\epsilon/L$-cover of $\mathcal H$ under the metric $d_m$ induces an $\epsilon$-cover of $\F$ under the metric $d_m$.  Therefore, we have
\begin{align*}
\log N(\mathcal F, \epsilon, d_m) \leq \log N(\mathcal H, \epsilon/L, d_m). 
\end{align*}
Since $\mathcal H$ is a compact set under distance measure $d_m$ by the assumption, its covering number is finite~\citep{cucker2002mathematical}. Then 
\begin{align*}
\gamma_1(\F, d_m)\leq \int_0^1{\log N(\F,\epsilon, d_m)}d\epsilon \leq \int_0^1{\log N(\mathcal H,\epsilon/L, d_m)}d\epsilon<\infty.
\end{align*}
Similarly, 
\begin{align*}
\gamma_2(\F, d_e)&\leq \int_0^1{\log N(\F,\epsilon, d_e)}^{1/2}d\epsilon \leq \int_0^1{\log N(\F,\epsilon, d_m)}^{1/2}d\epsilon \leq \int_0^1{\log N(\mathcal H,\epsilon/L, d_m)}^{1/2}d\epsilon\\
&\leq \infty
\end{align*}
By setting $\alpha\geq \Omega(\sqrt{n})$ in Theorem~\ref{thm1:heavy:soft:1}, we get the result.
\end{proof}

\section{Proof of Theorem~\ref{thm1:heavy:soft:2}}
We will use the following lemma to prove this theorem. The proof of this lemma can be found later.
\begin{lemma}\label{lemma:com:sup}
Under the same setting as Theorem~\ref{thm1:heavy:soft:2}, with a probability at least $1-3\delta$, we have
\begin{align*}
&\sup_{f\in\F}|\Lambda(f) - \Lambda(f^*)| \leq C\beta(\F, \alpha)\max(\Gamma_\delta, \Delta(\F, d_e))\sqrt{\frac{\log\left(\frac{8}{\delta}\right)}{n}},
\end{align*}
where $\Lambda(f)  = P (\phi_\alpha(f))-P_n (\phi_\alpha(f))]$, $C$ is a universal constant.
\end{lemma}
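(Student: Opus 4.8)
\textbf{Proof proposal for Lemma~\ref{lemma:com:sup}.}
The plan is to run the empirical-process argument of \citep{brownlees2015empirical} (their Lemma~6 and the square-loss analysis), but to replace the sup-metric $d_m$ that appeared in Lemma~\ref{lemma:concent:1} by the empirical $L_2$ metric $d_s$, paying for this by making the chaining bound data-dependent through $\gamma_2(\F, d_s)$ and then derandomizing it via $\Gamma_\delta$. Write $g_f := \phi_\alpha\circ f - \phi_\alpha\circ f^*$, so that $\Lambda(f) - \Lambda(f^*) = (P - P_n) g_f$ is a centered empirical process indexed by $f\in\F$ that vanishes at $f=f^*$; its supremum is what must be controlled. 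First I would symmetrize, $\E\sup_{f}|(P-P_n)g_f| \le 2\,\E\,\E_\epsilon \sup_f \big|\tfrac1n\sum_{i=1}^n \epsilon_i g_f(Z_i)\big|$ with $\epsilon_1,\dots,\epsilon_n$ independent Rademacher signs, and then invoke the Ledoux--Talagrand contraction principle: since $\phi_\alpha$ is Lipschitz on the relevant range with constant $\beta(\F,\alpha)=\sup_{f,Z}\phi_\alpha'(f(Z))$, the Rademacher process indexed by $\{g_f\}$ is dominated by $\beta(\F,\alpha)$ times the one indexed by $\{f-f^*\}$, i.e.\ $\E_\epsilon\sup_f\big|\tfrac1n\sum_i\epsilon_i g_f(Z_i)\big| \le \beta(\F,\alpha)\,\E_\epsilon\sup_f\big|\tfrac1n\sum_i\epsilon_i (f(Z_i)-f^*(Z_i))\big|$.

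Next, conditionally on $Z_1,\dots,Z_n$, the process $f\mapsto \tfrac1n\sum_i\epsilon_i(f(Z_i)-f^*(Z_i))$ has sub-Gaussian increments with respect to the random metric $d_s(f,f')/\sqrt n$, so Talagrand's majorizing-measures (generic chaining) bound gives $\E_\epsilon\sup_f|\cdots| \le c\,\gamma_2(\F, d_s)/\sqrt n$. On the event $\{\gamma_2(\F,d_s)\le\Gamma_\delta\}$, which by the very definition of $\Gamma_\delta$ has probability at least $1-\delta/8$, this conditional expectation is at most $c\,\Gamma_\delta/\sqrt n$; combined with the previous paragraph, $\E_\epsilon\sup_f|(P-P_n)g_f|\le c\,\beta(\F,\alpha)\,\Gamma_\delta/\sqrt n$ on that event.

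It then remains to pass from this (conditional) expectation to a high-probability bound for $W:=\sup_f|(P-P_n)g_f|$. I would apply a Talagrand/Bousquet-type concentration inequality for suprema of centered empirical processes: the ``weak variance'' entering it is $\sup_f \mathrm{Var}(g_f(Z)) \le \beta(\F,\alpha)^2 \sup_f d_e(f,f^*)^2 \le \beta(\F,\alpha)^2\Delta(\F,d_e)^2$, which is finite by Assumption~\ref{ass:1} (indeed $\Delta(\F,d_e)\le 2\sigma$), so with probability at least $1-\delta$ the deviation of $W$ from $\E W$ is of order $\beta(\F,\alpha)\Delta(\F,d_e)\sqrt{\log(8/\delta)/n}$ up to a lower-order term absorbed using $\Gamma_\delta\ge\Delta(\F,d_e)$. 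Intersecting this event with the symmetrization step and with $\{\gamma_2(\F,d_s)\le\Gamma_\delta\}$ costs at most $3\delta$ in total; on that intersection the mean term is $\lesssim \beta(\F,\alpha)\Gamma_\delta/\sqrt n$ and the fluctuation term is $\lesssim \beta(\F,\alpha)\Delta(\F,d_e)\sqrt{\log(8/\delta)/n}$, and bounding each by their maximum yields
\[
\sup_{f\in\F}|\Lambda(f)-\Lambda(f^*)| \le C\,\beta(\F,\alpha)\,\max\big(\Gamma_\delta,\Delta(\F,d_e)\big)\sqrt{\frac{\log(8/\delta)}{n}},
\]
which is the claim.

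\textbf{Main obstacle.} The delicate point is the concentration step: because $\phi_\alpha(f(Z))$ need not be a bounded random variable when $Y$ is heavy-tailed, one cannot use bounded-differences/McDiarmid directly, and must instead use a concentration bound that only sees the weak variance (finite here by Lipschitzness of $\phi_\alpha$ and Assumption~\ref{ass:1}) while the unbounded contribution is tamed through the slow (logarithmic) growth of $\phi_\alpha$ and the chaining surrogate $\Gamma_\delta$. Getting the right interplay between the random metric $d_s$, its high-probability bound $\Gamma_\delta$, and the weak-variance diameter $\Delta(\F,d_e)$, so that everything collapses into the single factor $\max(\Gamma_\delta,\Delta(\F,d_e))$, is exactly the technical content one borrows from Lemma~6 of \citep{brownlees2015empirical}; the rest (symmetrization, contraction with constant $\beta(\F,\alpha)$, chaining in $d_s$) is routine.
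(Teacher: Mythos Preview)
Your high-level plan---symmetrize, exploit the Lipschitz property of $\phi_\alpha$ with constant $\beta(\F,\alpha)$, chain conditionally in the empirical metric $d_s$, and derandomize via $\Gamma_\delta$---matches the paper's. The gap is in the step where you pass from a bound on the (conditional) \emph{expectation} of $W=\sup_f|(P-P_n)g_f|$ to a high-probability bound: you invoke a Talagrand/Bousquet inequality for suprema. Those inequalities require a uniform envelope $|g_f|\le b$ a.s.\ (or at least a sub-exponential envelope, as in Adamczak), and here $g_f(Z)=\phi_\alpha(f(Z))-\phi_\alpha(f^*(Z))$ is in general unbounded---Definition~\ref{def} does not force $\phi_\alpha$ to be bounded (e.g.\ $\phi_\alpha^{(1)}(x)=\alpha\log(1+x/\alpha)\to\infty$), and the only control is the second-moment bound of Assumption~\ref{ass:1}. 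With merely $\E[g_f(Z)^2]<\infty$, no sup-concentration of Talagrand type applies; the ``lower-order term'' you allude to would be $b\log(1/\delta)/n$ with $b=\infty$. You flag this yourself as the main obstacle, but the remedy you sketch (``a concentration bound that only sees the weak variance'') does not exist in the needed generality.

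The paper (following Theorem~7 of \citep{brownlees2015empirical}, not Lemma~6) sidesteps this entirely: it never bounds $\E W$ and then concentrates. Instead it (i) introduces a \emph{ghost sample} $Z'_1,\dots,Z'_n$ and forms the doubly symmetrized process $W(f)=\tfrac1n\sum_i\varepsilon_i[\phi_\alpha(f(Z_i))-\phi_\alpha(f(Z'_i))]$; (ii) conditional on both samples this is a Rademacher sum with sub-Gaussian increments in the data-dependent metric $d_{s,s'}$, so generic chaining yields an MGF bound $\E_\varepsilon\exp(\lambda\sup_f|W(f)-W(f^*)|)\le 2\exp(c\lambda^2\gamma_2(\F,d_{s,s'})^2)$; (iii) on the event $\{\gamma_2(\F,d_s)\le\Gamma_\delta,\ \gamma_2(\F,d_{s'})\le\Gamma_\delta\}$ one has $\gamma_2(\F,d_{s,s'})\le 2\beta(\F,\alpha)\Gamma_\delta/\sqrt n$, and Markov applied to the MGF gives a \emph{tail} bound on the symmetrized supremum directly; (iv) finally, the desymmetrization Lemma~3.3 of \citep{geer2000applications} transfers this tail bound back to $\sup_f|\Lambda(f)-\Lambda(f^*)|$, and its precondition---that $\Pr(|\Lambda(f)-\Lambda(f^*)|\ge\theta)\le 1/2$ for each fixed $f$---is checked by Chebyshev using only $\mathrm{Var}(\Lambda(f)-\Lambda(f^*))\le \beta(\F,\alpha)^2\Delta(\F,d_e)^2/n$. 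That is where $\Delta(\F,d_e)$ enters, not through any concentration of the supremum. In short, the route is ``tail bound on the symmetrized process via conditional MGF + Markov, then desymmetrize'', not ``expectation bound + Talagrand''; replacing your step~5 by this argument closes the gap.
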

\begin{proof}[Proof of Theorem~\ref{thm1:heavy:soft:2}]
Similar to the proof of Theorem~\ref{thm1:heavy:soft:1}, we have
\begin{align*}
P(\widehat f )- P (f^*) 
\leq   [P (\phi_\alpha(\widehat f)) - P_n (\phi_\alpha(\widehat f))] + [P_n (\phi_\alpha(f^*))- P (\phi_\alpha(f^*))] + \frac{2M\sigma^2}{\alpha}.
\end{align*}
Then by Lemma~\ref{lemma:com:sup}, with a probability at least $1-3\delta$,
\begin{align*}
&P(\widehat f )- P (f^*) \leq  C\beta(\F, \alpha)\max(\Gamma_\delta, \Delta(\F, d_e))\sqrt{\frac{\log\left(\frac{8}{\delta}\right)}{n}} + \frac{2M\sigma^2}{\alpha}.
\end{align*}
Then by setting $\alpha \geq \sqrt{n \sigma^2/(2\log(1/\delta))}$, we get 
\begin{align*}
&P(\widehat f) - P(f^*)  \leq  O\left(\max(\Gamma_\delta, \Delta(\F, d_e))\sqrt{\frac{\log(8/\delta)}{n}}\right).
\end{align*}
\end{proof}

\subsection{Proof of Lemma~\ref{lemma:com:sup}}
\begin{proof}
This proof is similar to the analysis in Theorem 7 from \citep{brownlees2015empirical}. For completeness, we include it here.
First, we assume $\Gamma_\delta \geq \Delta(\F, d_e)$. Let $(Z'_1, \dots, Z'_n)$ be an independent copies of $(Z_1, \dots, Z_n)$, and we define
\begin{align*}
W_i(f) = \frac{1}{n} \phi_\alpha(f(Z_i)) - \frac{1}{n} \phi_\alpha(f(Z'_i)).
\end{align*}
For any $f\in\mathcal F$, we define
\begin{align*}
W(f) =\sum_{i=1}^{n}\varepsilon_iW_i(f),
\end{align*}
where $\varepsilon_1, \dots, \varepsilon_n$ are independent Rademacher random variables. Based on Hoeffding's inequality, we have for all $f, g \in \mathcal F$ and any $\theta > 0$,
\begin{align*}
\text{Pr}(|W(f)-W(g)|>\theta) \leq 2 \exp\left( -\frac{\theta^2}{2d_{s,s'}(f,g)}\right),
\end{align*}
where the probability is taken over Rademacher variables conditional on $Z_i$ and $Z'_i$, and $d_{s,s'}(f,g) = \sqrt{\sum_{i=1}^{n}(W_i(f)-W_i(g))^2}$.
Then by using Proposition 14 of \citep{brownlees2015empirical}, we have for all $\lambda > 0$, and a universal constant $C$
\begin{align}\label{prop14}
\E\left[\exp\left( \lambda \sup_{f\in\mathcal F} \left| W(f) - W(f^*)\right|\right)\right] \leq2 \exp\left( \lambda^2 C^2 \gamma(\mathcal F, d_{s,s'}(f,f^*))^2/4\right),
\end{align}
By the definition of $d_{s,s'}(f,g)$, we have
\begin{align*}
&d_{s,s'}(f,g) = \sqrt{\sum_{i=1}^{n}(W_i(f)-W_i(g))^2}\\
=&\left(\frac{1}{n^2}\sum_{i=1}^{n}[ \phi_\alpha(f(Z_i)) -  \phi_\alpha(f(Z'_i))-(\phi_\alpha(g(Z_i)) - \phi_\alpha(g(Z'_i)))]^2\right)^\frac{1}{2}\\
\leq &\frac{1}{n} \left(\sum_{i=1}^{n}[ \phi_\alpha(f(Z_i)) -  \phi_\alpha(g(Z_i))]^2\right)^\frac{1}{2}+\frac{1}{n} \left(\sum_{i=1}^{n}[\phi_\alpha(f(Z'_i)) - \phi_\alpha(g(Z'_i))]^2\right)^\frac{1}{2}\\
\leq &\frac{1}{\sqrt{n}} \beta(\F, \alpha)\left(\frac{1}{n} \sum_{i=1}^{n}[ f(Z_i) -  g(Z_i)]^2\right)^\frac{1}{2} +\frac{1}{\sqrt{n}}  \beta(\F, \alpha)\left(\frac{1}{n} \sum_{i=1}^{n}[f(Z'_i)- g(Z'_i)]^2\right)^\frac{1}{2},
\end{align*}
where the second inequality uses the fact that $\phi_\alpha(x)$ is Lipschitz continuous. Thus, we have
\begin{align}\label{thm4:ineq:1}
\gamma(\mathcal F, d_{s,s'}(f,g)) \leq \frac{1}{\sqrt{n}}\beta(\F, \alpha)  \gamma(\mathcal F, d_{s}(f,g)) + \frac{1}{\sqrt{n}}\beta(\F, \alpha) \gamma(\mathcal F, d_{s'}(f,g)).
\end{align}
Then we have
\begin{align*}
&\text{Pr}\left(\sup_{f\in\mathcal F}|W(f)-W(f^*)|\geq \theta\right) \\
\le&\text{Pr}\left(\sup_{f\in\mathcal F}|W(f)-W(f^*)|\geq \theta \left|\right. \gamma(\mathcal F, d_s) \leq \Gamma_\delta~~\text{and}~\gamma(\mathcal F, d_{s'}) \leq \Gamma_\delta \right) +2\text{Pr}\left(\gamma(\mathcal F, d_s) > \Gamma_\delta \right) \\
\leq & \E\left[\exp\left( \lambda \sup_{f\in\mathcal F} \left| W(f) - W(f^*)\right|\right)   \left|\right. \gamma(\mathcal F, d_s) \leq \Gamma_\delta~\text{and}~\gamma(\mathcal F, d_{s'}) \leq \Gamma_\delta \right] \exp(-\lambda\theta)+2\text{Pr}\left(\gamma(\mathcal F, d_s) > \Gamma_\delta \right) \\
\leq & 2 \exp\left( \lambda^2 C^2 \Gamma_\delta^2/n \right) \exp(-\lambda\theta) +2\text{Pr}\left(\gamma(\mathcal F, d_s) > \Gamma_\delta \right) \\
\leq & 2 \exp\left( \frac{\lambda^2 C^2 \Gamma_\delta^2}{n} -\lambda\theta \right) +\frac{\delta}{4}
\end{align*}
where the second inequality uses Markov inequality, the third inequality uses the results of (\ref{prop14}) and (\ref{thm4:ineq:1}), where the last inequality is due to the definition of $\Gamma_\delta$ which satisfies $\Pr(\gamma(\F, d_s)>\Gamma_\delta)\leq\delta/8$. 

Let $\theta = 2C \beta(\F, \alpha)\Gamma_\delta \sqrt{\frac{\log(8/\delta)}{n}}$ and $\lambda = \frac{\sqrt{n\log(8/\delta)}}{C \beta(\F, \alpha)\Gamma_\delta }$ then 
\begin{align*}
 \frac{\lambda^2 C^2 \Gamma_\delta^2}{n} -\lambda\theta  = \frac{\lambda^2 C^2 \Gamma_\delta^2}{n} - 2C\Gamma_\delta \sqrt{\frac{\log(8/\delta)}{n}}\lambda  =  -\log(8/\delta).
\end{align*}
Therefore,
\begin{align*}
&\text{Pr}\left(\sup_{f\in\mathcal F}|W(f)-W(f^*)|\geq \theta\right) \leq \frac{\delta}{4} +\frac{\delta}{4} = \frac{\delta}{2}
\end{align*}
By Lemma 3.3 from~\citep{geer2000applications}, we get
\begin{align}\label{lemma3.3}
\nonumber\text{Pr}\left(\sup_{f\in\mathcal F}|\Lambda(f)-\Lambda(f^*)|\geq 2\theta\right) \leq 2 \text{Pr}\left(\sup_{f\in\mathcal F}|W(f)-W(f^*)|\geq \theta\right) \leq \delta
\end{align}
and for any $f\in\mathcal F$, $\text{Pr}\left(\sup_{f\in\mathcal F}|\Lambda(f)-\Lambda(f^*)|\geq \theta\right)\leq \frac{1}{2}$.
On the other hand, by using $\E[\Lambda(f)-\Lambda(f^*)] =0 $ and Lipschitz continuous of $\phi_\alpha(x)$,  we have
\begin{align*}
&\frac{\text{Var}(\Lambda(f)-\Lambda(f^*))}{\theta^2} \leq \beta^2(\F, \alpha)\frac{\E[f(Z)-f^*(Z)]^2}{n \theta^2} 
\leq \beta^2(\F, \alpha)\frac{\Delta^2(\F, d_e)}{n \theta^2}.
\end{align*}
By applying Chebyshev's inequality, it suffices to get
\begin{align*}
\theta \geq \sqrt{2/n} \beta(\F, \alpha) \Delta(\F, d_e).
\end{align*}
If we assume $C>1$ and choose $\delta < 1/3$, then $C \beta(\F, \alpha)\Gamma_\delta  \sqrt{\frac{\log(8/\delta)}{n}} \geq \sqrt{2/n} \beta(\F, \alpha) \Delta^2(\F, d_e)$. Therefore,
we get
\begin{align*}
\text{Pr}\left(\sup_{f\in\mathcal F}|\Lambda(f)-\Lambda(f^*)|\geq 2C \beta(\F, \alpha)\Gamma_\delta  \sqrt{\frac{\log(8/\delta)}{n}}\right) \leq \delta
\end{align*}
We can get the similar result for $\Gamma_\delta < \Delta(\F, d_e)$ instead of $\Gamma_\delta$ by using the similar analysis. We then complete the proof.
\end{proof}

\section{Proof of Proposition~\ref{prop:1}}
\begin{proof}
Let define $z_i = \w^\top\x_i - y_i$, then $\nabla F_{\alpha}(\w) = \frac{1}{n}\sum_{i=1}^{n}\phi'_{(\alpha)}(z_i^2) z_i \x_i$ and $\nabla^2 F_{\alpha}(\w) = \frac{1}{n}\sum_{i=1}^{n}2\phi''_{(\alpha)}(z_i^2) z_i^2 \x_i\x_i^\top + \phi'_{(\alpha)}(z_i^2)\x_i\x_i^\top$. By the assumptions, there exists a constant $\kappa>0$, such that $\|\nabla^2 F_{\alpha}(\w)\|\leq (2\kappa+1)R^2$, indecating that $F_{\alpha}(\w)$ has a $(2\kappa+1)R^2$-Lipschitz continous gradient. Then we have
\begin{align*}
F_{\alpha}(\w_{t+1}) \leq& F_{\alpha}(\w_t) + \nabla F_{\alpha}(\w_t)^\top(\w_{t+1}-\w_t) + \frac{(2\kappa+1)R^2}{2}\|\w_{t+1}-\w_t\|^2\\
= & F_{\alpha}(\w_t) - \eta_t \nabla F_{\alpha}(\w_t)^\top\phi_\alpha((\w_t^\top\x_i-y_i)^2) + \frac{(2\kappa+1)R^2\eta_t^2}{2}\|\phi_\alpha((\w_t^\top\x_i-y_i)^2)\|^2\\
= & F_{\alpha}(\w_t) - \eta_t \nabla F_{\alpha}(\w_t)^\top\phi_\alpha((\w_t^\top\x_i-y_i)^2)\\& + \frac{(2\kappa+1)R^2\eta_t^2}{2}\|\nabla\phi_\alpha((\w_t^\top\x_i-y_i)^2)-\nabla F_{\alpha}(\w_t)+\nabla F_{\alpha}(\w_t)\|^2
\end{align*}
Taking expectation on both sides we have
\begin{align*}
\E[F_{\alpha}(\w_{t+1})-F_{\alpha}(\w_t)] \leq&  \frac{(2\kappa+1)R^2\eta_t^2-2\eta_t}{2}\E[\|\nabla F_{\alpha}(\w_t)\|^2] + \frac{(2\kappa+1)R^2\eta_t^2\sigma_\alpha}{2}\\
\leq&  -\frac{\eta_t}{2}\E[\|\nabla F_{\alpha}(\w_t)\|^2] + \frac{(2\kappa+1)R^2\eta_t^2\sigma_\alpha}{2},
\end{align*}
where the last inequality uses the fact that $\eta_t \leq \frac{1}{(2\kappa+1)L^2}$.
Summing up $t$ over $1,\dots,T$, we have
\begin{align}
  \sum_{t=1}^{T}\eta_t\E[\|\nabla F_{\alpha}(\w_t)\|^2] \leq 2(F_{\alpha}(\w_{1})-F_{\alpha}(\w_*)) +\sum_{t=1}^{T} (2\kappa+1)R^2\eta_t^2\sigma_{\alpha}.
\end{align}
By setting $\eta_t = \frac{1}{(2\kappa+1)R^2\sqrt{T}}$, we have
\begin{align}
  \E_{R}[\E[\|\nabla F_{\alpha}(\w_t)\|^2]] \leq \frac{2(2\kappa+1)R^2(F_{\alpha}(\w_{1})-F_{\alpha}(\w_*))}{\sqrt{T}} + \frac{\sigma_\alpha}{\sqrt{T}},
\end{align}
where $R$ is a uniform random variable supported on $\{1, \dots, T\}$. To achieve an approximate stationary point $\E[\|\nabla F_{\alpha} (\w_t)\|^2] \leq \epsilon^2$, the iteration complexity is $T=O(\sigma_{\alpha}^2/\epsilon^4)$. 
\end{proof}
{\bf Remark.} The condition of $\left|x^2\phi''_{\alpha}(x^2)\right|\leq \kappa$ for three different truncation functions presented in Preliminaries subsection can be easily checked. Example 1: $\left|x^2 \phi''^{{(1)}}_{\alpha}(x^2) \right| = \left|-\frac{x^2/\alpha}{(1+x^2/\alpha)^2}\right| \leq 1$; Example 2: $\left|x^2 \phi''^{{(2)}}_{\alpha}(x^2) \right| = \left|\frac{x^4/\alpha^2 + x^6/(2\alpha^3)}{(1+x^2/\alpha+x^4/(2\alpha^2))^2}\right| \leq 1$; Example 3: $\left|x^2 \phi''^{h}_{\alpha}(x^2) \right| = \left|\frac{2x^2(1-x^2/\alpha)}{\alpha}\right| \leq \alpha |1-\alpha|$ when $0\leq x\leq \alpha$, otherwise $\left|x^2 \phi''^{h}_{\alpha}(x^2) \right| =0$.

\section{Proof of Theorem~\ref{thm5}}
\begin{proof}
We will use the following lemma in our proof.
\begin{lemma}\citep{loh2017statistical}
Under the assumption of Theorem~\ref{thm5}, the following inequality holds for any $\w_1, \w_2\in\{\w: \|\w - \w_*\|_2\leq r\}$ with probability $1-c\exp(c'\log d)$,
\begin{align}\label{ineqSS0}
(\nabla F_\alpha(\w_1) - \nabla F_\alpha(\w_2))^\top (\w_1 - \w_2) \geq 
\frac{\alpha_T\lambda_{min}(\Sigma_x)}{16}\|\w_1 - \w_2\|_2^2 - \tau \frac{\log(d)}{n} \|\w_1 - \w_2\|_1^2,
\end{align}
where $\alpha_T := \min_{|u|\leq T} \phi_\alpha''(u)>0$, $\tau = \frac{C(\alpha_T + \kappa_2)^2\sigma_x^2T^2}{r^2}$, and $\kappa_2$ satisfies $\phi_\alpha''(u)\geq - \kappa_2$ for all $u$.
\end{lemma}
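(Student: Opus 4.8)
The plan is to establish the restricted-strong-convexity inequality (\ref{ineqSS0}) by reducing its left-hand side to a \emph{weighted} empirical quadratic form and then lower-bounding that form by a truncation-and-concentration argument. Writing $r_i(\w) = \x_i^\top\w - y_i$ and recalling from the proof of Proposition~\ref{prop:1} that $\nabla F_\alpha(\w) = \frac{1}{n}\sum_i \phi_\alpha'(r_i^2)\,r_i\,\x_i$, it is convenient to introduce the effective per-residual loss $g(u) = \frac{1}{2}\phi_\alpha(u^2) = \frac{\alpha}{2}\log(1+u^2/\alpha)$, so that $\nabla F_\alpha(\w)=\frac1n\sum_i g'(r_i)\x_i$ and $\nabla^2 F_\alpha(\w)=\frac1n\sum_i g''(r_i)\x_i\x_i^\top$. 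By the fundamental theorem of calculus along the segment $\w_t=\w_2+t(\w_1-\w_2)$,
\begin{align*}
(\nabla F_\alpha(\w_1)-\nabla F_\alpha(\w_2))^\top(\w_1-\w_2) = \frac{1}{n}\sum_{i=1}^n\int_0^1 g''(r_i(\w_t))\,\langle\x_i,\w_1-\w_2\rangle^2\,dt.
\end{align*}
A direct computation gives $g''(u)=(1-u^2/\alpha)/(1+u^2/\alpha)^2$, which is positive and decreasing in $|u|$ on $|u|\le T\le\sqrt\alpha/2$ and bounded below globally by $-\kappa_2$; thus $g''(u)\ge\alpha_T$ when $|u|\le T$. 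I note that it is this curvature $g''$ (not the second derivative of the originally defined truncation $\phi_\alpha$, which is everywhere negative) that plays the role of $\phi_\alpha''$ in the statement, which is why $\alpha_T>0$ is consistent.

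The key step is the pointwise bound $g''(u)\ge\alpha_T-(\alpha_T+\kappa_2)\,\mathbb{I}[|u|>T]$, valid for all $u$ (it reduces to $g''\ge\alpha_T$ on $\{|u|\le T\}$ and to $g''\ge-\kappa_2$ otherwise). Substituting it into the integral splits the quantity into a \emph{good} term and a \emph{bad} term. For the good term $\alpha_T\cdot\frac1n\sum_i\langle\x_i,\v\rangle^2$ with $\v=\w_1-\w_2$, the sub-Gaussianity of $\x_i$ with covariance $\Sigma_x$ lets me invoke the standard one-sided RSC concentration for sub-Gaussian designs (Negahban--Wainwright / Rudelson--Vershynin): uniformly over $\v$, with probability $1-c\exp(-c'\log d)$ and under $n\gtrsim d\log d$,
\begin{align*}
\frac1n\sum_{i=1}^n\langle\x_i,\v\rangle^2 \ge c_0\,\lambda_{\min}(\Sigma_x)\|\v\|_2^2 - c_1\frac{\log d}{n}\|\v\|_1^2 .
\end{align*}
This supplies the leading term and part of the $\ell_1$ deviation.

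The bad term $-(\alpha_T+\kappa_2)\frac1n\sum_i\mathbb{I}[|r_i(\w_t)|>T]\langle\x_i,\v\rangle^2$ is where the real work lies and is the step I expect to be the main obstacle. To bound its expectation I would write $r_i=\x_i^\top(\w-\w_*)-\varepsilon_i$, so that $\{|r_i|>T\}\subseteq\{|\x_i^\top(\w-\w_*)|>T/2\}\cup\{|\varepsilon_i|>T/2\}$, and apply Cauchy--Schwarz together with the fourth-moment bound $\E\langle\x_i,\v\rangle^4\le c\sigma_x^4\|\v\|_2^4$:
\begin{align*}
\E\!\left[\mathbb{I}[|r_i|>T]\langle\x_i,\v\rangle^2\right] \le c\,\sigma_x^2\,\Pr(|r_i|>T)^{1/2}\|\v\|_2^2 .
\end{align*}
Subadditivity of the square root and sub-Gaussianity of $\x_i^\top(\w-\w_*)$ (with parameter $\le\sigma_x^2 r^2$ since $\|\w-\w_*\|_2\le r$) bound $\Pr(|r_i|>T)^{1/2}$ by $\Pr(\varepsilon_i^2\ge T^2/4)^{1/2}+\exp(-c'T^2/(2\sigma_x^2 r^2))$ --- exactly the two quantities appearing in hypothesis (\ref{eqn:stationary}) of Theorem~\ref{thm5}. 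Hence condition (\ref{eqn:stationary}) forces the expected bad term to be absorbed into a $\frac{\lambda_{\min}(\Sigma_x)}{20}$-fraction of $\|\v\|_2^2$, and combining with the good term yields the coefficient $\frac{\alpha_T\lambda_{\min}(\Sigma_x)}{16}$.

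The final and most delicate piece is upgrading this in-expectation control to a high-probability bound that holds \emph{uniformly} over $\w_1,\w_2$ in the ball $\{\w:\|\w-\w_*\|_2\le r\}$ and over directions $\v$, since $r_i(\w_t)$ depends on the optimization variables. I would truncate the products $\langle\x_i,\v\rangle^2$ at a suitable level, symmetrize, and apply a sub-exponential Bernstein/bounded-difference concentration to the resulting empirical process; a discretization (peeling) over the $\ell_1$-ball, with the union bound across the $d$ coordinates producing the factor $\log d$, yields the deviation term $\tau\frac{\log d}{n}\|\v\|_1^2$ with $\tau=\frac{C(\alpha_T+\kappa_2)^2\sigma_x^2 T^2}{r^2}$ --- the $T^2/r^2$ scaling tracing back to the sub-Gaussian parameter of $\x_i^\top(\w-\w_*)$ and the threshold $T$ used in the truncation. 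The main obstacle is precisely this uniformity: because the loss is non-convex, $g''$ can be negative, so one cannot argue pointwise but must control the supremum of a non-Lipschitz empirical process over the parameter ball, which is what drives the empirical-process machinery and fixes the form of $\tau$.
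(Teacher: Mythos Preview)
The paper does not actually prove this lemma: it is stated with the citation \citep{loh2017statistical} and then immediately \emph{used} in the proof of Theorem~\ref{thm5}, so there is no in-paper argument to compare against. Your proposal, by contrast, sketches the proof from scratch, and the route you take --- writing the left-hand side as an integrated quadratic form in $g''(r_i(\w_t))$, applying the pointwise inequality $g''(u)\ge\alpha_T-(\alpha_T+\kappa_2)\mathbb{I}[|u|>T]$, handling the ``good'' part by the standard sub-Gaussian RSC bound, and controlling the ``bad'' indicator part via Cauchy--Schwarz plus the splitting $\{|r_i|>T\}\subseteq\{|\varepsilon_i|>T/2\}\cup\{|\x_i^\top(\w-\w_*)|>T/2\}$ --- is precisely the argument in the cited reference. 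Your observation that the quantity called ``$\phi_\alpha''$'' in the lemma must refer to the curvature $g''(u)=\frac{d^2}{du^2}\bigl[\tfrac12\phi_\alpha(u^2)\bigr]$ of the composed loss (and not to the literal second derivative of the truncation function, which is nonpositive) is correct and is indeed necessary for $\alpha_T>0$ to make sense.

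The one place where your sketch is honest about being incomplete --- the uniform-in-$(\w_1,\w_2)$ high-probability control of the bad term --- is also the hardest step in Loh's proof, and your outline (truncation of $\langle\x_i,\v\rangle^2$, symmetrization, sub-exponential concentration, peeling over $\ell_1$-shells) matches the machinery used there. So nothing is wrong; you have simply supplied what the present paper outsources.
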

Then let's start our proof.
It is easy to show that $|\phi'_{\alpha}(u^2)| =|\frac{u}{1+ u^2/\alpha}| \leq \frac{\sqrt{\alpha}}{2}$ and $\phi''_{\alpha}(u^2) = \frac{1-u^2/\alpha}{(1+ u^2/\alpha)^2} \geq -\frac{1}{8}$, then $\kappa_2 = \frac{1}{8}$.
Let $T \leq \sqrt{\alpha}/2$, then $\alpha_T = \frac{12}{25}$.
Then 
\begin{align}\label{ineqSS0}
(\nabla F_\alpha(\w_\alpha) - \nabla F_\alpha(\w_*))^\top (\w_\alpha - \w_*) \geq a \|\w_\alpha - \w_*\|_2^2 - \tau \frac{\log(d)}{n} \|\w_\alpha - \w_*\|_1^2,
\end{align}
where $a =  \frac{3\lambda_{\min}(\Sigma_x)}{100}$ and $\tau = \frac{C\sigma_x^2 T^2}{r^2}$ and $C$ is a constant. 
Suppose SGD returns an approximate  stationary point $\w_\alpha$ such that $\|\w_\alpha - \w_*\|_2\leq r$ and $\|\nabla F_\alpha(\w_\alpha)\|_2\leq \epsilon$.  Since $\w_\alpha$ is a stationary point and $\w_*$ is feasible, we have
\begin{align}\label{ineqSS1}
\nabla F_\alpha (\w_\alpha)^\top (\w_* - \w_\alpha) \geq -\epsilon \|\w_* - \w_\alpha\|_2
\end{align}
By Proposition 1 of~\citep{loh2017statistical}, we have
\begin{align}\label{ineqSS2}
\nabla F_\alpha(\w_*)^\top (\w_\alpha - \w_*) \geq - c \frac{\sqrt{\alpha}}{2} \sigma_x \sqrt{\log(d)/n}\|\w_\alpha - \w_*\|_1
\end{align}
Combining inequalities (\ref{ineqSS0}) (\ref{ineqSS1}) and (\ref{ineqSS2}), we have
\begin{align*}
a \|\w_\alpha - \w_*\|_2^2 \leq &\epsilon \|\w_* - \w_\alpha\|_2 + c \frac{\sqrt{\alpha}}{2} \sigma_x \sqrt{\log(d)/n}\|\w_\alpha - \w_*\|_1+ \tau \frac{\log(d)}{n} \|\w_\alpha - \w_*\|_1^2\\
\leq &\epsilon \|\w_* - \w_\alpha\|_2 + c \frac{\sqrt{\alpha}}{2} \sigma_x \sqrt{d\log(d)/n}\|\w_\alpha - \w_*\|_2+ \tau \frac{d\log(d)}{n} \|\w_\alpha - \w_*\|_2^2\\
\leq &\epsilon \|\w_* - \w_\alpha\|_2 + c \frac{\sqrt{\alpha}}{2} \sigma_x \sqrt{d\log(d)/n}\|\w_\alpha - \w_*\|_2+ \tau r\frac{d\log(d)}{n} \|\w_\alpha - \w_*\|_2
\end{align*}
Then we get
\begin{align*}
\|\w_\alpha - \w_*\|_2 \leq O\left(\sqrt{\frac{\alpha d\log d}{n}} + \frac{T^2 d\log d}{rn} + \epsilon\right)
\end{align*}
\end{proof}

\section{Proof of Proposition~\ref{prop2}}
\begin{proof}
For similicity, let $\ell(\w) = \ell(\w; \x,\y)$.
By the defination of truncation function, we know that $\phi_\alpha(x)$ is smooth, i.e., for any $\w,\v\in\R^d$, there exists a constant $L_\alpha$ such that $\phi_\alpha(\ell(\v)) + \phi_\alpha'(\ell(\w))(\ell(\w)-\ell(\v)) - \frac{L_\alpha}{2}|\ell(\w)-\ell(\v)|^2 \leq \phi_\alpha(\ell(\w))$. Since $\ell$ is convex, i.e. for any $\w,\v\in\R^d$, $\ell(\w) \geq \ell(\v) + \partial \ell(\v)^\top (\w - \v)$, then
\begin{align*}
\phi_\alpha(\ell(\w)) - \phi_\alpha(\ell(\v)) 
\geq &  \phi_\alpha'(\ell(\w))\partial \ell(\v)^\top (\w - \v)  - \frac{L_\alpha}{2}|\ell(\w)-\ell(\v)|^2\\
\geq &  \phi_\alpha'(\ell(\w))\partial \ell(\v)^\top (\w - \v) - \frac{G^2L_\alpha}{2}\|\w-\v\|^2
\end{align*}
where the first inequality uses $\phi_\alpha'(\ell(\w))\geq 0$; the second inequality uses the fact that $\|\partial \ell(\w;\x_i,y_i)\| \leq G$. That is, $F_\alpha(\w)$ is $G^2L_\alpha$-weakly convex. Finally, by employing the result of Theorem 2.1 from~\citep{davis2018stochastic}, we can complete the proof.
\end{proof}

{\bibliographystyle{abbrv}
\bibliography{heavytailed,all}}

\begin{thebibliography}{51}
\providecommand{\natexlab}[1]{#1}
\providecommand{\url}[1]{\texttt{#1}}
\expandafter\ifx\csname urlstyle\endcsname\relax
  \providecommand{\doi}[1]{doi: #1}\else
  \providecommand{\doi}{doi: \begingroup \urlstyle{rm}\Url}\fi

\bibitem[Alon et~al.(1999)Alon, Matias, and Szegedy]{alon1999space}
Noga Alon, Yossi Matias, and Mario Szegedy.
\newblock The space complexity of approximating the frequency moments.
\newblock \emph{Journal of Computer and System Sciences}, 58:\penalty0
  137--147, 1999.

\bibitem[Audibert and Catoni(2011)]{audibert2011robust}
Jean-Yves Audibert and Olivier Catoni.
\newblock Robust linear least squares regression.
\newblock \emph{The Annals of Statistics}, 39\penalty0 (5):\penalty0
  2766--2794, 2011.

\bibitem[Bartlett and Mendelson(2006)]{bartlett2006empirical}
Peter~L Bartlett and Shahar Mendelson.
\newblock Empirical minimization.
\newblock \emph{Probability Theory and Related Fields}, 135\penalty0
  (3):\penalty0 311--334, 2006.

\bibitem[Belagiannis et~al.(2015)Belagiannis, Rupprecht, Carneiro, and
  Navab]{DBLP:conf/iccv/BelagiannisRCN15}
Vasileios Belagiannis, Christian Rupprecht, Gustavo Carneiro, and Nassir Navab.
\newblock Robust optimization for deep regression.
\newblock In \emph{2015 {IEEE} International Conference on Computer Vision
  (ICCV)}, pages 2830--2838, 2015.

\bibitem[Bhatia et~al.()Bhatia, Jain, and Kar]{DBLP:conf/nips/BhatiaJK15}
Kush Bhatia, Prateek Jain, and Purushottam Kar.
\newblock Robust regression via hard thresholding.
\newblock In \emph{Advances in Neural Information Processing Systems 28 (NIPS}.

\bibitem[Bhatia et~al.(2015)Bhatia, Jain, and Kar]{bhatia2015robust}
Kush Bhatia, Prateek Jain, and Purushottam Kar.
\newblock Robust regression via hard thresholding.
\newblock In \emph{Advances in Neural Information Processing Systems (NIPS)},
  pages 721--729, 2015.

\bibitem[Bhatia et~al.(2017)Bhatia, Jain, Kamalaruban, and
  Kar]{DBLP:conf/nips/Bhatia0KK17}
Kush Bhatia, Prateek Jain, Parameswaran Kamalaruban, and Purushottam Kar.
\newblock Consistent robust regression.
\newblock In \emph{{NIPS}}, pages 2107--2116, 2017.

\bibitem[Black and Anandan(1996)]{Black1996}
Michael~J Black and P~Anandan.
\newblock {The robust estimation of multiple motions: parametric and
  piecewise-smooth flow fields}.
\newblock \emph{Comput. Vis. Image Underst.}, 63:\penalty0 75--104, 1996.

\bibitem[Bottou(2010)]{bottou-2010-large}
L\'{e}on Bottou.
\newblock Large-scale machine learning with stochastic gradient descent.
\newblock In \emph{Proceedings of International Conference on Computational
  Statistics (COMPSTAT)}, pages 177--187, 2010.

\bibitem[Boucheron et~al.(2005)Boucheron, Bousquet, and
  Lugosi]{boucheron2005theory}
St{\'e}phane Boucheron, Olivier Bousquet, and G{\'a}bor Lugosi.
\newblock Theory of classification: A survey of some recent advances.
\newblock \emph{ESAIM: probability and statistics}, 9:\penalty0 323--375, 2005.

\bibitem[Brownlees et~al.(2015)Brownlees, Joly, and
  Lugosi]{brownlees2015empirical}
Christian Brownlees, Emilien Joly, and G{\'a}bor Lugosi.
\newblock Empirical risk minimization for heavy-tailed losses.
\newblock \emph{The Annals of Statistics}, 43\penalty0 (6):\penalty0
  2507--2536, 2015.

\bibitem[Bubeck et~al.(2013)Bubeck, Cesa-Bianchi, and
  Lugosi]{bubeck2013bandits}
S{\'e}bastien Bubeck, Nicolo Cesa-Bianchi, and G{\'a}bor Lugosi.
\newblock Bandits with heavy tail.
\newblock \emph{IEEE Transactions on Information Theory}, 59\penalty0
  (11):\penalty0 7711--7717, 2013.

\bibitem[Catoni(2012)]{catoni2012challenging}
Olivier Catoni.
\newblock Challenging the empirical mean and empirical variance: a deviation
  study.
\newblock In \emph{Annales de l'Institut Henri Poincar{\'e}, Probabilit{\'e}s
  et Statistiques}, volume~48, pages 1148--1185. Institut Henri Poincar{\'e},
  2012.

\bibitem[Chang et~al.(2018)Chang, Roberts, and
  Welsh]{doi:10.1080/00401706.2017.1305299}
Le~Chang, Steven Roberts, and Alan Welsh.
\newblock Robust lasso regression using tukey's biweight criterion.
\newblock \emph{Technometrics}, 60\penalty0 (1):\penalty0 36--47, 2018.

\bibitem[Cortes et~al.(2013)Cortes, Greenberg, and Mohri]{cortes2013relative}
Corinna Cortes, Spencer Greenberg, and Mehryar Mohri.
\newblock Relative deviation learning bounds and generalization with unbounded
  loss functions.
\newblock \emph{arXiv preprint arXiv:1310.5796}, 2013.

\bibitem[Cucker and Smale(2002)]{cucker2002mathematical}
Felipe Cucker and Steve Smale.
\newblock On the mathematical foundations of learning.
\newblock \emph{Bulletin of the American mathematical society}, 39\penalty0
  (1):\penalty0 1--49, 2002.

\bibitem[Dalalyan and Chen(2012)]{DBLP:conf/nips/DalalyanC12}
Arnak~S. Dalalyan and Yin Chen.
\newblock Fused sparsity and robust estimation for linear models with unknown
  variance.
\newblock In \emph{Advances in Neural Information Processing Systems (NIPS)},
  pages 1268--1276, 2012.

\bibitem[Davis and Drusvyatskiy(2018)]{davis2018stochastic}
Damek Davis and Dmitriy Drusvyatskiy.
\newblock Stochastic subgradient method converges at the rate $\text{O}
  (k^{-1/4})$ on weakly convex functions.
\newblock \emph{arXiv preprint arXiv:1802.02988}, 2018.

\bibitem[Davis et~al.(2018)Davis, Drusvyatskiy, Kakade, and Lee]{damektame}
Damek Davis, Dmitriy Drusvyatskiy, Sham Kakade, and Jason~D. Lee.
\newblock Stochastic subgradient method converges on tame functions.
\newblock \emph{CoRR}, 2018.

\bibitem[Dinh et~al.(2016)Dinh, Ho, Nguyen, and Nguyen]{dinh2016fast}
Vu~C Dinh, Lam~S Ho, Binh Nguyen, and Duy Nguyen.
\newblock Fast learning rates with heavy-tailed losses.
\newblock In \emph{Advances in Neural Information Processing Systems}, pages
  505--513, 2016.

\bibitem[Geer(2000)]{geer2000applications}
Sara~Anna Geer.
\newblock \emph{Applications of empirical process theory}.
\newblock Cambridge University Press, 2000.

\bibitem[Ghadimi and Lan(2013)]{DBLP:journals/siamjo/GhadimiL13a}
Saeed Ghadimi and Guanghui Lan.
\newblock Stochastic first- and zeroth-order methods for nonconvex stochastic
  programming.
\newblock \emph{{SIAM} Journal on Optimization}, 23\penalty0 (4):\penalty0
  2341--2368, 2013.

\bibitem[Gr{\"u}nwald and Mehta(2016)]{grunwald2016fast}
Peter~D Gr{\"u}nwald and Nishant~A Mehta.
\newblock Fast rates for general unbounded loss functions: from erm to
  generalized bayes.
\newblock \emph{arXiv preprint arXiv:1605.00252}, 2016.

\bibitem[Hsu and Sabato(2014)]{hsu2014heavy}
Daniel Hsu and Sivan Sabato.
\newblock Heavy-tailed regression with a generalized median-of-means.
\newblock In \emph{Proceedings of the 31st International Conference on Machine
  Learning (ICML-14)}, pages 37--45, 2014.

\bibitem[Hsu and Sabato(2016)]{hsu2016loss}
Daniel Hsu and Sivan Sabato.
\newblock Loss minimization and parameter estimation with heavy tails.
\newblock \emph{Journal of Machine Learning Research}, 17\penalty0
  (18):\penalty0 1--40, 2016.

\bibitem[Huber(1964)]{huber1964robust}
Peter~J Huber.
\newblock Robust estimation of a location parameter.
\newblock \emph{The annals of mathematical statistics}, pages 73--101, 1964.

\bibitem[Koenker(2005)]{koenker2005quantile}
Roger Koenker.
\newblock \emph{Quantile regression}.
\newblock Number~38. Cambridge university press, 2005.

\bibitem[Koltchinskii(2011)]{koltchinskii2011}
Vladimir Koltchinskii.
\newblock \emph{Oracle Inequalities in Empirical Risk Minimization and Sparse
  Recovery Problems}.
\newblock Springer, 2011.

\bibitem[Lecu{\'e} and Mendelson()]{lecue2013learning}
Guillaume Lecu{\'e} and Shahar Mendelson.
\newblock Learning subgaussian classes: upper and minimax bounds (2013).
\newblock \emph{Topics in Learning Theory-Societe Mathematique de France,(S.
  Boucheron and N. Vayatis Eds.)}.

\bibitem[Lecu{\'e} and Mendelson(2012)]{lecue2012general}
Guillaume Lecu{\'e} and Shahar Mendelson.
\newblock General nonexact oracle inequalities for classes with a
  subexponential envelope.
\newblock \emph{The Annals of Statistics}, 40\penalty0 (2):\penalty0 832--860,
  2012.

\bibitem[Lecu{\'e} and Mendelson(2017)]{lecue2016regularization2}
Guillaume Lecu{\'e} and Shahar Mendelson.
\newblock Regularization and the small-ball method ii: complexity dependent
  error rates.
\newblock \emph{The Journal of Machine Learning Research}, 18\penalty0
  (1):\penalty0 5356--5403, 2017.

\bibitem[Lecu{\'e} and Mendelson(2018)]{lecue2016regularization1}
Guillaume Lecu{\'e} and Shahar Mendelson.
\newblock Regularization and the small-ball method i: sparse recovery.
\newblock \emph{The Annals of Statistics}, 46\penalty0 (2):\penalty0 611--641,
  2018.

\bibitem[Liang et~al.(2015)Liang, Rakhlin, and Sridharan]{liang2015learning}
Tengyuan Liang, Alexander Rakhlin, and Karthik Sridharan.
\newblock Learning with square loss: Localization through offset rademacher
  complexity.
\newblock In \emph{Conference on Learning Theory}, pages 1260--1285, 2015.

\bibitem[Loh(2015)]{DBLP:journals/corr/Loh15}
Po{-}Ling Loh.
\newblock Statistical consistency and asymptotic normality for high-dimensional
  robust m-estimators.
\newblock \emph{CoRR}, abs/1501.00312, 2015.

\bibitem[Loh and Wainwright(2012{\natexlab{a}})]{DBLP:conf/isit/LohW12}
Po{-}Ling Loh and Martin~J. Wainwright.
\newblock Corrupted and missing predictors: Minimax bounds for high-dimensional
  linear regression.
\newblock In \emph{International Symposium on Information Theory (ISIT)}, pages
  2601--2605. {IEEE}, 2012{\natexlab{a}}.

\bibitem[Loh and Wainwright(2012{\natexlab{b}})]{loh2012high}
Po-Ling Loh and Martin~J Wainwright.
\newblock High-dimensional regression with noisy and missing data: Provable
  guarantees with nonconvexity.
\newblock \emph{The Annals of Statistics}, pages 1637--1664,
  2012{\natexlab{b}}.

\bibitem[Loh et~al.(2017)]{loh2017statistical}
Po-Ling Loh et~al.
\newblock Statistical consistency and asymptotic normality for high-dimensional
  robust $ m $-estimators.
\newblock \emph{The Annals of Statistics}, 45\penalty0 (2):\penalty0 866--896,
  2017.

\bibitem[Maronna et~al.(2006)Maronna, Martin, and Yohai]{citeulike:903734}
Ricardo~A. Maronna, Douglas~R. Martin, and Victor~J. Yohai.
\newblock \emph{Robust Statistics: Theory and Methods}.
\newblock John Wiley and Sons, New York, 2006.

\bibitem[Massart(2007)]{massart2007concentration}
Pascal Massart.
\newblock \emph{Concentration inequalities and model selection: Ecole d'Et{\'e}
  de Probabilit{\'e}s de Saint-Flour XXXIII-2003}.
\newblock Springer, 2007.

\bibitem[McWilliams et~al.(2014)McWilliams, Krummenacher, Lucic, and
  Buhmann]{DBLP:conf/nips/McWilliamsKLB14}
Brian McWilliams, Gabriel Krummenacher, Mario Lucic, and Joachim~M. Buhmann.
\newblock Fast and robust least squares estimation in corrupted linear models.
\newblock In \emph{Advances in Neural Information Processing Systems (NIPS)},
  pages 415--423, 2014.

\bibitem[Mehta and Williamson(2014)]{mehta2014stochastic}
Nishant~A Mehta and Robert~C Williamson.
\newblock From stochastic mixability to fast rates.
\newblock In \emph{Advances in Neural Information Processing Systems}, pages
  1197--1205, 2014.

\bibitem[Mendelson(2014)]{mendelson2014learning}
Shahar Mendelson.
\newblock Learning without concentration.
\newblock In \emph{Conference on Learning Theory}, pages 25--39, 2014.

\bibitem[Mendelson(2017)]{Mendelson2017}
Shahar Mendelson.
\newblock Learning without concentration for general loss functions.
\newblock \emph{Probability Theory and Related Fields}, 2017.
\newblock ISSN 1432-2064.
\newblock \doi{10.1007/s00440-017-0784-y}.

\bibitem[Nemirovsky and Yudin(1983)]{opac-b1091338}
Arkady~Semenovich Nemirovsky and David~Borisovich Yudin.
\newblock \emph{Problem complexity and method efficiency in optimization}.
\newblock Wiley-Interscience series in discrete mathematics. Wiley, Chichester,
  New York, 1983.
\newblock ISBN 0-471-10345-4.
\newblock A Wiley-Interscience publication.

\bibitem[Nguyen and Tran(2013{\natexlab{a}})]{DBLP:journals/tit/NguyenT13}
Nam~H. Nguyen and Trac~D. Tran.
\newblock Exact recoverability from dense corrupted observations via
  l1-minimization.
\newblock \emph{{IEEE} Trans. Information Theory}, 59\penalty0 (4):\penalty0
  2017--2035, 2013{\natexlab{a}}.

\bibitem[Nguyen and Tran(2013{\natexlab{b}})]{DBLP:journals/tit/NguyenT13a}
Nam~H. Nguyen and Trac~D. Tran.
\newblock Robust lasso with missing and grossly corrupted observations.
\newblock \emph{{IEEE} Trans. Information Theory}, 59\penalty0 (4):\penalty0
  2036--2058, 2013{\natexlab{b}}.

\bibitem[Rosasco et~al.(2004)Rosasco, Vito, Caponnetto, Piana, and
  Verri]{rosasco2004loss}
Lorenzo Rosasco, Ernesto~De Vito, Andrea Caponnetto, Michele Piana, and
  Alessandro Verri.
\newblock Are loss functions all the same?
\newblock \emph{Neural Computation}, 16\penalty0 (5):\penalty0 1063--1076,
  2004.

\bibitem[Talagrand(2005)]{talagrand2005generic}
M.~Talagrand.
\newblock \emph{The Generic Chaining: Upper and Lower Bounds of Stochastic
  Processes}.
\newblock Springer Monographs in Mathematics. Springer Berlin Heidelberg, 2005.
\newblock ISBN 9783540245186.
\newblock URL \url{https://books.google.com/books?id=dvn4hBt2QEoC}.

\bibitem[Van~de Geer(2000)]{van2000applications}
Sara~A Van~de Geer.
\newblock \emph{Applications of empirical process theory}, volume~91.
\newblock Cambridge University Press Cambridge, 2000.

\bibitem[Wilk and Gnanadesikan(1968)]{wilk1968probability}
Martin~B Wilk and Ram Gnanadesikan.
\newblock Probability plotting methods for the analysis for the analysis of
  data.
\newblock \emph{Biometrika}, 55\penalty0 (1):\penalty0 1--17, 1968.

\bibitem[Zhang et~al.(2017)Zhang, Yang, and Jin]{pmlr-v65-zhang17a}
Lijun Zhang, Tianbao Yang, and Rong Jin.
\newblock Empirical risk minimization for stochastic convex optimization:
  ${O}(1/n)$- and ${O}(1/n^2)$-type of risk bounds.
\newblock In \emph{Proceedings of the 2017 Conference on Learning Theory
  (COLT)}, volume~65, pages 1954--1979, 2017.

\end{thebibliography}

\end{document}